\def\eqref#1{equation~\ref{#1}}
\def\1{\bm{1}}
\DeclareMathAlphabet{\mathsfit}{\encodingdefault}{\sfdefault}{m}{sl}
\SetMathAlphabet{\mathsfit}{bold}{\encodingdefault}{\sfdefault}{bx}{n}
\newcommand{\R}{\mathbb{R}}
\newenvironment{customthm}[1][]
{%
    \begin{thm}[\normalfont\textbf{#1}] 
}
{%
    \end{thm}
}
\newenvironment{customdef}[1][]
{%
    \begin{defn}[\normalfont\textbf{#1}] 
}
{%
    \end{defn}
}
\theoremstyle{definition}
\newtheorem{example}{Example}
\newtheorem{rmk}{Remark}
\newtheorem{nt}{Note}
\newtheorem{thm}{Theorem}
\newtheorem{proposition}{Proposition}
\newtheorem{cor}{Corollary}
\newtheorem{defn}{Definition}
\newtheorem{assume}{Assumption}
\Crefname{equation}{Equation}{Equations}
\Crefname{figure}{Figure}{Figures}
\Crefname{rmk}{Remark}{Remarks}
\Crefname{section}{Section}{Sections}
\Crefname{appendix}{Appendix}{Appendices}
\Crefname{defn}{Definition}{Definitions}
\Crefname{lemma}{Lemma}{Lemmas}
\Crefname{thm}{Theorem}{Theorems}
\Crefname{proposition}{Proposition}{Proposition}
\Crefname{cor}{Corollary}{Corollaries}
\Crefname{example}{Example}{Examples}
\Crefname{customdef}{Definition}{Definitions}
\Crefname{customthm}{Theorem}{Theorems}
\Crefname{table}{Table}{Tables}
\Crefname{section}{Section}{Sections}
\Crefname{assumption}{Assumption}{Assumptions}
\title{On Uncertainty Calibration for Equivariant Functions}
\author{\name \href{https://ebrmn.space/}{Edward Berman} \thanks{Work from all authors done with the \href{https://www.robinwalters.com/}{Geometric Learning Lab}} \footnotetext[1]{} \email berman.ed@northeastern.edu \\
      \addr Department of Mathematics, Northeastern University
      \AND
      \name \href{https://jakegines.in/}{Jacob Ginesin}  \email ginesin@cmu.edu \\
      \addr Carnegie Mellon University
            \AND
      \name \href{https://marco-pacini.github.io/}{Marco Pacini}  \email mpacini@fbk.eu \\
      \addr University of Trento \& Fondazione Bruno Kessler
      \AND
      \name \href{https://www.robinwalters.com/}{Robin Walters}  \email r.walters@northeastern.edu\\
      \addr Khoury College of Computer Sciences, Northeastern University
      }
\newcommand{\ECE}{\mathrm{ECE}(h)}
\newcommand{\GENCE}{\mathrm{GENCE}(h)}
\newcommand{\GENCESQ}{\mathrm{GENCE}_{\mathrm{sq}}(h)}
\newcommand{\Acc}{\mathrm{Acc}_p(h)}
\begin{document}

\maketitle

\begin{abstract}
Data-sparse settings such as robotic manipulation, molecular physics, and galaxy morphology classification are some of the hardest domains for deep learning. For these problems, equivariant networks can help improve modeling across undersampled parts of the input space, and uncertainty estimation can guard against overconfidence. However, until now, the relationships between equivariance and model confidence, and more generally equivariance and model calibration, have yet to be studied. Since traditional classification and regression error terms show up in the definitions of calibration error, it is natural to suspect that previous work can be used to help understand the relationship between equivariance and calibration error. In this work, we present a theory relating equivariance to uncertainty estimation. By proving lower and upper bounds on uncertainty calibration errors (ECE and ENCE) under various equivariance conditions, we elucidate the generalization limits of equivariant models and illustrate how symmetry mismatch can result in miscalibration in both classification and regression settings. We complement our theoretical framework with numerical experiments that clarify the relationship between equivariance and uncertainty using a variety of real and simulated datasets, and we comment on trends with symmetry mismatch, group size, and aleatoric and epistemic uncertainties. 
\end{abstract}



\section{Introduction} 

Equivariant neural networks are a class of neural networks that encode group symmetries into the structure of the network architecture so that the symmetries do not need to be learned from data. Understanding both model calibration and confidence is particularly useful in the data-sparse settings where equivariant neural networks tend to thrive, such as pick-and-place robotics tasks \citep{kalashnikov1806qt, wang2022equivariant, wang2022robot, fu2023neuse, huang2023edge, huang2024leveraging, huang2024fourier, dian_thesis}, galaxy morphology classification \citep{pandya20232, sneh_mlst}, and molecular physics \citep{zou2023deep, ramakrishnan2014quantum}. While equivariance has proved effective in these domains, it does have some drawbacks, including diminishing benefits at scale as in the case of extremely large datasets or aggressive augmentation \citep{wangswallowing, klee2023comparison, gruver2023the, brehmer2024does, abramson2024accurate}, provable degradation on model performance in cases of symmetry mismatch \citep{wang2024general}, more complex architectures, and higher compute costs.

Despite these drawbacks, a surprising result of \cite{wang2023the} is that equivariant
neural networks can still be effective even in cases of mismatch between the model and the data symmetry. This finding motivated the work of \cite{wang2024general}, which explored how equivariance can affect model \textit{accuracy}, both positively and negatively. However, it is not yet understood how equivariance impacts model \textit{calibration}, loosely defined as the disagreement between a model's accuracy and predicted \textit{confidence}. 

To help understand the tradeoffs associated with equivariance, we seek to quantify the impact of equivariance on different metrics, namely expected calibration error for classification and expected normalized calibration error for regression. The only works that directly examine the link between equivariance and calibration are \cite{sun2023probabilistic} and \cite{cherif2024uncertainty}, although these are purely experimental and provide only a few illustrative cases where equivariance influences model calibration. Despite this, because different notions of calibration error involve expressions corresponding to classification or regression errors themselves, previous results on the generalization limits of equivariant models can be applied to the study of calibration error. Understanding this relationship allows us to address several unanswered questions on the subject of calibration and confidence of equivariant models. Namely, when does equivariance help a model predict its own confidence? How do notions of symmetry mismatch between model and data affect model calibration? What is the general relationship between equivariance and uncertainty estimation? Beyond raw performance, answering these questions is crucial for developing reliable symmetry-aware models in safety-critical and high-stakes applications.

The purpose of this work is to address both the lack of a theory relating equivariance to uncertainty estimation, and the scarcity of experiments exploring this relationship in practice. To accomplish this, we extend the error bounds given by \cite{wang2024general} to a broader class of calibration losses. In this way, we can quantify the effect of equivariance not just on accuracy, but also on calibration. In particular, we show that calibration error is related to typical classification and regression errors over the preimage of each confidence prediction. These errors have known bounds for equivariant functions, which we use to provide lower and upper bounds on various calibration error metrics. We explore these threads in the context of the equivariance taxonomy from \citep{wang2024general}, which describes scenarios when the ground truth function has the same symmetry as the equivariant model, the ground truth function disagrees with the symmetry in the model, and the model symmetry transforms in-distribution data to out-of-distribution data. We also define a new metric, \emph{the aleatoric bleed}, that prescribes how well a model is able to distinguish between different types of uncertainty. We provide a lower bound on aleatoric bleed for equivariant models. Our study illustrates that the effect of equivariance on model calibration is dependent on the extent of the symmetry mismatch, a finding similarly reported for accuracy in \cite{wang2023the, wang2024general}. We also perform experiments on a wide variety of real and simulated datasets, empirically showing that symmetry mismatch can increase calibration error and demonstrating that our theoretical findings are informative in practical settings. 

We summarize our contributions as follows: 

\begin{enumerate}
    \item For \emph{classification}, we provide bounds on calibration error and we tighten the bound in the limiting case of an invariance (\cref{sec:class}).  
    \item For \emph{regression}, we generalize expected normalized calibration error beyond scalar values for mean and variance predictions. We derive its upper bound on certain equivariant models (\cref{sec:invariant_regression_bound}) and derive its lower bound in the special case of scalar-valued mean and variance predictions. 
    \item Additionally, we define a new metric, the \emph{aleatoric bleed}, which quantifies miscalibration in terms of aleatoric and epistemic uncertainty (\cref{sec:uncertainty_taxonomy}).
    \item We provide \emph{illustrative examples} and run \emph{numerical experiments} on diverse real and simulated datasets. We find model calibration and aleatoric bleed suffer in cases of symmetry mismatch, and show our bounds provide a useful way of assessing relative model calibration errors.
\end{enumerate}

\section{Related Work}

\paragraph{Equivariant Learning.} Our work studies the problem of symmetry mismatch between model and data through the lens of approximation error. We are closest thematically to \cite{petrache2023approximation} and \cite{wang2024general}, which both establish bounds on function generalization under various assumptions of symmetry mismatch, but neither study calibration errors specifically. We also consider the equivariance taxonomy established in \cite{wang2024general}, but depict its ramifications on uncertainty quantification. Throughout this paper, we assume access to universal $G$-equivariant models, guaranteed by prior work \citep{maron2019universality, yarotsky2022universal}. Yet, universality alone does not ensure reliable uncertainty estimates, and therefore our focus is on deriving error bounds for calibration error. In order to prove these bounds, our work also uses the strategy of decomposing the input and output spaces, which we accomplish through taking the quotient by the symmetry group. This is a strategy similarly employed in \cite{sannai2021improved}, \cite{lawrence2022barron}, \cite{petrache2023approximation}, and \cite{wang2024general}.  Theory on the relevant group representations for functions that output probabilities is partly addressed in \cite{bloem2020probabilistic} and \cite{dobriban2025symmpi}, but no bounds on expected calibration error are presented. Reasoning about distributions in terms of invariants also has a rich history in deriving uninformative priors for Bayesian analysis -- see \cite{jaynes1968prior}. We build on this by studying how equivariance can affect the reliability of Bayesian methods. Specifically, we look at the ability to separate different types of uncertainty using evidential regression, including on multivariate distributions, generalizing some of the work of \cite{van2025equivariant} who study equivariant model selection when trained to predict a univariate distribution. We also build off of \cite{gelberg2024variational} by studying the behavior of Bayesian models under various equivariance constraints other than the weight space permutation invariances they consider.

\paragraph{Aleatoric and Epistemic Uncertainties.} A longstanding goal in the computational sciences is to separate model (epistemic) uncertainties from (aleatoric) uncertainties inherent to the data \citep{ulmer2021prior, hullermeier2021aleatoric, osband2023epistemic, fuchsgruber2024energybased, chaucredal}. Previous work underscores how this separation can be difficult to perform in practice. Techniques such as those presented in \cite{kendall2017uncertainties, amini2020deep} often fail to distinguish these uncertainties due to effects such as loss shaping \citep{ovadia2019can, valdenegro2022deeper, osband2023epistemic, wimmer2023quantifying, nevin2024deepuq, jurgens2024epistemic}. Our work explores this problem in the context of symmetry. In particular, we explore how the epistemic uncertainty---the uncertainty often quantified by calibration errors---can be confused with aleatoric uncertainty due to symmetry mismatch. 

\paragraph{Learning Parameterized Distributions.} For learning tasks with inherent uncertainty, it is natural to design a neural network that approximates a probability distribution rather than a single point estimation. There are many ways of doing this, such as with Bayesian Neural Networks \citep{kononenko1989bayesian}, Epistemic Neural Networks \citep{osband2023epistemic}, Normalizing Flows \citep{rezende2015variational, kobyzev2020normalizing, papamakarios2021normalizing}, or using the softmax function \citep{goodfellow20166} to learn a categorical distribution. These approaches are often computationally expensive to train and sample in practice. Thus, we often employ parameterization techniques to constrain neural networks to output simplified probability distributions and train them using a negative log likelihood loss derived from said distribution. For example, mean variance estimators (MVE) are the simplest type of neural network that predicts a parameterized distribution. Instead of predicting a single output, MVEs predict a mean $\mu$ and a variance $\sigma^2$ \citep{nix1994estimating, seitzer2022pitfalls}. There is also work extending MVEs to learn covariances for multivariate distributions \citep{tomczak2020efficient} and linear combinations of Gaussians \citep{diakonikolas2020robustly}. \cite{amini2020deep} extend MVEs by imposing a prior on $\mu$ and $\sigma^2$ and performing evidential regression, which in-turn provides enough parameters to disentangle aleatoric and epistemic uncertainties. A unique feature of our work is that we use models that predict parameterized distributions in order to define calibration error in a regression setting and also to experimentally test the effect of equivariance on the ability to learn a reliable uncertainty estimate. 

\paragraph{Calibration Error.} 
In classification, probabilistic models (e.g. logistic regression or softmax classifiers) output a distribution over labels, and the predicted label is chosen as the one with maximum probability. However, even if a model gives a label $y$ the highest probability $p$, that does not mean the model will necessarily be correct with probability $p$. This mismatch is often quantified with the expected calibration error (ECE) \citep{guo2017calibration} and is often estimated using binning procedures that approximate the continuous push-forward density of different confidence regions. Miscalibration can analogously be measured for regression tasks \citep{pernot2023properties, s22155540}. The idea is to compare true labels $y$ with a predicted mean $\mu$ and variance $\sigma^2$ of a MVE. One should expect the squared errors $(y - \mu)^2$ to average out to the variance $\sigma^2$. This idea was made precise by \cite{levi2022evaluating}, who proposed the expected normalized calibration error (ENCE) to quantify this exact discrepancy. A key limitation of their work is that it is formulated in terms of binning approximations rather than in terms of a continuous probability density. Another key limitation of their work is that they assume mean and variance are scalar values. We generalize these metrics for continuous densities and multivariate normal distributions. Beyond ECE and ENCE, some works propose calibration objectives and training procedures in terms of coverage \citep{gneiting2007strictly, lemos2023sampling, sun2023probabilistic}, distributional calibration \citep{kuleshov2018accurate}, and post-hoc variance scaling \citep{laves2020well}. We focus on ECE and ENCE because they are clearly formulated objectives that can be studied independently of the training process itself. The work of \cite{sun2023probabilistic} suggests that equivariance can improve model calibration, but a theoretical justification for this is not present in the literature and is something we comment on in this work. 

\section{Background}

We review the definition of equivariance and how symmetry constraints of a model class may be mismatched with a given dataset. Additionally, we review evidential regression, a technique for learning model and data uncertainties as distinct outputs of a neural network.  

\subsection{Equivariance}

Here, we give precise definitions of equivariance and invariance. For a general review of the mathematical background, we direct the reader to \cite{Artin1998, hall2013lie, esteves2020theoretical}.

Let $\phi: X \rightarrow Y$ be a map between input and output vector spaces $X$ and $Y$.
Let $G$ be a group with representations $\rho^{X}$ and $\rho^{Y}$ which transform vectors in $X$ and $Y$ respectively. Representations are group homomorphisms which map group elements to invertible linear transformations. When clear, we omit the representation map and write $gx$ for $\rho^{X}(g)x$.  The map $\phi: X \rightarrow Y$ is \emph{equivariant} if 
\begin{align*}
\label{eq:equivariance_app}
    \rho^{Y}(g)\phi(x)=\phi(\rho^{X}(g)x) \ , \ \text{for all } g\in G, x \in X \ .
\end{align*}
Invariance is a special case of equivariance in which $\rho^{Y} = \rm{Id}^{Y}$ for all $g \in G$. I.e.,  \textit{a map $\phi: X \rightarrow Y$ is \emph{invariant} if it satisfies}
\begin{align*}
    \phi(x)=\phi(\rho^{X}(g)[x]) \ , \ \text{for all } g\in G, x \in X .
\end{align*}
Thus, with an invariant operator, the output of $\phi$ is unaffected by transformations applied to the input. 

\paragraph{Fundamental Domain.} This paper will use iterated integration over an orbit and a set of orbit representatives, which we call the fundamental domain.

\begin{customdef}[Definition 4.1 in \cite{wang2024general}] Let $d$ be the dimension of a generic orbit of $G$ in $X$ and $n$ the dimension of $X$. Let $\nu$ be the $(n - d)$ dimensional Hausdorff measure in $X$. A closed subset $F$ of $X$ is called a fundamental domain of $G$ in $X$ if $X$ is the union of conjugates of $F$, i.e., $X = \cup_{g \in G}gF$, and the intersection of any two
conjugates has 0 measure under $\nu$.    
\end{customdef}

In the following, we give a simple example of fundamental domain.

\begin{example}
\label{example:fundamental_domain}
    Let $G = \mathrm{SO} (n)$, $X = \mathbb R ^n$ and $F = \left\{ (x, 0, \dots, 0) \mid x \in \R_{\geq 0} \right\}$.
    The closed set $F$ is a fundamental domain of $\mathrm{SO} (n)$ in $\R^n$. 
    Indeed, the intersection $g_1 F \cap g_2 F = \{ 0 \}$ has measure $0$ for each distinct $g_1, g_2 \in G$ and $\mathbb{R}^n = \cup_{g \in \mathrm{SO} (n)} g F$.
\end{example}

\subsection{Equivariant Learning}
\label{sec:summary} 

Following \cite{wang2024general}, we first establish an equivariant learning setting that describes assumptions on the ground truth and hypothesis class.
%
We work in the deterministic realizable case of statistical learning theory: 
data distribution is defined on $X$ and given by the probability density function $p \colon X \to \mathbb{R}$, labels are given deterministically by a ground truth function $f \colon X \to Y$. Following this standard statistical learning framework, we assume that training and testing samples are drawn i.i.d.\ from the same underlying distribution~$p$, i.e., no distribution shift occurs at test time.
The goal for a function space $\mathscr{H} = \{h \colon X \to Y \}$ is to fit the function $f$ by minimizing an error function $\text{err}(h)$. Let $\mathds{1}(x)$ be an indicator function that equals $1$ if the condition in the argument is satisfied and $0$ otherwise. In classification, $\text{err}(h)$ is the classification error rate; for regression tasks, the error function is a $L_2$ norm function,
\begin{eqnarray*}
 \text{err}_{\text{cls}}(h) &=& \mathbb{E}_{x \sim p}\left[\mathds{1}(f(x) \neq h(x))\right]    \label{eq:cls}, \\
 \text{err}_{\text{reg}}(h) &=& \mathbb{E}_{x \sim p}\left[\Bigl \lVert h(x) - f(x) \Bigr \rVert_2^2 \right]. \label{eq:reg}
\end{eqnarray*}
\subsection{A Taxonomy of Equivariance: Correct, Incorrect, and Extrinsic.}
\label{sec:equivariant_taxonomy}

\citet{wang2024general} establish a taxonomy which describes the relationship of the symmetry of the function space to the symmetry in the data. We review their definitions of correct, incorrect, and extrinsic equivariance.
These definitions help us understand the ability of equivariant functions to approximate datasets that may or may not have the same symmetries. An important part of this taxonomy is extrinsic symmetry, which describes the case where the action of the group moves data points out of the support of their original distribution.

\begin{customdef}[Point-wise Correct, Incorrect, and Extrinsic Equivariance, Definitions 3.5-3.7 in \cite{wang2024general}] Assume $h$ is equivariant with respect to a group $G$. For $g \in G$ and $x \in X$ where $p(x) \neq 0$, if $p(gx) \neq 0$ and $f(gx) = gf(x)$, $h$ has \textit{correct equivariance} with respect to $f$ at $x$ under transformation $g$. For $g \in G$ and $x \in X$ where $p(x) \neq 0$, if $p(gx) \neq 0$ and $f(gx) \neq gf(x)$, $h$ has \textit{incorrect equivariance} with respect to $f$ at $x$ under transformation $g$. For $g \in G$ and $x \in X$ where $p(x) \neq 0$, if $p(gx) = 0$, $h$ has \textit{extrinsic equivariance} with respect to $f$ at $x$ under transformation $g$.
\end{customdef}

If $f$ has point-wise correct equivariance for all $x \in X$, $g \in G$, then we say $f$ has correct equivariance. The same follows for incorrect and extrinsic equivariance as well. In the case that $f$ has correct equivariance, we assume $f$ lies in the considered hypothesis class~$\mathscr{H}$. This assumption aligns with \cite{wang2024general} and is realistic, since several universality results for equivariant models are already known \citep{yarotsky2022universal}.

\subsection{Error Bounds for Equivariant Models in Classification and Regression Tasks} \label{sec:reg_bounds_wang_old}

Our goal is to generalize the bounds from \cite{wang2024general} to a calibration objective. We review the main results from \cite{wang2024general} here. 
%
Given that equivariance is not always correct, the following definitions and theorems detail how symmetry mismatch can harm model fitting for classification or regression problems. We start with invariant classification.

\begin{defn}[Majority Label Total Dissent]
For the orbit $Gx$ of $x \in X$, the total dissent $k(Gx)$ is the integrated probability density of the elements in the orbit $Gx$ having a different label than the majority label:
\begin{equation*}
    k(Gx) = \underset{y \in Y}{\min}\int_{Gx}p(z)\mathds{1}(f(z) \neq y)dz.
\end{equation*}
\end{defn}

\begin{customthm}[Theorem 4.3 in \cite{wang2024general}] \label{thm:cls_lwr} The error
$\text{err}_{\text{cls}}(h)$ is bounded below by $\int_F k(Gx)dx$.
\end{customthm}

We now detail the relevant error lower bound assuming invariance in the regression setting.

\begin{customthm}[Theorem 4.8 in \cite{wang2024general}] \label{thm:invreg}
 Assume $h$ is $G$ invariant so that $h(gx) = h(x)$ for all $g \in G$. Assume $Y = \mathbb{R}^n$. Denote by $p(Gx) = \int_{z \in Gx}p(z)dz$  the probability of the orbit $Gx$. Denote by $q(z) = \frac{p(z)}{p(Gx)}$ the normalized probability density of the orbit $Gx$ such that $\int_{Gx}q(z)dz = 1$. Let $\mathbb{E}_{Gx}[f]$ be the mean of the function $f$ on the orbit $Gx$ defined, and let $\mathbb{V}_{Gx}[f]$ be the variance of $f$ on the orbit $Gx$,
\begin{eqnarray*}
\mathbb{E}_{Gx}[f] &=& \int_{Gx}q(z)f(z)dz = \frac{\int_{Gx}p(z)f(z)dz}{\int_{Gx}p(z)dz}, \\
\mathbb{V}_{Gx}[f] &=& \int_{Gx}q(z) \lVert \mathbb{E}_{Gx}[f] - f(z) \rVert_2^2dz.
\end{eqnarray*}
We have $\text{err}_{\text{reg}}(h) \geq \int_F p(Gx)\mathbb{V}_{Gx}[f]$.

\end{customthm}

\cref{thm:equivareg} generalizes this to the setting of equivariance. Before stating the theorem, we establish how to lift an integral over an orbit to an integral over the group. Details on necessary hypotheses for this to be well defined are given in \cref{sec:iterated_integral}. Denote the stabilizer by $G_x = \{g: gx = x\}$. Denote by $a_x: G/G_x \to Gx$ the identification of the orbit $Gx$ and coset space $G/G_x$ with respect to the stabilizer. We have
\begin{equation*}
    \int_{Gx}f(z) dz = \int_G f(gx)\alpha(g,x)dg
\end{equation*}
where 
\begin{equation*}
    \alpha(g,x) = \left(\int_{Gx}dh\right)^{-1} \biggm|\frac{\partial a_x(\bar{g})}{\partial \bar{g}}\biggm|.
\end{equation*}
We may now state the equivariant regression lower bound:
\begin{customthm}[Theorem 4.9 in \cite{wang2024general}] \label{thm:equivareg} Assume $h$ is equivariant: that is, $h(\rho_X(g)x) = \rho_Y(g)h(x)$ where $g \in G$, $\rho_X$ and $\rho_Y$ are group representations. Denote $\rho_X(g)x$ and $\rho_Y(g)y$ by $gx$ and $gy$. Let $Y = \mathbb{R}^n$ and Id be the identity. Define the matrix
$Q_{Gx} \in \mathbb{R}^{n\times n}$ and $q(gx) \in \mathbb{R}^{n\times n}$ so that $\int_G q(gx)dg$ = Id by
\begin{eqnarray*}
Q_{Gx} &=& \int_G p(gx) \rho_Y(g)^T \rho_Y(g) \alpha(x,g) dg, \\
q(gx) &=& Q^{-1}_{Gx}p(qx) \rho_Y(g)^T \rho_Y(g) \alpha(x,g).
\end{eqnarray*}
If $f$ is equivariant, $g^{-1}f(gx)$ is a constant for all $g \in G$. Define  
\begin{equation*}
    \mathcal{E}_G[f,x] = \int_G q(gx)g^{-1}f(gx)dg.
\end{equation*}
The error of $h$ has lower bound $\text{err}_{\text{reg}}(h) \geq \int_F \int_G p(gx) \lVert f(gx) - g\mathcal{E}_G[f,x]\rVert_2^2 \alpha(x,g)dgdx$.

\end{customthm}

\subsection{Evidential Regression}
\label{sec:der}

This work studies the effect of equivariance on the ability to separate model and data-centric uncertainties. We now describe evidential regression, a learning framework that allocates mass between mean and dispersion under a surrogate loss. That is, a loss function determines how much uncertainty should be allocated to the model and how much should be allocated to the true dispersion (e.g. variance) of the data. The allocation is not identifiable and is sensitive to loss shaping and misspecification. We review the relevant theory and notation from section 2.3 in \cite{hullermeier2021aleatoric} for defining different sources of uncertainty, then \cite{amini2020deep} for describing evidential regression.

The following definitions make the notions of model and data-centric uncertainties concrete:

 \paragraph{Aleatoric Uncertainty:} Aleatoric uncertainty refers to the irreducible part of the uncertainty. Given spaces $X$ and $Y$ and an instance $x_q \in X$, the aleatoric uncertainty is the spread in $p(y|x_q)$.
 \paragraph{Epistemic Uncertainty:} Model uncertainty and approximation uncertainty, on the other hand, are subsumed under the notion of epistemic uncertainty. Let the spaces $X$ and $Y$ be the same as before. Let $l: Y \times Y \to \mathbb{R}$ be the loss function and let $f^*$ be the associated point-wise Bayes predictor defined as \begin{equation}
f^*(x) := \underset{\hat{y} \in Y}{\arg \min} \int_{Y}l(y, \hat{y})dP(y|x). \label{eq:ep_e}
\end{equation} Epistemic uncertainty is the uncertainty due to the lack of knowledge of the perfect predictor \cref{eq:ep_e}.

\paragraph{Evidential Regression.} We now describe evidential regression, which prescribes a specific parameterization of the two uncertainties. Given $\left(y_1, \hdots, y_n\right) \sim \mathcal{N}(\mu, \sigma^2)$, we may impose priors
\begin{eqnarray*}
\mu & \sim & \mathcal{N}(\gamma, \sigma^2\nu^{-1})\\
\sigma^2 & \sim & \Gamma^{-1} (\alpha , \beta)
\end{eqnarray*}
where $\Gamma(\cdot)$ is the gamma function. Let $m = (\gamma, \nu, \alpha, \beta)$, and $\gamma \in \mathbb{R}$, $\nu > 0$, $\alpha > 1$, $\beta > 0$. One can then show that $p(y_i | m) = St(y_i ; \gamma, \frac{\beta (1 + \nu)}{\alpha \nu}, 2\alpha )$, where the St distribution has probability density given by 
\begin{eqnarray*}
    St(t; \mu, \sigma, \nu) = \frac{\Gamma \left(\frac{\nu + 1}{2}\right)}{\sqrt{\pi \nu}\sigma\Gamma\left(\frac{\nu}{2}\right)} \left( 1 + \frac{1}{\nu}\left(\frac{t - \mu}{\sigma}\right)^2\right)^{-(\nu + 1)/2}. \label{eq:st}
\end{eqnarray*}
Parameterizing the Student's t distribution as a four parameter family is useful because it allows us to define our prediction, aleatoric uncertainty, and epistemic uncertainty in a rigorous way:
\begin{eqnarray*}
    \mathbb{E}[\mu] &=& \gamma \quad \text{(Prediction)}\\
    \mathbb{E}[\sigma^2] &=& \frac{\beta}{\alpha - 1} \quad \text{(Aleatoric Uncertainty)} \\
    \text{Var}[\mu] &=& \frac{\beta}{\nu(\alpha - 1)} \quad \text{(Epistemic Uncertainty)}.
\end{eqnarray*}
The prediction definition is clear in that it represents the expected mean of the Student's t distribution used to fit the data. Aleatoric uncertainty represents the uncertainty in $P(y|x)$, whereas the epistemic uncertainty represents uncertainty in the predictive law (the model). Practically, aleatoric uncertainty can be understood as the irreducible uncertainty due to inherent stochasticity in the data. In contrast, epistemic uncertainty is reducible uncertainty due to finite data, features, or misspecification. It vanishes in the limit of perfect knowledge and correct model class. 

With some slight abuse of notation, we may abbreviate these as $\sigma^2_{\text{aleatoric}} = \mathbb{E}[\sigma^2]$ and $\sigma^2_{\text{epistemic}} = \text{Var}[\mu]$. Some works take these uncertainties to be addititive. That is, the model will predict $\gamma \pm \sigma^2_{\text{aleatoric}} \pm \sigma^2_{\text{epistemic}}$. See \cite{freedman2025status} and \cite{berman2025soft} for examples where this is done in different scientific domains. We do not assume these uncertainties to be additive in this work.

We note that there is an inherent identifiability issue: there are multiple such $m = (\gamma, \nu, \alpha, \beta)$ that can fit similar likelihoods with different aleatoric and epistemic allocations. This issue motivates our experiments on aleatoric bleed in \cref{sec:uncertainty_taxonomy}.

\section{Invariant Classification Calibration} \label{sec:class}

In this section, we present our results on bounds of uncertainty calibration error for classification on invariant functions. In particular, we will show how miscalibration can be understood by examining the fibers of each uncertainty estimate and then studying the error on each of those fibers. The error on these fibers is related to the symmetries of the model and the data, which establishes a relationship between symmetry and model calibration. In the main, our bounds demonstrate how badly a function can become miscalibrated due to incorrect and extrinsic invariance. In particular, the lower and upper bounds are tightened by the dissent on individual fibers. Additionally, this section provides experimental results that show how incorrect invariance can affect model calibration in domains where the theory is less tractable. For practitioners who are more interested in empirical results and conclusions, experiments begin in \cref{sec:swiss}.

\subsection{Classification Problem} \label{sec:class_bounds}

Consider a function $f: X \to Y$ where $Y$ is a finite set of labels. Let $q: X \to \mathbb{R}$ be a probability density on the domain $X$. We define a function space 
$\mathscr{H} = \{h: X \to Y \times [0,1]\}$. If $h(x)=(h_Y,h_P)$ then $h_P$ represents the confidence estimate associated with the predicted label $h_Y$. The goal is to find the function $h \in \mathscr{H}$ that fits the function $f$ \textit{and} to properly predict its own confidence by minimizing the expected calibration error (\cref{eq:ECE}, and Equation 2 in \cite{guo2017calibration}). Following \cite{wang2024general}, we assume that the class $\mathscr{H}$ is arbitrarily expressive except that it is constrained to be equivariant with respect to a group $G$. In the classification setting, we specifically assume $h$ to be $G$-invariant. While not all classification problems are $G$-invariant, this is the case most commonly considered in the literature. Let $r(p)$ be the probability density such that $\mathbb{P}(p_1 \leq h_P(x) \leq p_2) = \int_{p_1}^{p_2}r(p)dp$.
Equivalently, $r$ is the push-forward of $q$ over $h_P$. The expected calibration error is nominally defined

\begin{equation}
    \text{ECE} (h) = \mathbb{E}_{h_P}\left[\biggm|\mathbb{P}\left(f=h_Y\mid h_P = p\right) - p\biggm|\right] \label{eq:ECE}
\end{equation}   

as in \cite{guo2017calibration}. Intuitively, if a model has confidence $p$, then it should be accurate with probability $p$. This metric penalizes the discrepancy between accuracy and confidence averaged over all of confidences weighted by the push-forward density $r$. The definition in \cite{guo2017calibration} abuses notation slightly, in that the probability of any event drawn from a continuous random variable has probability zero, i.e., $p(h_P = p) = 0$ for all $p$. We can rectify this by defining ECE as 
\begin{equation}
    \text{ECE} (h) = \lim_{\varepsilon \to 0}\mathbb{E}_{p \sim r(p)}\left[\biggm|\mathbb{P}\left(f=h_Y \mid p - \varepsilon < h_P < p + \varepsilon\right) - p\biggm|\right], \label{eq:well_def}
\end{equation}
however, we drop the limits for brevity throughout. 

We abbreviate $\mathbb{P}(f(x)=h_Y(x)| h_P(x) = p)$ as $\Acc$.\footnote{Some works refer to $\Acc$ as the calibration function, e.g. \cite{vaicenavicius2019evaluating}}  This is the true accuracy of the model when the predicted confidence is $p$.  Hence $h$ is well calibrated at confidence $p$ when $\Acc = p$, under confident when $\Acc > p$ and overconfident when $\Acc < p$.  For the purposes of approximation, $|2 \varepsilon |$ can be viewed as the bin width. 

We briefly comment on the well-definedness of \cref{eq:well_def}. First, $\mathbb{P}\left(f=h_Y| p - \varepsilon < h_P < p + \varepsilon\right)$ is well defined when $r(p) \neq 0$ for all $p \in [0,1]$. Moreover, we note that in general, if $\mathcal C = \{C = c\}$, it is not always permissible to define $\mathbb{P} (A|\mathcal C) = \underset{\varepsilon \to 0}{\lim} \mathbb{P} (A| c - \varepsilon < C < c + \varepsilon)$. This is because we face contradictions when $\mathcal C = \{C = c\} = \{D = d\}$, but the random variables $C$ and $D$ have different \textit{densities} defined with respect to different \textit{measures}. This results in contradictions where $\mathbb{P} (A|\mathcal C) = \underset{\varepsilon \to 0}{\lim}P(A| c - \varepsilon < C < c + \varepsilon)$ and $ \mathbb{P} (A|\mathcal C) = \underset{\varepsilon \to 0}{\lim} \mathbb{P}(A| d - \varepsilon < D < d + \varepsilon)$ but $ \underset{\varepsilon \to 0}{\lim} \mathbb{P} (A| c - \varepsilon < C < c + \varepsilon) \neq  \underset{\varepsilon \to 0}{\lim} \mathbb{P}(A| d - \varepsilon < D < d + \varepsilon)$, see for example the Borel-Kolmogorov Paradox \citep{andrey1933grundbegriffe, MR1992316}\footnote{This paradox is most easily exemplified with \href{https://en.m.wikipedia.org/wiki/Borel-Kolmogorov_paradox\#A_great_circle_puzzle}{the Great Circle Puzzle}.}. In other words, the probability density conditioned on an event with zero probability can only be specified with respect to a given reference measure that determines the probability density function being conditioned on. Therefore, we must specify a measure on $X$ so that the \textit{random variable} $h_p$ has \textit{push-forward density} $r(p)$ defined with respect to the \textit{push-forward measure}. We express this with the following assumption.  

\begin{assume}[Hausdorff Measurability of input domain] \label{ass:house} The input domain $X$ is equipped with an $|X|$ dimensional Hausdorff measure $\mathcal{H}$. The density $r(p)$ is the push-forward of $q(x)$ over $h_P$, meaning it is defined with respect to the accompanying push-forward measure $h_P \# \mathcal{H}$ on $[0,1]$. 
\end{assume}

\cref{ass:house} is sufficient for \cref{eq:well_def} to be uniquely defined. Our construction is supported by various disintegration theorems in the literature \citep[e.g.][]{pachl1978disintegration, chang1997conditioning}. For further background, see also \cite{rokhlin1949endomorphisms, bogachev2007measure}. We also note that we do not need these well-definedness properties to hold in the special case where $r(p)$ is discrete or when we are computing approximations that treat $h_{P}$ as discrete. In each case, we average over the confidences (or confidence bins) with non-zero probability.

Since we assumed that there are finitely many labels in the co-domain $Y$, we can assume that $\mathbb{P}\left(h_Y(x) | h_P(x) = p \right)$ is a discrete probability distribution for each value of $p$. Therefore, each outcome in the distribution has a probability less than one.

\subsection{ECE Upper Bounds} We now show that ECE is a bounded. Since ECE is the average of a random variable bounded between $0$ and $1$, ECE is also bounded between $0$ and $1$. However, we can improve upon this both with and without the assumption of invariance. Our first proposition concerns an unconstrained model, i.e., it makes no use of the assumption of invariance on $h$, however, the propositions that follow show how invariance can be used to tighten the lower and upper bounds. We will also consider special cases of binary classification.

To start, we consider an unconstrained model. 

\begin{proposition} \label{prop:naive}
ECE is bounded from above by $\frac{1}{2} + \int_0^{1}r(p)|\frac{1}{2} - p|dp$.
\end{proposition}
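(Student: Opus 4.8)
The plan is to work directly from the continuous definition of expected calibration error, $\ECE = \int_0^1 r(p)\,\bigl|\Acc - p\bigr|\,dp$, and to bound the integrand pointwise before integrating. First I would insert the midpoint $\tfrac{1}{2}$ and apply the triangle inequality to the integrand:
\[
\bigl|\Acc - p\bigr| = \Bigl|\bigl(\Acc - \tfrac{1}{2}\bigr) + \bigl(\tfrac{1}{2} - p\bigr)\Bigr| \le \bigl|\Acc - \tfrac{1}{2}\bigr| + \bigl|\tfrac{1}{2} - p\bigr|.
\]

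Next I would exploit the fact that $\Acc = \mathbb{P}(f = h_Y \mid h_P = p)$ is a probability and therefore lies in $[0,1]$ for every $p$ — precisely the boundedness observation already recorded after \cref{eq:well_def}, where it is noted that each outcome of the conditional distribution has probability at most one. Since every point of $[0,1]$ is within $\tfrac{1}{2}$ of the midpoint, this gives $\bigl|\Acc - \tfrac{1}{2}\bigr| \le \tfrac{1}{2}$ uniformly in $p$, and hence the pointwise bound $\bigl|\Acc - p\bigr| \le \tfrac{1}{2} + \bigl|\tfrac{1}{2} - p\bigr|$.

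Finally I would integrate this inequality against the push-forward density $r$, using that $r$ is a genuine probability density so that $\int_0^1 r(p)\,dp = 1$:
\[
\ECE \le \int_0^1 r(p)\Bigl(\tfrac{1}{2} + \bigl|\tfrac{1}{2} - p\bigr|\Bigr)\,dp = \tfrac{1}{2} + \int_0^1 r(p)\,\bigl|\tfrac{1}{2} - p\bigr|\,dp,
\]
which is the claimed bound. I do not expect a genuine obstacle here, since the argument reduces to a single triangle inequality combined with the trivial bound on a probability; the only point requiring care is the preliminary one of ensuring that the integral expression for ECE is legitimate in the first place, i.e.\ that the conditional accuracy $\Acc$ is well defined as a function of $p$ with respect to the push-forward measure $h_P \# \mathcal{H}$ — which is exactly what \cref{ass:house} guarantees. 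Notably, this proposition makes no use of invariance of $h$, so nothing in the symmetry assumptions is needed beyond the measurability setup.
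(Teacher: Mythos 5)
Your proof is correct and follows essentially the same route as the paper's: insert the midpoint $\tfrac{1}{2}$, apply the triangle inequality, bound $\bigl|\Acc - \tfrac{1}{2}\bigr| \le \tfrac{1}{2}$ using that $\Acc$ is a probability, and integrate against $r$. The only difference is that you make explicit the measurability remarks (\cref{ass:house}) and the normalization $\int_0^1 r(p)\,dp = 1$, which the paper leaves implicit.
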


\begin{proof}
See that $|\Acc - p| = |\Acc - p + \frac{1}{2} - \frac{1}{2}| \leq |\Acc - \frac{1}{2}| + |\frac{1}{2} - p| \leq |\frac{1}{2} - p| + \frac{1}{2}$. Therefore, $\int_0^1 r(p)(|\Acc - p|  dp \leq \int_0^1 r(p)\left(|\frac{1}{2} - p| + \frac{1}{2} \right)dp = \frac{1}{2} + \int_0^{1}r(p)|\frac{1}{2} - p|dp$.
\end{proof}

The upper bound presented in the  \cref{prop:naive} is loose without any further assumptions. We now show that assuming the function space is invariant allows us to tighten the upper bound. The proof uses iterated integration over the orbits and the set of orbit representatives (the fundamental domain). Let us first make some assumptions in order for iterated integration on subsets of $X$ to be well defined.

\begin{assume}[Smoothness and Separability Hypothesis] \label{assumption:assume}
For a group $G$ acting on a domain $X$, the union of all pairwise intersections $\cup _{g_1 \neq g_2} (g_1F \cap g_2F)$ have measure $0$ and that $F$ and $Gx$ are differentiable manifolds for all $x \in X$. This holds for fibers $\mathcal{F}_p = h^{-1}(p) \subseteq X$ on which $G$ also acts. That is, if $F_p$ is a fundamental domain for the action of $G$ on $\mathcal{F}_p$, the union of all pairwise intersections $\cup _{g_1 \neq g_2} (g_1F_p \cap g_2F_p)$ have measure $0$ and $F_p$ and $Gx$ are differentiable manifolds, eventually with boundary, for all $x \in \mathcal{F}_p$. Note that \cref{example:fundamental_domain} satisfies all the above.
\end{assume}

We are now able to state and prove our main theorem.

\begin{thm} \label{thm:inc_bound}  Denote the fiber $\mathcal{F}_p = h_P^{-1}(p)$. Denote the total density on a fiber $\mathcal{F}_p$ by $q(\mathcal{F}_p) = \int_{\mathcal{F}_p}q(x)dx$ and the renormalized density by $q_p(x) = q(x)/q(\mathcal{F}_p)$. Let $k_p(Gx)$ be the total dissent of an orbit on $\mathcal{F}_p$ with the renormalized probability $q_p(x)$. Let $F_p$ be a fundamental domain for the action of $G$ on $\mathcal{F}_p$. Assume that $h$ is incorrectly invariant under $G$ on each fiber of $\mathcal{F}_p$. In other words, $h$ satisfies $h(gx) = h(x)$ for $g \in G$, $x \in \mathcal{F}_p$, but $f(x) \neq f(gx)$ for some $x$, $g$. Let  $P_1 = \{ p \colon \Acc \leq 1/2\}$ and $P_2 = \{ p \colon \Acc \geq 1/2\}$. Let $Gx^*$ be the orbit with the smallest nonzero total dissent $k(Gx^*)$, i.e., $x^* = \underset{x \in X}{\arg \min} ~k(Gx) $.  Then ECE is bounded above by 
\[
\ECE  \leq \frac{1}{2} + \int_0^{1}r(p)\left|\frac{1}{2} - p\right|dp -  k(Gx^*)\int_{P_2}r(p) dp. 
\] 
\end{thm}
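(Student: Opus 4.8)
The plan is to bound the integrand $|\Acc - p|$ pointwise in $p$, splitting the confidence range into the two regions $P_1$ and $P_2$, and to sharpen the estimate of \cref{prop:naive} only on $P_2$, where the invariance assumption can be exploited. Writing $\ECE = \int_0^1 r(p)\,|\Acc - p|\,dp$, I would first reuse the decomposition underlying \cref{prop:naive}, namely $|\Acc - p| \le |\frac{1}{2} - p| + |\Acc - \frac{1}{2}|$, so that everything reduces to controlling $|\Acc - \frac{1}{2}|$ on each region. On $P_1$ (where $\Acc \le \frac{1}{2}$) I keep the trivial estimate $|\Acc - \frac{1}{2}| \le \frac{1}{2}$, since there the dissent pushes the accuracy in the unhelpful direction; the entire gain will come from $P_2$.

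The heart of the argument is to show that, for $p \in P_2$, the invariance of $h$ together with incorrect equivariance forces $\Acc \le 1 - k(Gx^*)$, equivalently that the conditional error rate on the fiber satisfies $1 - \Acc \ge k(Gx^*)$. To see this I would restrict to the fiber $\mathcal{F}_p = h_P^{-1}(p)$, on which both $h_Y$ and $h_P$ are constant along each $G$-orbit because $h$ is $G$-invariant there. Using the iterated-integration formalism of \cref{sec:iterated_integral}, valid under \cref{assumption:assume}, I would express the fiber error rate as an integral over the fundamental domain $F_p$ of the per-orbit error, and observe that on any single orbit $Gx$ the model commits to one fixed label, so its contribution is at least the orbit's total dissent $k_p(Gx)$ with respect to the renormalized density $q_p$. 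Since $h$ is incorrectly invariant on $\mathcal{F}_p$, at least one orbit carries strictly positive dissent, and every nonzero orbit dissent is bounded below by the global minimum $k(Gx^*)$; carrying this through the iterated integral gives $1 - \Acc \ge k(Gx^*)$, hence $|\Acc - \frac{1}{2}| = \Acc - \frac{1}{2} \le \frac{1}{2} - k(Gx^*)$ on $P_2$.

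Finally I would reassemble the two pieces by integrating $r(p)\big(|\frac{1}{2} - p| + \frac{1}{2}\big)$ over $P_1$ and $r(p)\big(|\frac{1}{2} - p| + \frac{1}{2} - k(Gx^*)\big)$ over $P_2$. Using $P_1 \cup P_2 = [0,1]$ (overlapping on a measure-zero set) and $\int_0^1 r(p)\,dp = 1$, the $\frac{1}{2}$ terms collapse to $\frac{1}{2}$, the $|\frac{1}{2}-p|$ terms collapse to $\int_0^1 r(p)\,|\frac{1}{2}-p|\,dp$, and the correction survives as $-\,k(Gx^*)\int_{P_2} r(p)\,dp$, which is exactly the claimed bound.

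The step I expect to be the main obstacle is the fiber error-rate estimate $1 - \Acc \ge k(Gx^*)$. It requires care in passing between the fiber-conditional density $q_p$, in terms of which $k_p$ is defined, and the global dissent $k(Gx^*)$, and in verifying that the iterated integral of the per-orbit dissents over $F_p$ genuinely dominates the single smallest nonzero dissent rather than a weighted average that could be diluted by low-probability orbits. Making this precise is where \cref{assumption:assume} and the disintegration and normalization bookkeeping do the real work, and it is the place where the argument must be checked most carefully.
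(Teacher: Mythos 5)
Your proposal is correct and takes essentially the same route as the paper's proof: the same triangle-inequality decomposition about $\tfrac{1}{2}$, the same $P_1$/$P_2$ split with the trivial estimate on $P_1$, the same use of the fiber-wise accuracy upper bound (via \cref{thm:cls_lwr} with the renormalized density $q_p$) only on $P_2$, and the same final passage from the renormalized fiber dissent to the global minimum $k(Gx^*)$ using $q_p \geq q$. The only cosmetic difference is that you state pointwise bounds on $\bigl|\Acc - \tfrac{1}{2}\bigr|$ and then integrate, whereas the paper substitutes the worst-case accuracies ($\Acc = 0$ on $P_1$, $\Acc = 1 - \int_{F_p}k_p(Gx)\,dx$ on $P_2$) into the integral decomposition before simplifying; these are the same argument, and you correctly flagged the normalization bookkeeping as the delicate step.
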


\begin{proof}
    
Observe that
\[
\biggm|\Acc - p\biggm| = \biggm|\Acc - \frac{1}{2} + \frac{1}{2} - p\biggm|   \leq \biggm|\Acc - \frac{1}{2}\biggm| + \biggm|\frac{1}{2} - p\biggm|.\]
Integrating over $[0,1]$,
\begin{align*}
\ECE = \int_{p = 0}^{p=1} r(p) \biggm|\Acc - p\biggm| dp
&\leq \int_{p = 0}^{p=1} r(p) \left(\biggm|\Acc - \frac{1}{2}\biggm| \right)dp + \int_{p=0}^{p=1}r(p)\biggm|\frac{1}{2} - p\biggm|dp.
\end{align*}

 Note $P_1$ and $P_2$ partition $[0,1]$.  By definition of $P_1$ and $P_2$,
\begin{align}
\int_{p = 0}^{p=1} r(p) \left(\biggm|\Acc - \frac{1}{2}\biggm| \right)dp &= \int_{P_1}r(p)\left(\frac{1}{2} - \Acc \right)dp + \int_{P_2}r(p)\left(\Acc - \frac{1}{2}\right)dp \label{eq:first_int}\\
&= \frac{1}{2}\left(\int_{P_1}r(p)dp - \int_{P_2}r(p)dp\right) - \int_{P_1} r(p) \Acc dp + \int_{P_2}r(p)\Acc dp. \label{eq:second_int}
\end{align}

By \cref{thm:cls_lwr}, the accuracy $\Acc$ on any fiber of $p$ is bounded above by $1 - \int_{F_p}k_p(Gx) dx$. Combining this bound with the bounds defining $P_1$ and $P_2$ yields,
\begin{align*}
0 < \Acc &< \min\left(1 - \int_{F_p}k_p(Gx) dx,\frac{1}{2}\right) \qquad \forall p \in P_1.\\
\frac{1}{2} < \Acc &< \left(1 - \int_{F_p}k_p(Gx) dx\right) \qquad \forall p \in P_2.
\end{align*}

Observe that the upper bound for ECE is determined by the upper bound of \cref{eq:first_int} and \cref{eq:second_int} and is related to the accuracy of $h$ by the last two integrals in  \cref{eq:second_int}. In particular, the model $h$ that maximizes ECE satisfies $\Acc = 0$ on $P_1$ and $\Acc = 1 - \int_{F_p}k(Gx,p) dx$ on $P_2$. Substituting these values into  \cref{eq:second_int} gives upper bound 
\[ 
\left[\frac{1}{2}\left(\int_{P_1}r(p)dp - \int_{P_2}r(p)dp\right) + \int_{P_2} r(p)dp - \int_{P_2}r(p)\int_{F_p}k_p(Gx) dxdp\right] + \int_{p=0}^{p=1}r(p)\left|\frac{1}{2} - p\right|dp
\] 
which simplifies to 
\[
\frac{1}{2}  + \int_{p=0}^{p=1}r(p)\left|\frac{1}{2} - p\right|dp -  \int_{P_2}r(p)\int_{F_p}k_p(Gx) dxdp.
\]
Finally, \begin{align*}
 -  \int_{P_2}r(p)\int_{F_p}k_p(Gx) dxdp &=  - \int_{P_2}r(p)\int_{F_p}\underset{y\in Y}{\min}\int_{Gx}q_p(z)\mathds{1}(f(z) \neq y) dzdxdp\\
&\leq  - \int_{P_2}r(p)\int_{F_p}\underset{y\in Y}{\min}\int_{Gx}q(z)\mathds{1}(f(z) \neq y) dzdxdp\\
&\leq  - \int_{P_2}r(p)\underset{y\in Y}{\min}\int_{Gx^*}q(z)\mathds{1}(f(z) \neq y) dzdp\\
&=  -  k(Gx^*)\int_{P_2}r(p) dp 
\end{align*}
and so $\ECE \leq \frac{1}{2} + \int_0^{1}r(p)|\frac{1}{2} - p|dp -  k(Gx^*)\int_{P_2}r(p) dp $. This completes the proof. 
\end{proof}

This bound is non-vacuous.  That is, the upper bound is tighter than $1$ since it accounts for the error caused by incorrect invariance along the subset of fibers where accuracy is at least $50\%$. By considering the orbit with the lowest total dissent, we can compute an upper bound that is tighter than $1$ even without knowing the error lower bound on each fiber, i.e., $\int_{F_p} k_p(Gx) dx$, or the fibers themselves $\mathcal{F}_p$. In other words, if we consider all of the orbits with incorrect invariance, then the ECE upper bound is only as tight as the smallest orbit-wise error bound will allow. Comparing with \cref{thm:inc_bound}, we see that, under the invariance assumption, the upper bound decreases by $k(Gx^*)\int_{P_2}r(p)dp$. In other words, the bound is tight when there is nontrivial dissent on high mass orbits within fibers where the accuracy is greater than or equal to $1/2$. Conversely, the bound becomes trivial for models with correct invariance, or if the partition $P_1$ is a much larger subset than $P_2$.

One tradeoff this bound makes is that it is in terms of $k(Gx^*)$, which only considers error along one orbit. If we know which data points are in each fiber of $h_P$, then we can tighten the bound. 

\begin{cor} \label{cor:special_m}
Define $m = \underset{p \in [0,1]}{\min} \int_{F_p}k_p(Gx)$. Then $\ECE \leq \frac{1}{2} + \int_0^{1}r(p)|\frac{1}{2} - p|dp -  m\int_{P_2}r(p) dp $.
\end{cor}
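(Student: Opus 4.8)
The plan is to reuse the intermediate upper bound established midway through the proof of \cref{thm:inc_bound} and then replace the final orbit-level estimate with a fiber-level one. Recall that in proving \cref{thm:inc_bound}, before specializing to the single orbit $Gx^*$, the argument already establishes
\[
\ECE \leq \frac{1}{2} + \int_0^{1}r(p)\left|\frac{1}{2} - p\right|dp - \int_{P_2}r(p)\int_{F_p}k_p(Gx)\,dx\,dp,
\]
obtained by inserting the calibration-maximizing model with $\Acc = 0$ on $P_1$ and $\Acc = 1 - \int_{F_p}k_p(Gx)\,dx$ on $P_2$ into \cref{eq:second_int}. The corollary follows by bounding this last term directly rather than passing all the way through $k(Gx^*)$.

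First I would observe that by the definition $m = \min_{p \in [0,1]}\int_{F_p}k_p(Gx)\,dx$, we have $\int_{F_p}k_p(Gx)\,dx \geq m$ for every $p \in P_2$. Multiplying by the nonnegative weight $r(p)$ and integrating over $P_2$ gives
\[
\int_{P_2}r(p)\int_{F_p}k_p(Gx)\,dx\,dp \geq m\int_{P_2}r(p)\,dp,
\]
so negating yields $-\int_{P_2}r(p)\int_{F_p}k_p(Gx)\,dx\,dp \leq -m\int_{P_2}r(p)\,dp$. Substituting into the intermediate bound immediately produces the claimed inequality
\[
\ECE \leq \frac{1}{2} + \int_0^{1}r(p)\left|\frac{1}{2} - p\right|dp - m\int_{P_2}r(p)\,dp.
\]

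Since the chain of inequalities in \cref{thm:inc_bound} already shows $\int_{F_p}k_p(Gx)\,dx \geq k(Gx^*)$ for each $p$, we also obtain $m \geq k(Gx^*)$, confirming that this bound is at least as tight as the one in \cref{thm:inc_bound} — which is the entire point of the corollary. No genuine obstacle arises here; the only technical care needed is to ensure the minimum defining $m$ is attained, or otherwise to read it as an infimum. This is unproblematic under \cref{assumption:assume}, since the fiber-integrated dissent $p \mapsto \int_{F_p}k_p(Gx)\,dx$ is a well-defined nonnegative function of $p$. In short, the corollary is \cref{thm:inc_bound} with the final orbit-level relaxation omitted, trading sharper per-fiber control of the dissent for the requirement that one knows which data points populate each fiber of $h_P$.
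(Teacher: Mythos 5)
Your proof is correct and follows essentially the same route as the paper's: both start from the intermediate bound in the proof of \cref{thm:inc_bound} (before the relaxation to the single orbit $Gx^*$) and simply replace the fiber-wise integrated dissent $\int_{F_p}k_p(Gx)\,dx$ by its minimum $m$ over $p$, using nonnegativity of $r(p)$. Your additional observation that $m \geq k(Gx^*)$, so the corollary's bound is at least as tight, matches the paper's surrounding discussion and is a nice confirmation, but no new idea is involved.
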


\begin{proof}
We compute $\frac{1}{2}  +\int_0^{1}r(p)|\frac{1}{2} - p|dp -  \int_{p_2}r(p)\int_{F_p}k_p(Gx) dxdp  \leq \frac{1}{2} + \int_0^{1}r(p)|\frac{1}{2} - p|dp -  m\int_{p_2}r(p) dp$.
\end{proof}

A key subtlety in the proof of \cref{cor:special_m} is that $m$ is a minimum over error lower bounds defined on fibers of $[0,1]$ and not orbits. This is stated formally in \cref{rmk:fib}.

\begin{rmk} \label{rmk:fib}
By assumption of invariance on $h_P$, the fibers of $[0,1]$ contain entire orbits. The integrated total dissent $\int_{F_p}k_p(Gx)dx$ is defined on the collection of orbits where the confidence is always given by $h_p(x_p) = p$, but the label $h_Y(x_p)$ itself may vary. This is possible because points $x_{p_1}$ and $x_{p_2}$ may belong to distinct orbits which map to different distinct labels $y_1$ and $y_2$ under $h_Y$ but map to the same confidence $p$ under $h_P$.
\end{rmk}

We note that for the special case of binary classification, an accuracy of $50\%$ is the minimum accuracy on each fiber. If the accuracy of a classifier is less than $50\%$ on each fiber, we can construct a classifier that simply chooses the opposite label to improve its accuracy so that it is accurate over $50\%$ of the time.

\begin{cor} \label{cor:binary_cor} Assume $|Y| = 2$. ECE is bounded above by $1-k(Gx^*)$. In the special case when we can compute $m = \underset{p \in [0,1]}{\min} \int_{F_p}k_p(Gx)$, ECE is bounded above by $1-m$.
\end{cor}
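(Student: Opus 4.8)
The plan is to obtain both inequalities as direct specializations of \cref{thm:inc_bound} and \cref{cor:special_m}, using the one feature peculiar to binary classification: on every fiber the accuracy can be taken to be at least $1/2$. I would first recall the two bounds already established, namely
\[
\ECE \leq \tfrac{1}{2} + \int_0^1 r(p)\left|\tfrac{1}{2} - p\right|dp - k(Gx^*)\int_{P_2}r(p)\,dp
\]
from \cref{thm:inc_bound}, together with the identical statement with $k(Gx^*)$ replaced by $m$ from \cref{cor:special_m}. Everything then reduces to evaluating the two $p$-integrals under the binary hypothesis.

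The crucial step is to argue that $P_2 = [0,1]$ up to an $r$-null set, so that $\int_{P_2} r(p)\,dp = 1$. When $|Y| = 2$, the minimum accuracy attainable on any fiber is $1/2$: given a classifier with $\Acc < 1/2$ on some fiber, the classifier that predicts the opposite label there raises the accuracy to $1 - \Acc > 1/2$. Because $h_P$ is $G$-invariant, its fibers $\mathcal{F}_p$ are $G$-invariant, so this relabeling acts at the level of whole fibers and preserves both invariance of $h_P$ and the push-forward density $r$; moreover it cannot change the total dissent $k(Gx^*)$, which is a property of the ground truth $f$ alone. Hence in bounding the worst-case ECE we may assume $\Acc \geq 1/2$ on every fiber, i.e. $P_1$ is $r$-null and $P_2 = [0,1]$, giving $\int_{P_2} r(p)\,dp = \int_0^1 r(p)\,dp = 1$.

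It then remains only to bound $\int_0^1 r(p)\left|\tfrac{1}{2} - p\right|dp$. Since $\left|\tfrac{1}{2} - p\right| \leq \tfrac{1}{2}$ for all $p \in [0,1]$ and $r$ integrates to $1$, this integral is at most $\tfrac{1}{2}$. Substituting $\int_{P_2} r = 1$ and this estimate into the two displayed bounds yields $\ECE \leq \tfrac{1}{2} + \tfrac{1}{2} - k(Gx^*) = 1 - k(Gx^*)$, and likewise $\ECE \leq 1 - m$ from \cref{cor:special_m}.

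I expect the only genuine obstacle to be the relabeling argument used to force $P_2 = [0,1]$: one must verify that flipping the predicted label is a legitimate move within the invariant hypothesis class — that it leaves $h_P$, its fibers, and the density $r$ intact while leaving $k(Gx^*)$, a function of $f$ only, unchanged — so that the reduction to $\Acc \geq 1/2$ does not quietly alter any quantity appearing in the bound. Once this normalization is justified, the two integral estimates are immediate.
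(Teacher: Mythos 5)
Your proposal is correct and follows essentially the same route as the paper: specialize \cref{thm:inc_bound} and \cref{cor:special_m} by noting that $\int_0^1 r(p)\left|\tfrac{1}{2}-p\right|dp \leq \tfrac{1}{2}$ and that binary classification forces $\Acc \geq \tfrac{1}{2}$ on every fiber (via the label-flipping normalization, which the paper states in the remark immediately preceding the corollary), so that $\int_{P_2} r(p)\,dp = 1$. Your explicit check that the relabeling preserves $h_P$, its fibers, the push-forward density $r$, and $k(Gx^*)$ is a more careful justification of what the paper compresses into the phrase ``by assumption,'' but it is the same argument.
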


\begin{proof}
Note that $\int_0^{1}r(p)|\frac{1}{2} - p|dp$ is bounded above by $\frac{1}{2}$. We have $\int_{P_2}r(p)dp = \int_0^1r(p)dp = 1$ by assumption. Substituting these values into \cref{thm:inc_bound} and \cref{cor:special_m} completes the proof.
\end{proof}

\subsection{Improved Bounds for bi-Lipschitz Invariant Functions}
Notice that the upper bounds in \cref{prop:naive} and \cref{thm:inc_bound} are in terms of $r(p)$ but this density is not in general easily derivable from $q(x)$. In order to express each bound in terms of $q(x)$, we introduce extra assumptions on $h_P$.

From \cite{hörmander2015analysis}, we have that if $h_P(x)$ is continuously differentiable and has gradient nowhere $0$, then
\begin{align}
  r(p) = \int_Xq(x)\delta(p - h_P(x))dx &= \int_{\mathcal{F}_p}\frac{1}{|\nabla h_P(x)|}q(x)dx_p \label{eq:reparam_density}
\end{align}
where $\delta$ is the Dirac-Delta distribution. 

To attain an upper bound on ECE independent of $h$, we find an upper bound on $\frac{1}{|\nabla h_P(x) |}$. This is achievable if $h_P(x)$ is bi-Lipschitz, as defined below. 

\begin{defn}
Given metric spaces $(X, d_X)$ and $(Y, d_Y)$, a function $f: X \to Y$ is \textit{(upper) Lipschitz continuous} if there exists a constant $K \geq 0$ such that for all $x_1$, $x_2 \in X$,  
\begin{equation*}
   d_Y(f(x_1), f(x_2))  \leq Kd_X(x_1, x_2).
\end{equation*}
Furthermore, a function is $(K_1, K_2)$-\textit{bi-Lipschitz continuous} if it is lower Lipschitz and upper Lipschitz, i.e.,  
\begin{equation*}
\frac{1}{K_2}d_X(x_1, x_2) \leq d_Y(f(x_1), f(x_2))  \leq K_1d_X(x_1, x_2).
\end{equation*}
\end{defn}

Taking the limit $x_1 \to x_2$ shows the Lipschitz constant $K$ bounds the gradient of $f$. If the function is bi-Lipschitz, then $K_1$ bounds the gradient and $K_2$ bounds the reciprocal of the gradient. In practice, these Lipschitz constants may be very large for arbitrary neural networks, but can be controlled by architectural considerations such as spectral normalization \citep[e.g.][]{behrmann2019invertible, chen2019residual}.

\begin{proposition}\label{prop:bilipschitzbounds}
Assume $h_P(x)$ is differentiable, has gradient nowhere $0$, and is $(K_1, K_2)$ bi-Lipschitz continuous. Let $Gx^\diamond$ be the orbit with the least integrated probability density, i.e., $x^\diamond = \underset{x \in X}{\arg \min} \int_{Gx}q(x)dx$. 
Then
\[
\ECE \leq \frac{1}{2} + \frac{K_2}{4} + \min \left\{ 0, - k(Gx^*)K_2\int_{P_2}dp\int_{Gx^\diamond}q(x)dx \right\}.
\]
\end{proposition}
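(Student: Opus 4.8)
The plan is to start from the upper bound of \cref{thm:inc_bound},
\[
\ECE \le \tfrac{1}{2} + \int_0^1 r(p)\bigl|\tfrac{1}{2} - p\bigr|\,dp - k(Gx^*)\int_{P_2} r(p)\,dp,
\]
and to eliminate the push-forward density $r(p)$ in favor of $q(x)$ and the Lipschitz constants using the reparametrization \cref{eq:reparam_density}, namely $r(p) = \int_{\mathcal{F}_p}\tfrac{1}{|\nabla h_P(x)|}q(x)\,dx_p$. The crucial observation is that the two occurrences of $r$ must be estimated in \emph{opposite} directions: the additive integral $\int_0^1 r(p)|\tfrac{1}{2}-p|\,dp$ has to be bounded from above, while the subtracted mass $\int_{P_2} r(p)\,dp$ has to be bounded from below, since it carries a negative sign and over-subtracting would break the inequality.

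For the additive term I would substitute \cref{eq:reparam_density} and apply the upper bi-Lipschitz estimate. Since the definition gives $\tfrac{1}{K_2}\le |\nabla h_P(x)| \le K_1$, we have $\tfrac{1}{|\nabla h_P(x)|}\le K_2$, so that $r(p)\le K_2\int_{\mathcal{F}_p}q(x)\,dx_p \le K_2$ once the fiber-integrated density is controlled by $1$. Combining this with the elementary identity $\int_0^1 |\tfrac{1}{2}-p|\,dp = \tfrac{1}{4}$ yields $\int_0^1 r(p)|\tfrac{1}{2}-p|\,dp \le \tfrac{K_2}{4}$, which produces the $\tfrac{1}{2}+\tfrac{K_2}{4}$ part of the bound. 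This is the routine half of the argument.

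For the subtracted term the goal is a lower bound on $\int_{P_2} r(p)\,dp$ written purely through $q$. Here I would again use \cref{eq:reparam_density}, now invoking the bi-Lipschitz control on $\tfrac{1}{|\nabla h_P|}$, and then bound the fiber contribution on each $p\in P_2$ from below by the mass of the lightest orbit $Gx^\diamond$, where $x^\diamond = \argmin_{x\in X}\int_{Gx}q(x)\,dx$. This is a legitimate uniform lower bound because, by invariance, every fiber contains whole orbits (\cref{rmk:fib}), so no fiber can carry less mass than the minimal orbit. Multiplying through by $k(Gx^*)$ and the Lebesgue measure $\int_{P_2}\,dp$ of $P_2$ gives the candidate correction $k(Gx^*)K_2\int_{P_2}dp\int_{Gx^\diamond}q(x)\,dx$. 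Because this correction is only an improvement when it is genuinely negative, I would finally wrap it in $\min\{0,\cdot\}$, guaranteeing the resulting estimate is never weaker than the trivial $\tfrac{1}{2}+\tfrac{K_2}{4}$.

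The main obstacle I expect is keeping the two bi-Lipschitz directions straight while passing between the fiber surface measure $dx_p$ that appears in the co-area formula \cref{eq:reparam_density} and the orbit measure used to define $k(Gx)$ and the orbit mass $\int_{Gx^\diamond}q\,dx$. The additive term needs $\tfrac{1}{|\nabla h_P|}\le K_2$ together with the normalization of the fiber mass, while the subtracted term needs a matching lower estimate on the fiber mass via the minimal orbit; verifying that each of these measure-theoretic substitutions is valid under \cref{assumption:assume}, so that iterated integration over orbits and fundamental domains \emph{inside} each fiber is well defined, is the delicate step. Everything else reduces to the identity $\int_0^1|\tfrac{1}{2}-p|\,dp=\tfrac{1}{4}$ and monotonicity of the integral.
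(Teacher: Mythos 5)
Your proposal retraces the paper's proof almost step for step: start from \cref{thm:inc_bound}, substitute the reparametrized density \cref{eq:reparam_density} for $r(p)$, bound the additive term by $\tfrac{K_2}{4}$ via $\tfrac{1}{|\nabla h_P(x)|}\le K_2$ together with $\int_{\mathcal{F}_p}q(x)\,dx_p\le 1$, and reduce the subtracted term to the mass of the lightest orbit $Gx^\diamond$ using the fact that fibers of an invariant $h_P$ contain whole orbits. This is exactly the paper's route.

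However, your own (correct) observation about estimate directions exposes a gap that your write-up does not close. You rightly insist that the subtracted mass $\int_{P_2}r(p)\,dp$ must be bounded from \emph{below}. A lower bound on $r(p)=\int_{\mathcal{F}_p}\tfrac{1}{|\nabla h_P(x)|}q(x)\,dx_p$ requires a lower bound on $\tfrac{1}{|\nabla h_P|}$; from the bi-Lipschitz definition $\tfrac{1}{K_2}\le|\nabla h_P|\le K_1$, that lower bound is $\tfrac{1}{K_1}$, not $K_2$ --- indeed $K_2$ is precisely the \emph{upper} bound on $\tfrac{1}{|\nabla h_P|}$, which is what you correctly used for the additive term. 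Carrying your plan out consistently therefore produces a correction term $-k(Gx^*)\tfrac{1}{K_1}\int_{P_2}dp\int_{Gx^\diamond}q(x)\,dx$ rather than the claimed $K_2$ correction; writing $K_2$ there amounts to asserting $\tfrac{1}{|\nabla h_P|}\ge K_2$, which contradicts the definition unless $|\nabla h_P|$ is identically $1/K_2$. (For what it is worth, the paper's own proof substitutes $K_2$ uniformly into both terms without commenting on the sign of the subtracted one, so it has the same directional problem; your more careful framing makes the issue visible rather than hidden, but then silently copies the target constant instead of resolving it.) Two smaller points: the step $\int_{\mathcal{F}_p}q(x)\,dx_p\le\int_X q(x)\,dx$ compares a codimension-one surface integral to a volume integral and is itself delicate --- you flag this, and the paper uses it too --- and since $k(Gx^*)$, $K_2$ and both integrals are nonnegative, the $\min\{0,\cdot\}$ wrapper is automatic and does no additional work.
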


\begin{proof}
We start with the bound from \cref{thm:inc_bound}. Substituting the expression for $r(p)$ from  \cref{eq:reparam_density} into the upper bound from \cref{thm:inc_bound} and using the Lipschitz constant to bound the gradient gives $$\ECE \leq \frac{1}{2} + \int_0^{1}\int_{\mathcal{F}_p}K_2q(x)dx_p\left|\frac{1}{2} - p\right|dp  - k(Gx^*)\int_{P_2}\int_{\mathcal{F}_p}K_2q(x)dx_pdp.$$ We now relate this inequality to integrals over $X$ and $Gx^\diamond$ in order to remove the dependence on $h_p$. Notice,
\begin{equation*}
\int_0^{1}\int_{\mathcal{F}_p}K_2q(x)dx_p\left|\frac{1}{2} - p\right|dp \leq  \int_0^{1}\int_{X}K_2q(x)dx\left|\frac{1}{2} - p\right|dp =  \frac{K_2}{4}.
\end{equation*}
and
\begin{equation*}
 \int_{P_2}\int_{\mathcal{F}_p}K_2q(x)dx_pdp \geq 
  \int_{P_2}\int_{Gx^\diamond} K_2q(x)dxdp =
  K_2\int_{P_2}dp\int_{Gx^\diamond}q(x)dx. 
\end{equation*}
Thus, $$\ECE \leq \frac{1}{2} + \frac{K_2}{4} - k(Gx^*)K_2\int_{P_2}dp\int_{Gx^\diamond}q(x)dx.$$ The upper bound for \cref{prop:naive} can be derived in the same way or seen as the special case where $k(Gx^*) = 0$ or $P_2 = \emptyset$ and the improvement from \cref{thm:inc_bound} is vacuous.
\end{proof}

\begin{wrapfigure}{r}{0.3\textwidth} 
  \vspace{-\baselineskip}             
  \centering
    \includegraphics[width=\linewidth]{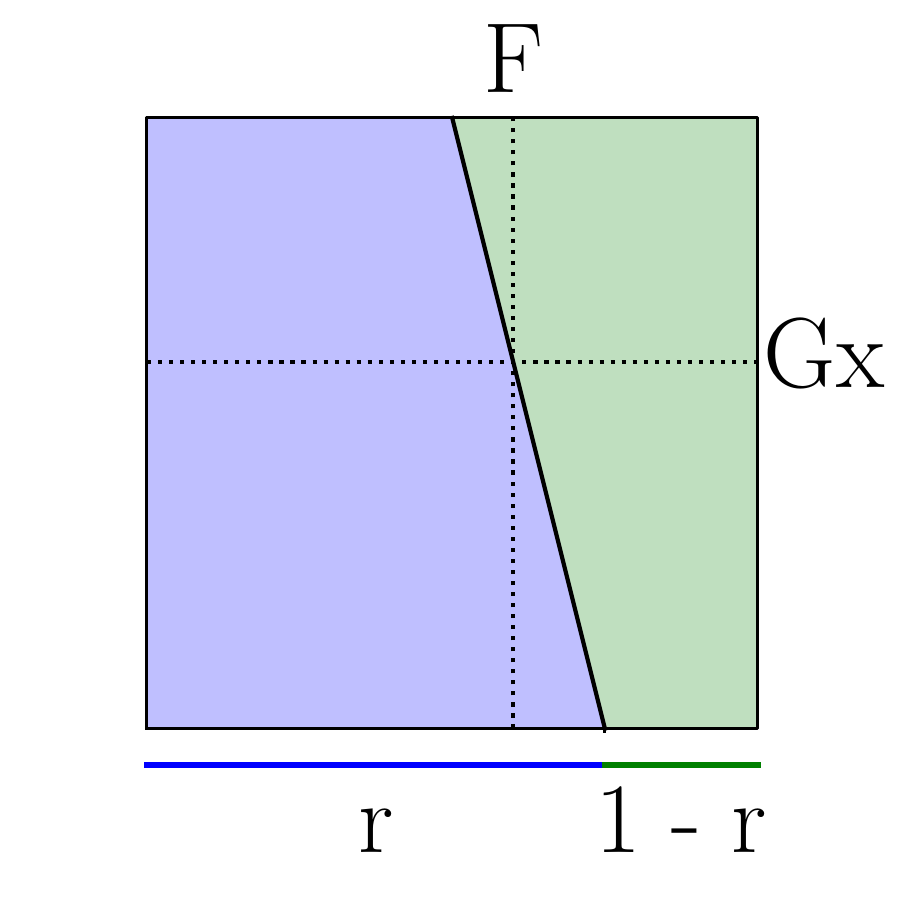}
    \caption{Binary Classification on a Unit Square with Translation Invariance. Blue and green represent true labels.}
    \label{fig:unitsquare}
    \vspace{-20px}
\end{wrapfigure}

As in \cref{cor:binary_cor}, the $\int_{P_2}dp$ term is unity in the case of binary classification. This enables us to further characterize the utility of the bound in the examples that follow. 

\subsection{Invariant Upper Bounds Examples} 
A natural question is how much tighter is the upper bound on ECE under the assumption of invariance (\cref{thm:inc_bound}) than the bound in the unconstrained case (\cref{prop:naive}). In the case $h_P(x)$ is bi-Lipschitz with constants $K_1$ and $K_2$,  \cref{prop:bilipschitzbounds} gives a relatively concrete answer; the gap is $ k(Gx^*)K_2\int_{P_2}dp\int_{Gx^\diamond}q(x)dx$.  In order to understand the gap when the function is not assumed to be bi-Lipshitz, we consider several examples with specific $r(p)$. The examples illuminate how the tendency of a model to be uncertain can tighten the bound for functions both with and without invariance.
Specifically, \cref{ex:comparison} considers a binary classification task where $r(p)$ is a truncated normal distribution. We consider means $\mu$ that correspond to low, medium, and high confidence.

\begin{example}[Upper Bound Comparison for Binary Classification on the Unit Square] \label{ex:comparison} Let $ X = \mathbb{R}^2$ with density $p(x,y) = 1$ if $0 \leq x \leq 1$ and $0 \leq y \leq 1$ and $p(x,y) = 0$ otherwise. The function $h$ is invariant to translations in the $x-$direction.  

Let us now consider the unconstrained bound for three different example distributions $r(p)$, each corresponding to low, medium, and high confidence. 

Recall that the truncated normal distribution with mean $\mu$, variance $\sigma^2$, and bounds $(a,b)$ has a probability density
\begin{equation*}
f(x; \mu, \sigma, a,b) = \frac{1}{\sigma}\frac{\varphi\left(\frac{x - \mu}{\sigma}\right)}{\Phi\left(\frac{b - \mu}{\sigma}\right) - \Phi\left(\frac{a - \mu}{\sigma}\right)}
\end{equation*}
for $ a \leq x \leq b$ and $f(x) = 0$ otherwise, where 
\begin{equation*}
 \varphi(\xi) = \frac{1}{\sqrt{2\pi}}e^{-\frac{1}{2}\xi ^2},   \qquad \Phi(z) = \frac{1}{2}\left(1+\text{erf} \left(\frac{z}{\sqrt{2}}\right)\right)  .
\end{equation*}

Set $\sigma = 0.1, a = 0$, and $b=1$. For $\mu = 0.5$, the ECE upper bound is $\approx 0.58$ by \cref{thm:inc_bound}. For $\mu = 0.25$ or for $\mu = 0.75$ the bound is $\approx 0.75$.

Let us now consider the bound constrained by invariance with the same Truncated Normal Densities. As seen in \cref{fig:unitsquare}, the orbit with the smallest integrated total dissent is the one on the $x-$axis, $k(Gx^*) = 1-r$.\footnote{Integrated density over the line segment $[0,1] \subset [0,1] \times [0,1]$ requires more care if we are reasoning about the measures directly, but can be made precise using the disintegration theorem \citep{pachl1978disintegration}} Since this task is binary classification, we have $P_2 = [0,1]$ and $\int_{P_2}r(p)dp = 1$. The upper bound on ECE, using the assumption of incorrect invariance, decreases by $(1 - r)$ regardless of the mean of $r(p)$. 

For functions with and without invariance, we see that the upper bound is tighter when the confidence is concentrated around $50\%$, which can be interpreted as the model ``hedging its bets.'' That is, the model minimizes calibration error by outputting a confidence value close to the mean possible value. For invariant functions, the bound is always tightened proportionally to the orbit with the highest accuracy, regardless of the distribution $r(p)$. 
\end{example}

The following example illustrates how to use the bound in \cref{cor:binary_cor} and how tight it is. That is, we are in the special case when we know which elements $x$ are in a given fiber $\mathcal{F}_p$ and the task is binary classification.

\begin{example}[Binary Classification with Reflection Invariance] \label{ex:reflect}
Consider the unit circle $S^1$ embedded in $\mathbb{R}^2$. Along $S^1$ we have $20$ points that are each assigned either a blue or orange label. The cyclic group $C_2$ acts on elements of $S^1$ by reflecting them over the $x$-axis and trivially on the labels. As shown in  \cref{fig:circle}, each half of the circle contains at least one orbit with incorrect invariance, though the model is still able to correctly classify $90\%$ of the data on the right half. On the right half, where $x > 0$, assume $h_p$ has confidence $p_1$ on each of the $5$ orbits. Assume $h_p$ has confidence $p_2$ on each of the $5$ orbits where $x < 0$.

If $p_1 \neq p_2$, then there are two fibers $F_{p_1}$ and $F_{p_2}$. (The fiber $F_{p_1}$ is shown with diagonal lines in \cref{fig:circle}a.) On the left half, the error $\int_{F_p}k_p(Gx)dx \geq 0.5$, and on the right half, $\int_{F_p}k_p(Gx)dx \geq 0.1$. Taking the minimum and invoking \cref{cor:binary_cor}, we find that $m = 0.1$ and ECE is bounded above by $1 - 0.1 = 0.9$.

If $p_1 = p_2$, then there is only one fiber $F_{p_1}$, as in \cref{fig:circle}b. The error over the whole dataset is bounded from below by $0.5 (0.1 + 0.5) = 0.3$, and by \cref{cor:binary_cor} ECE is bounded above by $1 - 0.3 = 0.7$. 
\end{example}

Having only two confidence values may be reasonable in real world settings if there is a prevailing noise (e.g., shadow, camera artifacts, background patterns) on just one side of the field of view of a camera, leading to approximately two different confidence regions in our model output. This setup is also conceptually similar to what we will see in \cref{sec:swiss}, where we will show experimentally how incorrect reflection invariance affects model calibration.

\begin{example}[Binary Classification with Rotation Invariance]
Consider the same dataset as in  \cref{ex:reflect}, but assume $h$ has rotation invariance to SO(2) instead of reflection invariance. In this case, there is only one orbit and so $h_Y$ and $h_P$ each take one value. There is only one fiber $F_p = X$, which is the entire dataset, as illustrated in \cref{fig:circle}c. The classification error is  minimized when $h$ predicts each label as blue, since there are more blue labels then orange labels in our dataset. This gives an error lower bound of $0.3$, and an ECE upper bound of $0.7$. This is the same result for reflection invariance in the special case where the confidences are the same across each fiber. 
\end{example}

\begin{figure}[!htb] 
  \centering
  \includegraphics[width=1.0\linewidth]{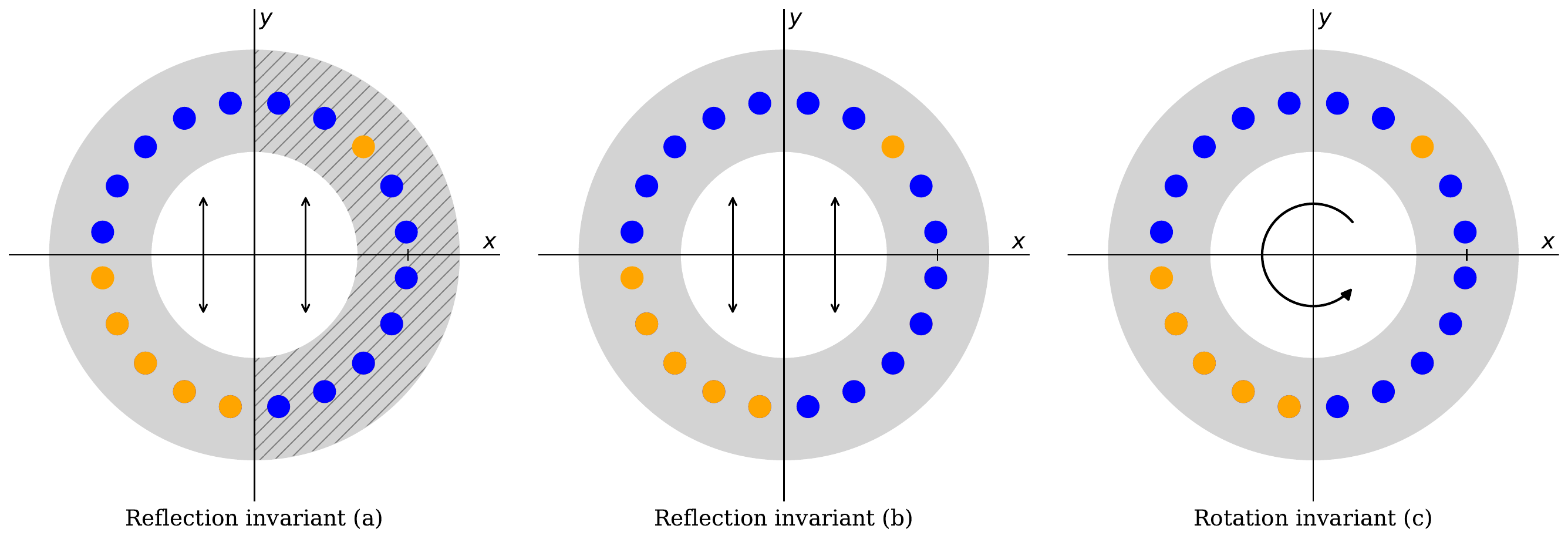}
  \caption{Dataset with pointwise incorrect invariance. The first two panels indicate that the model is invariant to the action of $C_2$, which performs reflections over the $x$-axis. The tick mark indicates the $x=1$ point. The presence of diagonal lines in the first panel depicts that the model has $2$ confidence fibers. The last panel indicates that the model has rotation invariance. Colors represent labels in all panels. }
  \label{fig:circle}
\end{figure}

\subsection{ECE Lower Bounds} The assumption of invariance can also be used to obtain an ECE lower bound. The trivial lower bound of $0$ is obtained when $\Acc = p$ for all $p$. However, if $h$ has an accuracy lower bound $m$, then $\Acc  \neq p$ for $p < m$, resulting in a tighter bound.

In order to derive the ECE lower bound, we first need to introduce a classification error upper bound. This is defined in terms of the minority label, the label that causes the maximal error on a given orbit $Gx$ (analogous to the majority label that minimizes error on a given orbit in \cite{wang2024general}). The error on this orbit is called the \textit{minority label total dissent.}

\begin{defn}[Minority Label Total Dissent] For an orbit $Gx$ of $x \in X$, the minority label total dissent $\kappa (Gx)$ is the integrated probability density of the elements in the orbit $Gx$ having a different label than the minority label:
\begin{equation*}
    \kappa (Gx) = \underset{y \in Y}{\max}\int_{Gx}q(z) \mathds{1} (f(z) \neq y))dz.
\end{equation*}
\end{defn}

We prove in \cref{prop:cls_acc_up} that the total classification error is bounded above by the integrated minority label total dissent. 

\begin{proposition}
\label{prop:cls_acc_up} The classification error is bounded above $\text{err}_{\text{cls}}(h) \leq \int_F \kappa (Gx)dx.$ Equivalently, the accuracy is  bounded below by $1 - \int_F \kappa (Gx)dx$. 
\end{proposition}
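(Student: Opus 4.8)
The plan is to mirror the proof of the lower bound \cref{thm:cls_lwr}, but to replace the majority label (which \emph{minimizes} the per-orbit error) with the minority label (which \emph{maximizes} it). The structural fact driving both bounds is that a $G$-invariant $h$ must commit to a single label on each entire orbit: since $h_Y$ is constant on $Gx$, the error contributed by that orbit is one of the quantities $\int_{Gx}q(z)\mathds{1}(f(z)\neq y)dz$ as $y$ ranges over $Y$. The lower bound arises because this is at least the minimum (i.e. $k(Gx)$); symmetrically, the upper bound arises because it is at most the maximum, which is by definition $\kappa(Gx)$.

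First I would decompose the total error as an iterated integral over the fundamental domain $F$ and the orbits $Gx$, invoking \cref{assumption:assume} so that this splitting is well defined:
\[
\text{err}_{\text{cls}}(h) = \int_X q(x)\mathds{1}(f(x)\neq h_Y(x))\,dx = \int_F \int_{Gx} q(z)\mathds{1}(f(z)\neq h_Y(z))\,dz\,dx.
\]
Here I would take care to integrate against $q$ rather than $p$, matching the definition of the minority label total dissent. Next I would apply invariance of $h$, so that $h_Y(z)=h_Y(x)$ for every $z\in Gx$, making the inner integral the error of committing to the single label $h_Y(x)$ on the orbit. Bounding this single-label error by the worst choice gives
\[
\int_{Gx} q(z)\mathds{1}(f(z)\neq h_Y(x))\,dz \;\leq\; \max_{y\in Y}\int_{Gx} q(z)\mathds{1}(f(z)\neq y)\,dz \;=\; \kappa(Gx),
\]
and integrating over $F$ yields $\text{err}_{\text{cls}}(h)\leq\int_F \kappa(Gx)\,dx$. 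For the equivalent accuracy statement, I would use that $q$ integrates to $1$, so that accuracy equals $1-\text{err}_{\text{cls}}(h)$, giving the lower bound $1-\int_F \kappa(Gx)\,dx$.

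The argument itself is short and is a direct dual of the established lower bound; the only genuine obstacle is the measure-theoretic bookkeeping in the iterated integration step. I would need to confirm that the fundamental domain decomposition and the Hausdorff measures on the orbits are consistent with \cref{assumption:assume} (in particular that the pairwise overlaps of conjugates of $F$ are null, so no orbit mass is double counted), and to check the normalization carefully so that the passage between error and accuracy is exact rather than merely approximate. Once this decomposition is granted, the replacement of $\min$ by $\max$ makes the bound essentially immediate.
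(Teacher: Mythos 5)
Your proposal is correct and follows essentially the same argument as the paper: decompose $\text{err}_{\text{cls}}(h)$ as an iterated integral over the fundamental domain $F$ and the orbits, then bound the per-orbit error by the worst single-label error $\kappa(Gx)$. The only difference is cosmetic --- you make explicit the step where invariance forces $h_Y$ to be constant on each orbit (which is what licenses comparing against a single label $y$), whereas the paper uses this implicitly by writing $h(z)$ inside the orbit integral and bounding by the maximum over $y$.
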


\begin{proof} We compute
\begin{align*}
 \text{err}_{\text{cls}}(h)  &= \int_Xq(x)\mathds{1}(f(x) \neq h(x))dx\\
 &= \int_F \int_{Gx}q(z)\mathds{1}(f(z) \neq h(z))dzdx\\
 &\leq \int_F \underset{y \in Y}{\max}\int_{Gx}q(z) \mathds{1} (f(z) \neq y))dx = \int_F \kappa (Gx)dx.
\end{align*}
\end{proof}

\begin{nt} \label{nt:note}
The bound in \cref{prop:cls_acc_up} is vacuous if all orbits omit at least one label.  In that case, the total minority dissent $\kappa(Gx)$ for each orbit is 0.
\end{nt} 

\begin{example}
The task is to predict 1 of 12 classes for a series of 12 points distributed along $S^1$. There is a bijection between classes and the data points. Assume a rotation invariant classification model which can only output one class for all the points. Here, the minority label can be any $1$ of the $12$ labels, resulting in an error of $1/12$.\footnote{i.e., a broken clock is right twice a day}
\end{example}

We use the accuracy lower bound from \cref{prop:cls_acc_up} to give a lower bound on the ECE.

\begin{thm}
\label{thm:ece_lb}
Denote the fundamental domain of $G$ in $\mathcal{F}_p$ as $F_p$, where $\mathcal{F}_p$ is as defined in \cref{thm:inc_bound}. As in  \cref{thm:inc_bound}, the total minority dissent on an orbit in a fiber $\mathcal{F}_p$ is denoted $\kappa_p(Gx)$ and is defined in terms of the renormalized density $q_p(x) = q(x)/\int_{\mathcal{F}_p}q(x)dx$. Define the minimum fiber-wise classification accuracy as $m = \underset{p \in [0,1]}{\min} \left(1 - \int_{F_p} \kappa_p(Gx)dx \right)$. Then ECE is bounded below by $\int_0^m r(p) ( m - p)dp$.
\end{thm}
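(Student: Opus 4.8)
The plan is to reduce everything to a fiber-wise accuracy lower bound and then exploit the fact that the calibration integrand $|\Acc - p|$ cannot vanish once $p$ drops below the guaranteed accuracy. Recall that $\ECE = \int_0^1 r(p)\,|\Acc - p|\,dp$, where $\Acc = \mathbb{P}(f = h_Y \mid h_P = p)$ is the true accuracy on the fiber $\mathcal{F}_p$. The core observation is that \cref{prop:cls_acc_up} is not special to the whole domain $X$: it applies verbatim to the action of $G$ on any fiber $\mathcal{F}_p$ on which $h$ is invariant. Under the setup inherited from \cref{thm:inc_bound} (invariance of $h$ on each fiber) and \cref{assumption:assume} (which makes iterated integration over $F_p$ and the orbits $Gx \subseteq \mathcal{F}_p$ well defined), I would run the argument of \cref{prop:cls_acc_up} with the renormalized density $q_p(x) = q(x)/\int_{\mathcal{F}_p}q(x)dx$ to conclude that the classification error restricted to $\mathcal{F}_p$ is bounded above by $\int_{F_p}\kappa_p(Gx)\,dx$, and hence the fiber accuracy satisfies $\Acc \geq 1 - \int_{F_p}\kappa_p(Gx)\,dx$.

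With the per-fiber bound in hand, the next step is immediate: since $m = \min_{p\in[0,1]}\bigl(1 - \int_{F_p}\kappa_p(Gx)\,dx\bigr)$ is a minimum over exactly these fiber-wise accuracy lower bounds, we obtain $\Acc \geq m$ for \emph{every} $p \in [0,1]$. Consequently, for $p \in [0, m)$ we have $\Acc - p \geq m - p > 0$, so the absolute value opens with a definite sign and $|\Acc - p| = \Acc - p \geq m - p$. I would then restrict the calibration integral to $[0,m]$, discarding the tail over $[m,1]$ since $r(p) \geq 0$ and $|\Acc - p|\geq 0$ there, yielding
\[
\ECE = \int_0^1 r(p)\,|\Acc - p|\,dp \;\geq\; \int_0^m r(p)\,|\Acc - p|\,dp \;\geq\; \int_0^m r(p)\,(m - p)\,dp,
\]
which is the claimed lower bound.

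The main obstacle I anticipate is justifying the fiber-wise invocation of \cref{prop:cls_acc_up} rigorously, rather than the final chain of inequalities, which is routine. Two points need care here. First, one must verify that $\Acc$ genuinely equals one minus the $q_p$-weighted classification error on $\mathcal{F}_p$; this is precisely where the well-definedness discussion around \cref{eq:well_def} and \cref{ass:house} is used, so that conditioning on the measure-zero event $\{h_P = p\}$ is meaningful via the push-forward/disintegration structure. Second, the decomposition of $\mathcal{F}_p$ into $F_p$ and orbits $Gx$, and the associated $\kappa_p(Gx) = \max_{y\in Y}\int_{Gx}q_p(z)\mathds{1}(f(z)\neq y)\,dz$, must inherit the manifold and measure-zero-overlap hypotheses, which is exactly what \cref{assumption:assume} guarantees on each fiber. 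Once these technical preconditions are confirmed, the rest of the proof is a short monotonicity-and-sign argument.
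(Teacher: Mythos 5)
Your proposal is correct and takes essentially the same route as the paper's own proof: apply \cref{prop:cls_acc_up} fiber-wise with the renormalized density $q_p$ to get $\Acc \geq m$ for every $p$, then restrict the calibration integral to $[0,m]$ where $|\Acc - p| = \Acc - p \geq m - p$. The paper states this argument more tersely, while you additionally spell out the well-definedness preconditions (\cref{ass:house}, \cref{assumption:assume}) that the paper treats as inherited from the setup of \cref{thm:inc_bound}.
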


\begin{proof}
By \cref{prop:cls_acc_up}, the classification accuracy on each fiber is bounded below by $1 - \int_{F_p} \kappa_p(Gx)dx$. $\Acc$ is therefore bounded below by $m$.  Thus $\int_0^1 r(p) | \Acc - p|dp \geq \int_0^m r(p) | \Acc - p|dp \geq \int_0^m r(p) ( m - p)dp$ since $\Acc \geq m > p$ by integrating over $0 \leq p \leq m$.
\end{proof}

Recall that for ECE to be well defined, we assume $r(p)$ is nonzero on $[0,1]$. Therefore, $r(p) \neq 0$ anywhere on $[0,m]$ and the lower bound is strictly greater than $0$.

We give an accuracy lower bound $m'$ that does not depend on the fibers $\mathcal{F}_p$, which implicitly depend on $h_P$. Later, we will use this bound to express an ECE lower bound which is independent of $h_P$.

\begin{proposition} \label{prop:h_p_low}
Assume that $h_P$ is a continuously differentiable function and that its gradient is nowhere $0$. Define $x^* = \underset{x \in X}{\arg \min} \int_{z \in Gx}q(z)dz$ so that the orbit with the smallest integrated density is $Gx^*$. Let $p^* = \underset{p \in [0,1]}{\arg \min} (1- \int_{F_p}\kappa_p (Gx)dx)$ and let $m' = 1 - \frac{1}{\int_{Gx^*}q(z)dz}\int_F \kappa (Gx)dx$. Define $m$ as in \cref{thm:ece_lb}. ECE is bounded below by $\int_0^{m'} \int_{\mathcal{F}_p}\frac{1}{|\nabla h_P(x)|}q(x)dx_p({m'} - p)dp$. 
\end{proposition}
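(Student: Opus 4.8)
The plan is to reduce the claim to \cref{thm:ece_lb} by exhibiting $m'$ as a \emph{fiber-independent} lower bound on the minimum fiber-wise accuracy $m$ appearing there; that is, the whole argument hinges on the single inequality $m' \le m$. Granting this, the proof finishes exactly as in \cref{thm:ece_lb}: by \cref{prop:cls_acc_up} the accuracy satisfies $\Acc \ge m \ge m'$ on every fiber, so for $p \in [0,m']$ we have $|\Acc - p| = \Acc - p \ge m' - p$, whence
\[
\ECE = \int_0^1 r(p)\,\bigl|\Acc - p\bigr|\,dp \ \ge\ \int_0^{m'} r(p)\,(m' - p)\,dp .
\]
Substituting $r(p) = \int_{\mathcal F_p}\frac{1}{|\nabla h_P(x)|}q(x)\,dx_p$, which is legitimate precisely because $h_P$ is assumed continuously differentiable with nowhere-vanishing gradient (\cref{eq:reparam_density}), gives the stated bound. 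Equivalently, one may note that $t \mapsto \int_0^t r(p)(t-p)\,dp$ is non-decreasing, with derivative $\int_0^t r(p)\,dp \ge 0$, and apply this monotonicity to the conclusion of \cref{thm:ece_lb}.

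The content of the proposition is therefore the inequality $m' \le m$, which I would establish in three steps. Let $p^*$ be the confidence realizing the minimum, so $m = 1 - \int_{F_{p^*}}\kappa_{p^*}(Gx)\,dx$, and write $Q_{p^*} = \int_{\mathcal F_{p^*}}q(x)\,dx$ for the fiber mass. First, since $\kappa_{p^*}$ is built from the renormalized density $q_{p^*} = q/Q_{p^*}$, it factors as $\kappa_{p^*}(Gx) = \kappa(Gx)/Q_{p^*}$, where the numerator is the minority-label total dissent for the unnormalized density $q$. Second, by \cref{rmk:fib} the fiber $\mathcal F_{p^*}$ is a union of whole orbits, so it contains at least one orbit, each of integrated mass at least $\int_{Gx^*}q(z)\,dz$; hence $Q_{p^*} \ge \int_{Gx^*}q(z)\,dz$. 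Third, the orbits meeting $\mathcal F_{p^*}$ form a subcollection of all $G$-orbits in $X$, so integrating the nonnegative function $\kappa$ over the orbit representatives in the fiber is dominated by integrating it over all representatives, $\int_{F_{p^*}}\kappa(Gx)\,dx \le \int_F \kappa(Gx)\,dx$. Chaining the three facts,
\[
\int_{F_{p^*}}\kappa_{p^*}(Gx)\,dx = \frac{1}{Q_{p^*}}\int_{F_{p^*}}\kappa(Gx)\,dx \ \le\ \frac{1}{\int_{Gx^*}q(z)\,dz}\int_F \kappa(Gx)\,dx ,
\]
and subtracting both sides from $1$ yields $m \ge m'$.

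The main obstacle is the third step, the domain-restriction inequality $\int_{F_{p^*}}\kappa \le \int_F \kappa$. The subtlety is that the fiber fundamental domain $F_{p^*}$ lives in $\mathcal F_{p^*} = h_P^{-1}(p^*)$, a codimension-one level set, so it is one dimension lower than the global fundamental domain $F$; the two integrals are a priori of different Hausdorff dimension and cannot be compared by naive set inclusion. Making this rigorous requires working in the iterated-integration framework of \cref{assumption:assume}, treating $\kappa$ as a function on the orbit space and the gradient factor $1/|\nabla h_P|$ as the change of variables between $F$ and its foliation by the $F_p$; the normalization by $Q_{p^*}\ge \int_{Gx^*}q$ is exactly what absorbs this Jacobian so that the comparison goes through. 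I would also flag one caveat: when the global dissent is large enough that $m' \le 0$, the interval $[0,m']$ is empty and the asserted bound is vacuously true since $\ECE \ge 0$.
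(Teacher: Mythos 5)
Your proposal is correct and takes essentially the same route as the paper's proof: you establish $m' \le m$ by factoring the normalization constant $Q_{p^*}$ out of $\kappa_{p^*}$, bounding the fiber mass below by the smallest orbit mass $\int_{Gx^*}q(z)\,dz$ via \cref{rmk:fib}, and applying the domain restriction $\int_{F_{p^*}}\kappa(Gx)\,dx \le \int_F \kappa(Gx)\,dx$, then substitute $r(p) = \int_{\mathcal{F}_p}\frac{1}{|\nabla h_P(x)|}q(x)\,dx_p$ into the argument of \cref{thm:ece_lb}, exactly as the paper does. Your two added caveats --- the dimension mismatch between $F_{p^*}$ and $F$ in the domain-restriction step, and the vacuity of the bound when $m' \le 0$ --- concern subtleties the paper's own proof passes over silently, and they do not alter the approach.
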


\begin{proof}
As before, we have that $r(p) = \int_Xq(x)\delta(p - h_P(x))dx = \int_{\mathcal{F}_p}\frac{1}{|\nabla h_P(x)|}q(x)dx_p$. As in \cref{thm:inc_bound}, we note that the accuracy lower bound for each fiber must be computed in terms of the renormalized probabilities. By definition we have 
\begin{equation*}
m = \underset{p \in [0,1]}{\min} 1 - \int_{F_p} \kappa_p(Gx)dx = 1 - \int_{F_p^*}\kappa_p(Gx)dx .    
\end{equation*}
Next, we factor out the normalization constant on $\mathcal{F}_{p^*}$ and compare it to the integrated density on the orbit $Gx^*$. This avoids a dependence on $h_P$ in the bound.
\begin{align*}
 1 - \int_{F_p^*}\kappa_p(Gx)dx &=1 - \int_{F_p^*}\underset{y \in Y}{\max}\int_{Gx}\frac{q(z)}{\int_{\mathcal{F}_{p^*}}q(z)dz}\mathds{1}(f(z) \neq y)dzdx  \\
 &\geq1 - \frac{1}{\int_{Gx^*}q(z)dz}\int_{F_p^*}\kappa(Gx)dx.
\end{align*}
Noting that $\int_{F_p^*}\kappa(Gx)dx \leq \int_{F}\kappa(Gx)dx$ and recalling the definition of $m'$ shows that $m \geq m'$. So, ECE is bounded below by $\int_0^{m'} \int_{\mathcal{F}_p}\frac{1}{|\nabla h_P(x)|}q(x)dx_p({m'} - p)dp$.
\end{proof}

This proof indicates that the accuracy lower bound on the entire dataset, inversely weighted by the integrated probability of the least likely orbit, is less than the accuracy lower bound on any given fiber. This allows us to derive an accuracy lower bound $m'$ independent of the fibers $\mathcal{F}_p$. 

As with the upper bound in \cref{prop:bilipschitzbounds}, the lower bound on ECE is related to how quickly the function $h_P$ changes as a function of $x$. We can get a precise lower bound independent of $| \nabla h_P |$ if we have knowledge of the Lipschitz constant. 

\begin{proposition} \label{prop:lowest}
Assume $h_P$ is differentiable, has gradient nowhere $0$, and has Lipschitz constant $K$. Define $Gx^*$ and $m'$ as in  \cref{prop:h_p_low}. Then, ECE is bounded below by 
\begin{equation*}
    \int_0^{m'} \int_{Gx^*} \frac{q(x)}{K} dx(m'-p)dp.
\end{equation*}
\end{proposition}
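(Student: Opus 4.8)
The plan is to start from the fiber-wise lower bound already established in \cref{prop:h_p_low}, namely $\ECE \geq \int_0^{m'} \int_{\mathcal{F}_p}\frac{1}{|\nabla h_P(x)|}q(x)dx_p(m' - p)dp$, and to discharge the two quantities in it that still depend on $h_P$: the gradient weight $1/|\nabla h_P(x)|$ and the fiber domain $\mathcal{F}_p$. Both are handled by pointwise lower bounds on the inner integrand, which is legitimate because the outer weight $(m'-p)$ is nonnegative on $[0,m']$ and $q \geq 0$, so any lower bound on $\int_{\mathcal{F}_p}\frac{1}{|\nabla h_P(x)|}q(x)dx_p$ propagates to a lower bound on $\ECE$.

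First I would dispatch the gradient weight using the Lipschitz hypothesis. Since $h_P$ is differentiable with Lipschitz constant $K$, taking the limit $x_1 \to x_2$ in the Lipschitz inequality shows $|\nabla h_P(x)| \leq K$ for every $x$, hence $\frac{1}{|\nabla h_P(x)|} \geq \frac{1}{K}$ pointwise. Pulling the constant out gives $\int_{\mathcal{F}_p}\frac{1}{|\nabla h_P(x)|}q(x)dx_p \geq \frac{1}{K}\int_{\mathcal{F}_p}q(x)dx_p$. Next I would replace the fiber with the minimal orbit. Because $h_P$ is $G$-invariant, each fiber $\mathcal{F}_p$ is a union of entire orbits, and since $r(p) \neq 0$ throughout $[0,1]$ (the standing well-definedness assumption for ECE) each such fiber contains at least one orbit; as $Gx^*$ is the orbit of least integrated density, the fiber mass dominates the smallest orbit mass, $\int_{\mathcal{F}_p}q(x)dx_p \geq \int_{Gx^*}q(x)dx$. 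This is precisely the orbit-mass comparison used in the proof of \cref{prop:bilipschitzbounds} (there stated for $Gx^\diamond$). Chaining the two inequalities inside $\int_0^{m'}(\cdots)(m'-p)\,dp$ yields $\ECE \geq \int_0^{m'}\int_{Gx^*}\frac{q(x)}{K}dx\,(m'-p)dp$, since the resulting constant does not depend on $p$.

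The step I expect to be the main obstacle is the fiber-to-orbit comparison, not the Lipschitz estimate. The fiber $\mathcal{F}_p$ has codimension one whereas a generic orbit is lower dimensional, so $\int_{\mathcal{F}_p}q\,dx_p \geq \int_{Gx^*}q\,dx$ cannot follow from set inclusion alone; it relies on the iterated-integration framework of \cref{assumption:assume} to express the fiber integral as an integral of orbit masses over a fundamental domain of $\mathcal{F}_p$ and to guarantee that the orbit and fiber reference measures are compatible. I would therefore fix the same push-forward measure used to define $r(p)$ in \cref{ass:house}, so that the comparison of $\int_{\mathcal{F}_p}q\,dx_p$ with $\int_{Gx^*}q\,dx$ is genuinely meaningful and not subject to the Borel--Kolmogorov ambiguity flagged earlier; this is the point where I would be most careful, mirroring the treatment in \cref{prop:h_p_low} where the normalization $\int_{\mathcal{F}_{p^*}}q\,dz \geq \int_{Gx^*}q\,dz$ was invoked for the same reason.
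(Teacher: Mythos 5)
Your proposal is correct and takes essentially the same route as the paper's proof: start from the bound in \cref{prop:h_p_low}, use the Lipschitz constant to replace $\frac{1}{|\nabla h_P(x)|}$ with $\frac{1}{K}$, and then lower-bound the fiber mass by the mass of the smallest orbit via the iterated-integration decomposition $\int_{\mathcal{F}_p}\frac{1}{K}q(x)dx_p = \int_{F_p}\int_{z \in Gx}q(z)\frac{1}{K}\,dz\,dx_p$. The only cosmetic difference is that the paper makes the fiber-to-orbit comparison in two explicit steps (first to the smallest orbit $Gx_p^*$ within $F_p$, then to the global minimizer $Gx^*$), which is precisely the comparison you describe and justify.
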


\begin{proof}
 The ECE lower bound from \cref{prop:h_p_low} is minimized when   $\frac{1}{|\nabla h_P(x)|}$ is minimized, so we are interested in when $|\nabla h_P(x)|$ is maximized. The upper bound on $|\nabla h_P(x)|$ is given by the Lipschitz constant $K$. Recall that $\int_{\mathcal{F}_p}\frac{1}{K}q(x)dx_P = \int_{F_p}\int_{z \in Gx}q(z)\frac{1}{K}dzdx_p$. Let $Gx_p^{*}$ be the orbit with the smallest integrated probability density in $F_p$. Then $\int_{F_p}\int_{z \in Gx}q(z)\frac{1}{K}dzdx_P \geq \int_{z \in Gx_p^{*}}q(z)\frac{1}{K}dz_P$. Now, we remove the dependence on $p$ by considering the orbit with the smallest integrated probability density, including orbits not in $F_p$. This gives us $\int_{z \in Gx_p^{*}}q(z)\frac{1}{K}dz_p \geq \int_{Gx^{*}}q(x)\frac{1}{K}dx$.  Therefore, 
 \begin{equation*}
  \text{ECE}(h) \geq    \int_0^{m'}\int_{z \in Gx^{*}}\frac{q(x)}{K}dx(m'-p)dp.
 \end{equation*}
\end{proof}

\subsection{Invariant Lower Bound Example}

We now apply \cref{prop:lowest} to a Lipschitz and invariant network to give an example of a precise lower bound independent of $h_P$.

\begin{example}[ECE lower bound for a shallow DeepSets network] 
\label{ex:deep_sets} 
We consider a two-layer network $h_p$ that is permutation-invariant. In particular, we study a modified version of DeepSets~\citep{zaheer2017deep}, 
which is designed to process unordered collections such as point clouds by enforcing permutation equivariance. 

An input configuration of $n$ elements with $d$-dimensional features is represented by a matrix 
$A \in \R^{n \times d}$, where each row encodes the features of a single element. 
Permuting the input corresponds to permuting the rows of $A$, that is, acting on the first index. 

This example considers a dataset $X$ with $n = 4!$ points that are generated by permuting the following set of points: $\{\vec{a}, \vec{b}, \vec{c}, \vec{d}\}$. There is only one orbit in this setting, and all permutations of the set are equally probable under $q(x)$. The co-domain has labels $Y = \{0, 1\}$. The ground truth $f$ is a function of the rows of $A$, denoted $A_i$:
\begin{equation*}
 f(A) = \begin{cases}
     0, & A_1 = \vec{a}, \\
     1, & A_1 \neq \vec{a}.
 \end{cases}   
\end{equation*}
To process such data, we employ modified DeepSets-style linear layers of the form
\[
W = \tanh(\lambda_1) I + \tanh(\lambda_2) \, 11^\top ,
\]
where $\lambda_1, \lambda_2 \in \R$ are learnable parameters, 
$I$ is the $n \times n$ identity matrix, 
and $1 = (1,\dots,1)^\top \in \R^n$. 
These layers act on data matrices $A \in \R^{n \times d}$, followed by the ReLU nonlinearity. 

The construction ensures permutation equivariance, 
namely for any permutation matrix $P_\pi \in S_n$ we have
\[
W \cdot (P_\pi A) = P_\pi \cdot (W A).
\]
To get an invariant output, we use a final readout layer of the form $\tanh(\lambda_3) 1^T$ where $\lambda_3 \in \mathbb{R}$ is also a learnable parameter. Multiplication with $\lambda_3 1^T$ performs mean pooling over the set dimension. All together, 
\begin{equation}
    h_P(x) = \tanh(\lambda_3)1^T\text{ReLU}((\tanh(\lambda_1) I + \tanh(\lambda_2) 11^T))x. \label{eq:lip_hp}
\end{equation}

For Lipschitz functions $f$ and $g$ with Lipschitz constants $L_1$ and $L_2$, the composition $f(g(x))$ has Lipschitz constant $L_2L_1$ and the sum $f(x) + g(x)$ has Lipschitz constant $L_1 + L_2$. Moreover, the Lipschitz constant for a linear map is given by its maximum singular value $\sigma_{\rm max}$. Finally, note that ReLU is $1-$Lipschitz. Thus, the Lipschitz constant for  \cref{eq:lip_hp} is $ \sigma_{\rm max}(1^T)(\sigma_{\rm max}(I) + \sigma_{\rm max}(11^T)) = \sigma_{\rm max}(1^T)(1 + \sigma_{\rm max}(11^T))$. For $n = 24$, we can compute these values to be $24$ and approximately $4.9$ respectively.

Applying \cref{prop:lowest} the lower bound on ECE is approximately
\begin{equation*}
    \int_0^{m'} \int_{Gx^*}0.03 \cdot q(x)dx (m' - p )dp.
\end{equation*}
Thus far, we have derived a lower bound that does not depend on the size of the fibers and has no dependence on $| \nabla h_P |$. What is left is to calculate the integrated density on $Gx^*$ and the accuracy lower bound $m'$. 

Since there is only one orbit, $Gx^* = X$ and the integrated density over $Gx^*$ is 1. Moreover, $m'$ reduces to the global accuracy lower bound. Noting that $A_1 = \vec{a}$ will occur with probability $0.25$ gives us our accuracy lower bound $m'$. So, we compute
\begin{equation*}
    \int_0^{m'} \int_{Gx^*}0.03 \cdot q(x)dx (m' - p )dp = \int_0^{0.25} 0.03 (0.25 - p )dp = 0.001.
\end{equation*}
Ultimately, the looseness of the bound here is due to the fact that the Lipschitz constant grows linearly with the number of points $n$.
\end{example}

\subsection{Classification Experiments} \label{sec:swiss}

\subsubsection{Swiss Rolls Depict Harmful Invariance.}  We experimentally show how incorrect invariance can cause ECE to increase. While we do not necessarily expect this to happen all the time, here we construct a synthetic dataset designed to show this effect is possible. Analogous to how an individual data point can be adversarial to a model, this \textit{dataset} is adversarial to the entire \textit{function space} $\mathscr{H}$.

\paragraph{Experiment.}  This dataset consists of a family of separated Swiss rolls from \cite{wang2024general} with varying levels of correct and incorrect invariance with respect to $z$-translation.  These distributions contain a $3D$ point cloud arranged in a spiral-like fashion, and binary labels are assigned to each point. The Swiss rolls have distinct $z$-values that are easily separable with a horizontal plane. See \cref{fig:spiral} (\cref{sec:exp_swiss}) and Figures 7 and 11 in \cite{wang2024general}.  We train an unconstrained MLP and a $z$-invariant network to predict the label.  We provide further experimental details in \cref{sec:exp_swiss}. \cite{wang2024general} demonstrate a linear increase in test accuracy as a function of correct invariance, and our aim is to realize a similar trend for ECE. 

\paragraph{Results.} Our experiment gives an example where incorrect equivariance harms not only model accuracy, but also model calibration. When the proportion of correct equivariance is low, \cref{fig:ece_swiss} shows that the model is correct less than about $70\%$ of the time and that ECE may be as high as $25\%$. 

\begin{figure} 
  \vspace{-\baselineskip}             
  \centering
    \includegraphics[width=0.5\linewidth]{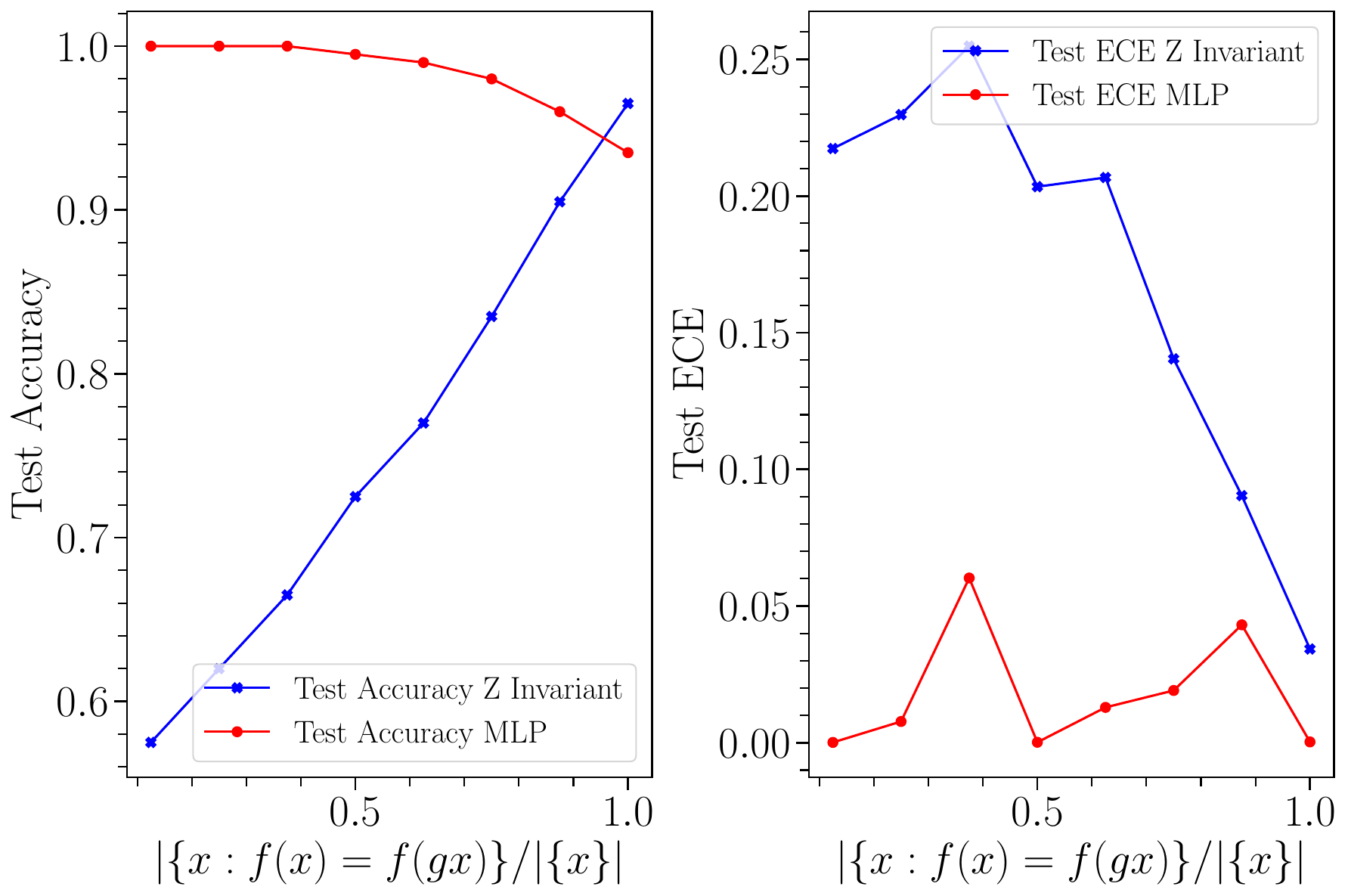}
    \caption{The left plot shows test accuracy for the z-invariant network (blue) and baseline unconstrained MLP (red) under different ratios of correct/incorrect ratios, ranging from $0\%$ to $100\%$ correctness. The right hand plot is the same but for ECE instead of accuracy. As the correct ratio increases, the z-invariant MLP increases in test accuracy and decreases in test ECE, whereas the baseline MLP is relatively flat.}
    \label{fig:ece_swiss}
\end{figure}

\subsubsection{Galaxy Zoo Morphology and Uninformative Symmetry Priors.} \label{sec:zoo}

While the previous experiment showed how a model's ECE can decrease proportionally to the amount of correct invariance in a domain with synthesized data, here we show an example using real data that demonstrates how symmetry can serve as an uninformative prior for improving model calibration.

\paragraph{Experiment.} We consider the challenging task of galaxy morphology classification.  Invariance has shown to improve model accuracy in this domain \citep{pandya20232, pandya2025siddasinkhorndynamicdomain}. The task naturally has $E(2)$-invariance, where $E(2)$ is the Euclidean group for $\mathbb{R}^2$. We can approximate $E(2)$-invariance in CNNs with $C_n$ and $D_n$ group convolutional layers, where $C_n$ denotes the cyclic group of order $n$ and $D_n$ denotes the dihedral group of order $n$. We examine how ECE changes under stricter symmetry constraints corresponding to higher group order $n$. While any $C_n$ or $D_n$ has correct equivariance, an increase in $n$ certainly captures more of the underlying symmetry as you better approximate the $E(2)$-invariance (with the caveat of aliasing and similar symmetry breaking operations, see \cite{zhang2019making, karras2021alias, gruver2023the}). We look at trends in both accuracy and ECE under different levels of point-spread function (PSF) convolution. This allows us to look at performance under different levels of ground truth noise, which has the effect of varying both the model confidence and accuracy. PSF blurring is also an extremely relevant source of noise for the astrophysics community, specifically in weak lensing analysis and exoplanet imaging, see  \cref{sec:PSF}. Our data comes from Galaxy Zoo images \citep{walmsley2024scaling} of galaxies from the DESI and SDSS surveys. Further experimental details are recorded in \cref{sec:exp_psf}.

\begin{figure}[!htb]
    \centering
    \includegraphics[width=0.66\linewidth]{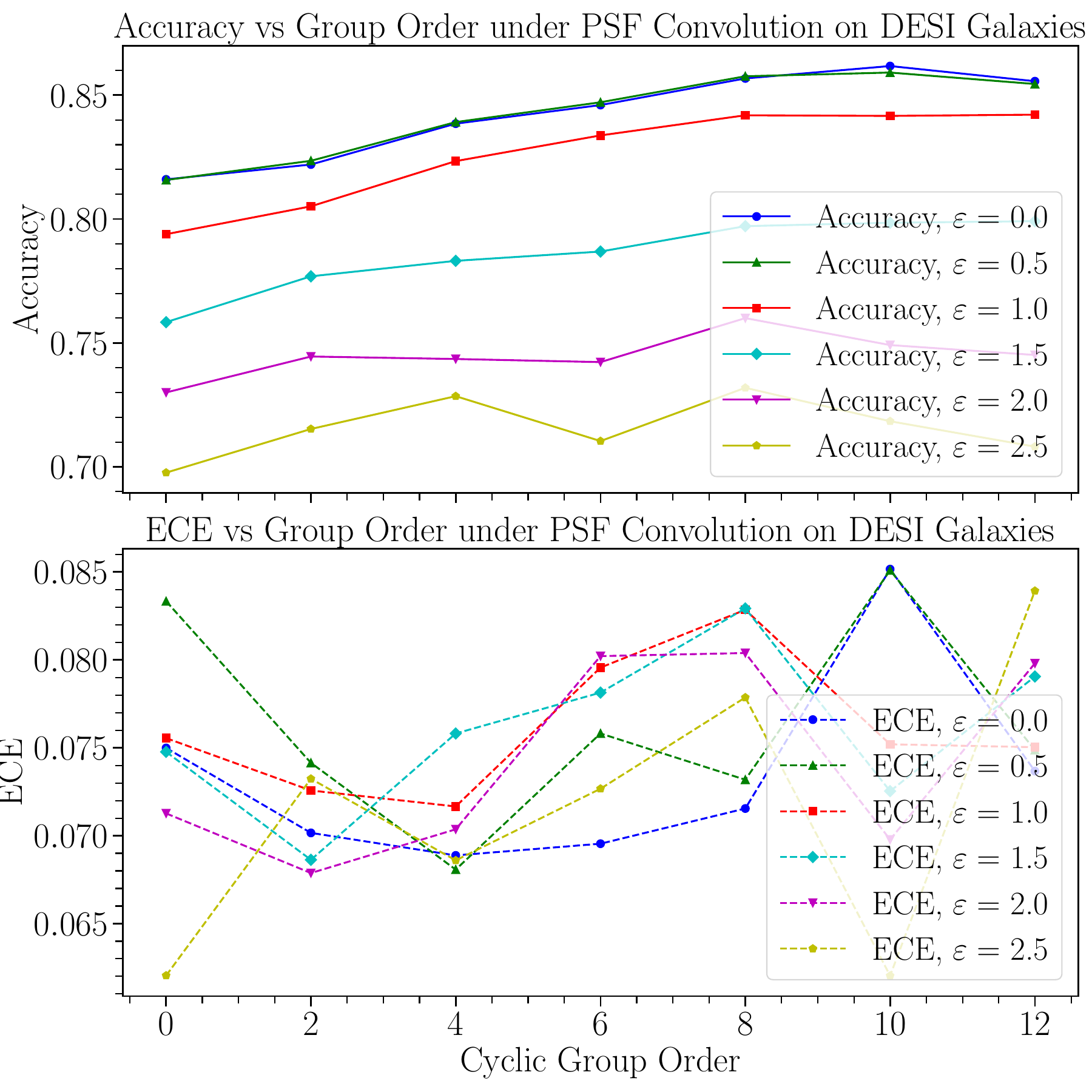}
    \caption{Accuracy and ECE vs Cyclic Group Order under PSF Convolution on DESI Galaxies. The accuracy increases as a function of order especially with low PSF noise, however, the ECE varies tremendously.}
    \label{fig:desicyclic}
\end{figure}

\paragraph{Results.} \cref{fig:desicyclic} shows that ECE does not follow a roughly monotonic improvement as we increase the cyclic group order in the same way accuracy does. We interpret these results in the context of the last experiment. While incorrect equivariance can cause a model to become poorly calibrated, that does not imply that correct equivariance provides a direct benefit to model calibration in the same way it does accuracy. This result is echoed in \cref{sec:gal_plots}, which shows similar trends for both cyclic and dihedral group order for both DESI and SDSS surveys. We note that the accuracy curves are reminiscent of \cite{weiler2019general, pandya20232} with the asymptotic accuracy increase as a function of group order. 

\section{Invariant and Equivariant Regression Calibration} \label{sec:invariant_regression_bound}

We now study calibration error in the case of invariant regression. In the process, we define a novel notion of calibration error that works for vector-valued functions. We prove that calibration error is bounded above by an expression that is analogous to the maximum $\chi^2$ error one can obtain averaged over all of the fibers of the uncertainty predictions. The upper bound is determined by the orbit variance of the ground truth on each fiber. We also prove that calibration error is bounded below by the minimum error over all confidence fibers in the setting where the model outputs a univariate Gaussian, which is again determined by the variance of the ground truth on each fiber. Readers interested in experimental consequences of equivariance on uncertainty in the regression setting may skip to \cref{sec:uncertainty_taxonomy}.

\subsection{Invariant Regression Problem Setup} 
Consider a function $f: X \to Y$ where $Y = \mathbb{R}^n$. Define a function space $\mathscr{H}$ as the set $\mathscr{H} = \{h\colon X \to \mathcal{M} \times \mathcal{S}\}$. Each function in the space outputs multivariate Gaussian distributions with diagonal covariance. Here, $\mathcal{M} = \mathbb{R}^n$ represents the space of all mean-vectors and  $\mathcal{S} = \mathbb{R}_+^n$ represents the space of all variance-vectors. Denote the two outputs by $h_\mu$ and $h_{\sigma^2}$ and let $p \colon X \to \mathbb{R}$ be a probability density over the domain $X$. Denote the subdomain of $X$ given by the constraint $h_{\sigma^2}(x) = s$ as $X_s = \{x \in X \mid{h_{\sigma^2}(x) = s}\}$. Denote the fundamental domain of $G$ in $X_s$ as $F_s$. Recall smoothness and separability hypothesis in \cref{assumption:assume}, which we assume holds for $X_s$ and $F_s$.

Next, we define a family of probability densities for each fiber of $h_{\sigma^2}$. Define a density over $X_s$ by $q_s \colon X \to \mathbb{R}$ via $q_s(x) = \frac{p(x_s)}{\int_{X_s}p(x)dx}$. For $x \notin X_s$, we assume $q_s(x) = 0$. This allows us to define the domain restricted regression error
\begin{equation}
\text{err}_{\text{reg}}(h, s) = \int_{X_s}q_s(x_s) \Bigl\lVert h_{\mu}(x_s) - f(x_s) \Bigr\rVert _2^2dx_s \label{eq:domain_restrict_reg}.
\end{equation}

Denote by $q_s(Gx_s) = \int_{z \in Gx_s} q_s(z)dz$ the probability of the orbit $Gx_s$ on $X_s$. Denote by $q_{\text{norm}}(z) = \frac{q_s(z)}{q_s(Gx_s)}$ the normalized probability density on the orbit $Gx_s$ such that $\int_{Gx_s}q_{\text{norm}}(z)dz = 1$. Let $\mathbb{E}_{Gx_s}[f]$ be the mean of the function $f$ on the orbit $Gx_s$, and let $\mathbb{V}_{Gx_s}[f]$ be the variance of $f$ on the orbit $Gx_s$,
\begin{align*}
    \mathbb{E}_{Gx_s}[f] &=\int_{Gx_s} q_{\text{norm}}(z)f(z)dz =  \frac{\int_{Gx_s} q(z)f(z)dz}{\int_{Gx_s}q(z)dz},\\
    \mathbb{V}_{Gx_s}[f] &= \int_{Gx_s} q_{\text{norm}}(x) \Bigl \lVert \mathbb{E}_{Gx_s} [f] - f(z) \Bigr \rVert_2^2 dz.
\end{align*}
These definitions are analogous to those used in deriving the error lower bound on invariant regression in \cite{wang2024general} but restricted to various subsets $X_s$. Intuitively, if a model $h$ is constant on an orbit where $f$ is varying, then the constant that minimizes the regression error is the average of $f$ on the orbit, and the resulting error is the variance of $f$.

Finally, we define a generalization of the expected normalized calibration error (ENCE), as given by Equation 8 in \cite{levi2022evaluating}. There are two main drawbacks of the original ENCE metric: it is defined in terms of binning approximations and assumes that $h_\mu(x)$ and $h_{\sigma^2}(x)$ are scalar values. Our definition not only works for vectors but also avoids discretization, allowing for a discussion of continuous group symmetries. While binning approximations are still necessary to compute ENCE in practice, our theory supports the more generalized continuous case. For estimating ENCE on real datasets, the use of binning approximations is still necessary to estimate $r(s)$. We note that it is advisable to use a numerically stable binning scheme, such as one that considers the quantiles of $s$. In high dimensions, we caution that these bins may be very sparsely populated.

We denote the absolute value of a vector $| s | = (|s_i|)_i$ to be applied element-wise. Similarly, $\sqrt{s}$ denotes the elementwise application of the square root to $s$. Vectors have a partial ordering where $\mathbf{a} \leq \mathbf{b}$ if $a_i \leq b_i$ for all $i$. Let $D$ be the region of vectors $d$ bounded by $\mathbf{s_1} \leq \mathbf{d} \leq \mathbf{s_2}$. Define a probability density $r\colon  \mathcal{S} \to \mathbb{R}$ such that $\mathbb{P}(h_{\sigma^2}(x)  \in D) = \mathbb{P}(s  \in D) = \int_{D} r(s)ds$. This is the push-forward of the density of $p$ over $h_{\sigma^2}$. We assume an analogous condition to \cref{ass:house}: The input $X$ is equipped with an $|X|$ dimensional Hausdorff measure $\mathcal{H}$ so that the push-forward density $r(s)$ is defined with respect to the push-forward measure $h_{\sigma^2}\#(\mathcal{H})$. We note that these conditions do not need to hold for datasets with discrete input and output spaces, which we will explore in a later example.

\paragraph{Generalized Expected Normalized Calibration Error.} We now have the necessary machinery to define our novel Generalized ENCE (GENCE) metric that applies to vector-valued functions. The goal for our learning task is that $h_\mu$ fits the function $f$ \textit{and} $h_{\sigma^2}$ properly predicts confidence by minimizing GENCE (\cref{eq:ENCE}):
\begin{defn}[GENCE] \label{def:GENCE} Under a well-specified Gaussian with diagonal covariance, the GENCE metric penalizes the fiber-wise discrepancies between the error and the uncertainty averaged over all variances. This quantity is defined:
\begin{align}
    &\GENCE = \underset{\varepsilon \to \vec{0}}{\lim}\int_{\mathcal{S}}  r(s)  \cdot \frac{\mathbb{E}_{X_s}\left[\Bigl\lVert \sqrt{\frac{2}{\pi}s} - | h_{\mu}(x) - f(x)| \Bigr\rVert _2^2 \biggm| s - \varepsilon <h_{\sigma^2}(x) < s + \varepsilon \right]  }{\Bigl \lVert \sqrt{\frac{2}{\pi}s} \Bigr \rVert_2^2} d s. \label{eq:ENCE}
\end{align}
\end{defn}

\begin{rmk} \label{rmk:clarify_factor} We clarify why the normalization constant $\sqrt{2/\pi}$ is included in \cref{def:GENCE}. If a model is well calibrated, then $\sqrt{2s/\pi} = \overline{|h_\mu(x) - f(x)|}$ and $s = \overline{(h_\mu(x) - f(x))^2}$ should both hold. The factor of $\sqrt{2/\pi}$ comes from \cite{geary1935ratio} and is obtained by integrating the product of a normal distribution with the absolute distance between a function and the mean. In other words, if a function $h_{\sigma^2}$ predicts variance $s$, then we should expect the mean absolute distance between $f$ and $h_\mu$ to be $\sqrt{2s/\pi}$. We choose to penalize discrepancies between $\sqrt{2s/\pi}$ and $|h_\mu(x) - f(x)|$ instead of $s$ and $(h_\mu(x) - f(x))^2$ because expressing the error term in terms of absolute value allows us to later apply a theorem from \cite{wang2024general}. 
\end{rmk}

As indicated by \cref{rmk:clarify_factor}, a potential alternative definition for GENCE is,
\begin{align}
    &\GENCESQ = \underset{\varepsilon \to \vec{0}}{\lim}\int_{\mathcal{S}}  r(s)  \cdot \frac{\mathbb{E}_{X_s}\left[\Bigl\lVert s - (h_{\mu}(x) - f(x))^2 \Bigr\rVert _2^2 \biggm| s - \varepsilon <h_{\sigma^2}(x) < s + \varepsilon \right]  }{\Bigl \lVert s \Bigr \rVert_2^2} ds \label{eq:alt_ENCE}
\end{align}
where $(\cdot)^2$ is applied element-wise. Unless otherwise stated, we will use the version of GENCE specified in \cref{eq:ENCE}. We further distinguish by using $\GENCE$ for \cref{eq:ENCE} and $\GENCESQ$ for \cref{eq:alt_ENCE}.

As we did for classification, we will omit the limit $\varepsilon \to 0$ in \cref{eq:ENCE} and \cref{eq:alt_ENCE} for brevity, but continue to assume the expression is well defined when $r(s) \neq 0$ and $X$ has a Hausdorff measure $\mathcal{H}$.
We assume that the function space $\mathscr{H}$ is arbitrarily expressive except that it is constrained to be invariant with respect to a group $G$. That is to say, it is a universal approximator for compactly supported $G$-invariant functions.

We adopt $\GENCE$ and $\GENCESQ$ as calibration metrics because it enables us to continue our strategy of bounding calibration errors by noting that the calibration function is equivalent to a fiber-wise error, but note that alternative frameworks such as coverage based metrics are also compelling. In particular, extending \cite{dobriban2025symmpi} to cases of symmetry mismatch would serve as a complement to this work.

\subsection{GENCE Invariant Upper Bound} 
We now state the first theorem of this section, which bounds GENCE for $G$-invariant functions and discusses the special case of minimized regression error using the bounds from \cite{wang2024general}. This bound can be used to interpret how poorly calibrated an invariant model can be when it is performing near its optimal capability in terms of regression error.

\begin{thm} \label{thm:GENCE_UB}
If a model $h$ is $G$-invariant, then GENCE as defined in \cref{eq:ENCE} is bounded as follows $$0 \leq \GENCE \leq 1 + \underset{\mathcal{S}}{\mathbb{E}}\left[\frac{\text{err}_{\text{reg}}(h, s)}{ \lVert \sqrt{\frac{2}{\pi}s}   \rVert_2^2}\right].$$ If $\text{err}_{\text{reg}}(h, s)$ is minimized, then ENCE is bounded by $$0 \leq \GENCE \leq 1 + \underset{\mathcal{S}}{\mathbb{E}}\left[\frac{\int_{F_s}q_s(Gx_s) \mathbb{V}_{Gx_s}[f]dx_s}{ \lVert \sqrt{\frac{2}{\pi}s} \rVert_2^2}\right].$$
\end{thm}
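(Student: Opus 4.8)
The lower bound is immediate: the integrand of $\GENCE$ in \cref{eq:ENCE} is a product of the nonnegative density $r(s)$, a squared norm in the numerator, and a strictly positive denominator, so $\GENCE \geq 0$. All the work is in the upper bound, and the central observation is a pointwise inequality on each fiber. Writing $a = \sqrt{\tfrac{2}{\pi}s}$ and $b = |h_\mu(x) - f(x)|$, both vectors have nonnegative entries, so their inner product is nonnegative and the cross term in $\lVert a - b\rVert_2^2 = \lVert a\rVert_2^2 - 2\langle a, b\rangle + \lVert b\rVert_2^2$ can only decrease the expression. Hence $\lVert a - b\rVert_2^2 \leq \lVert a\rVert_2^2 + \lVert b\rVert_2^2$, and since squaring an absolute value is inert, $\lVert b\rVert_2^2 = \lVert h_\mu(x) - f(x)\rVert_2^2$. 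This decouples the calibration target from the regression residual with no joint interaction. I note that this step, and the first bound below, do not actually invoke invariance; invariance enters only in the refinement.

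First I would take the conditional expectation over the fiber $X_s$ of this pointwise bound. Because $\sqrt{\tfrac{2}{\pi}s}$ is constant on $X_s$ (the value $s$ is fixed there), its expectation equals itself, and dividing through by $\lVert \sqrt{\tfrac{2}{\pi}s}\rVert_2^2$ yields the fiber-wise estimate
\[
\frac{\mathbb{E}_{X_s}\!\left[\lVert a - b\rVert_2^2 \mid \cdots\right]}{\lVert \sqrt{\tfrac{2}{\pi}s}\rVert_2^2} \leq 1 + \frac{\mathbb{E}_{X_s}\!\left[\lVert h_\mu(x) - f(x)\rVert_2^2 \mid \cdots\right]}{\lVert \sqrt{\tfrac{2}{\pi}s}\rVert_2^2},
\]
where $\cdots$ abbreviates the conditioning event $s-\varepsilon < h_{\sigma^2}(x) < s+\varepsilon$. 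Integrating against $r(s)$ and using $\int_{\mathcal{S}} r(s)\,ds = 1$ produces the leading constant $1$. For the remaining term I would identify the conditioned expectation, in the $\varepsilon \to \vec{0}$ limit, with the domain-restricted regression error: by the push-forward and disintegration setup guaranteed by \cref{ass:house} and \cref{assumption:assume}, the limiting conditional expectation of $\lVert h_\mu - f\rVert_2^2$ on the fiber $\{h_{\sigma^2} = s\}$ is exactly $\int_{X_s} q_s(x)\lVert h_\mu(x) - f(x)\rVert_2^2\,dx = \text{err}_{\text{reg}}(h,s)$ of \cref{eq:domain_restrict_reg}. This delivers $\GENCE \leq 1 + \mathbb{E}_{\mathcal{S}}[\text{err}_{\text{reg}}(h,s)/\lVert\sqrt{\tfrac{2}{\pi}s}\rVert_2^2]$.

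For the second, sharper statement I would substitute the minimal value of $\text{err}_{\text{reg}}(h,s)$ over $G$-invariant $h$. Since each $X_s$ is $G$-stable and the smoothness and separability conditions are assumed to hold on it, I would apply \cref{thm:invreg} restricted to the subdomain $X_s$ with the normalized density $q_s$: the invariant regression error is bounded below by the orbit-variance integral $\int_{F_s} q_s(Gx_s)\,\mathbb{V}_{Gx_s}[f]\,dx_s$, with equality when $h_\mu$ equals the orbit average $\mathbb{E}_{Gx_s}[f]$. Plugging this minimized value into the first bound gives $\GENCE \leq 1 + \mathbb{E}_{\mathcal{S}}[\int_{F_s} q_s(Gx_s)\mathbb{V}_{Gx_s}[f]\,dx_s / \lVert\sqrt{\tfrac{2}{\pi}s}\rVert_2^2]$.

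The main obstacle is not the algebra but the measure-theoretic bookkeeping of the middle step: rigorously passing the $\varepsilon \to \vec{0}$ limit inside the expectation and identifying the fiber-conditional expectation with $\text{err}_{\text{reg}}(h,s)$ relies on the disintegration machinery, and the restriction of \cref{thm:invreg} to each $X_s$ must be justified fiber-by-fiber, checking that $F_s$ is a genuine fundamental domain for $G$ acting on $X_s$ and that $q_s$ integrates to one there. The additive $1$ in the bound traces directly to discarding the nonnegative cross term $2\langle a, b\rangle$, which is exactly the price of avoiding any joint control of the calibration target and the residual.
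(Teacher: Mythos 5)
Your proposal is correct and takes essentially the same route as the paper's proof: bound $\bigl\lVert \sqrt{\tfrac{2}{\pi}s} - |h_\mu(x)-f(x)| \bigr\rVert_2^2$ by the sum of the two squared norms, pass to the fiber-wise expectation and integrate against $r(s)$ to obtain the leading $1$ plus $\text{err}_{\text{reg}}(h,s)$, then apply \cref{thm:invreg} on each $G$-stable fiber $X_s$ with the renormalized density $q_s$ for the minimized case. If anything, your justification of the key inequality---expanding the square and discarding the nonnegative cross term $2\langle a,b\rangle$, valid because both vectors are entrywise nonnegative---is more precise than the paper's nominal appeal to the ``triangle inequality,'' which by itself would only yield the weaker bound $(\lVert a\rVert_2+\lVert b\rVert_2)^2$.
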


\begin{proof} By the triangle inequality we have that
\begin{equation*}
0 \leq \mathbb{E}_{X_s}\left[\Bigl\lVert \sqrt{\frac{2}{\pi}s} - | h_{\mu}(x) - f(x)| \Bigr\rVert _2^2 \biggm| h_{\sigma^2}(x) = s \right] \leq \mathbb{E}_{X_s}\left[\Bigl\lVert \sqrt{\frac{2}{\pi}s}   \Bigr\rVert_2^2 + \Bigl\lVert| h_{\mu}(x) - f(x)| \Bigr\rVert _2^2 \biggm| h_{\sigma^2}(x) = s\right] .
\end{equation*}
Consequently,
\begin{align*}
0 \leq \GENCE &\leq \int_{\mathcal{S}} r(s) 
 \cdot \frac{\mathbb{E}_{X_s}\left[ \lVert \sqrt{\frac{2}{\pi}s}  \rVert_2^2 + \lVert| h_{\mu}(x) - f(x)| \rVert _2^2 \biggm| h_{\sigma^2}(x) = s \right] }{\Bigl \lVert \sqrt{\frac{2}{\pi}s}  \Bigr \rVert_2^2}ds\\
&= \int_{\mathcal{S}} r(s) 
\cdot \left[1 + \frac{\int_{X_s}q_s(x_s)\Bigl\lVert h_{\mu}(x) - f(x) \Bigr\rVert _2^2  dx}{\Bigl \lVert \sqrt{\frac{2}{\pi}s}  \Bigr \rVert_2^2} \right]ds.
\end{align*}
By the definition of domain restricted regression error in our problem setup we have
\begin{align}
\GENCE \leq \int_{\mathcal{S}} r(s) \left[1 + \frac{\text{err}_{\text{reg}}(h, s)}{ \Bigl \lVert \sqrt{\frac{2}{\pi}s}  \Bigr \rVert_2^2} \right]ds = 1 + \underset{\mathcal{S}}{\mathbb{E}}\left[\frac{\text{err}_{\text{reg}}(h, s)}{ \Bigl \lVert \sqrt{\frac{2}{\pi}s} \Bigl  \rVert_2^2}\right]. \label{eq:expratio}
\end{align}
Now, if the domain restricted regression error is minimized, then by \cref{thm:invreg} we have that
\begin{equation}
\text{err}_{\text{reg}}(h, s) =  \int_{F_s}q_s(Gx_s) \mathbb{V}_{Gx_s}[f]dx_s \label{eq:specialcase}
\end{equation}
which completes the proof.
\end{proof}

\subsection{GENCE Equivariant Upper Bound} 
\label{sec:equivariant_regression_bound}

We now generalize the bounds on calibration error for invariant models (\cref{thm:GENCE_UB}) to \textit{equivariant} models. While the results are similar to the invariant case, the proof strategy is different. This is because the fibers are no longer closed under the action of the group. That is, $G$ acting on $X_s \subseteq X$ may result in a vector in $X \setminus X_s$. Similar to \cref{thm:GENCE_UB}, our bound can be used to interpret how poorly calibrated an \textit{equivariant} model can be when it is performing near its optimal capability in terms of regression error.

We assume similar notation and hypotheses as in invariant regression, with one additional modification. We will treat the matrix $Q_{Gx}$ from \cref{thm:equivareg} as a function of $s$. In particular, we replace the probability density $p(x)$ with the fiber-wise renormalized probabilities $q_s(x)$. Recalling that $q_s(x)$ assigns no measure to elements in $X \setminus X_s$, we can apply \cref{thm:equivareg} to bound the regression error on individual fibers.

\begin{thm} \label{thm:equi_UQ_final_thm} Assume the function space $\mathscr{H}$ is equivariant. GENCE is bounded as follows $$0 \leq \GENCE \leq 1 + \underset{\mathcal{S}}{\mathbb{E}}\left[\frac{\text{err}_{\text{reg}}(h, s)}{ \lVert \sqrt{\frac{2}{\pi}s} \rVert_2^2}\right].$$ If $\text{err}_{\text{reg}}(h, s)$ is minimized then GENCE is bounded by $$0 \leq \GENCE \leq 1 + \underset{\mathcal{S}}{\mathbb{E}}\left[\frac{\int_{F} \int_G q(gx) \lVert f(gx) - g\mathcal{E}_G[f,x]\rVert_2^2 \alpha(x,g)dgdx}{\lVert \sqrt{\frac{2}{\pi}s}  \rVert_2^2} \right].$$
\end{thm}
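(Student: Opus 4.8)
The plan is to mirror the proof of \cref{thm:GENCE_UB} and to swap the invariant regression bound for the equivariant one only at the final step. First I would establish the left inequality together with the coarse upper bound $0 \le \GENCE \le 1 + \mathbb{E}_{\mathcal{S}}\!\left[\text{err}_{\text{reg}}(h,s)/\lVert \sqrt{\frac{2}{\pi}s}\rVert_2^2\right]$. Crucially, this portion of the argument never uses invariance, so it transfers verbatim from \cref{thm:GENCE_UB}. The only ingredient is the same triangle-inequality step used there, namely the coordinatewise bound $(a-b)^2 \le a^2 + b^2$ valid for scalars $a,b \ge 0$, applied to the nonnegative vectors $\sqrt{\frac{2}{\pi}s}$ and $\lvert h_\mu(x) - f(x)\rvert$. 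Summing over coordinates gives $\lVert \sqrt{\frac{2}{\pi}s} - \lvert h_\mu - f\rvert\rVert_2^2 \le \lVert\sqrt{\frac{2}{\pi}s}\rVert_2^2 + \lVert h_\mu - f\rVert_2^2$; dividing by $\lVert\sqrt{\frac{2}{\pi}s}\rVert_2^2$, integrating against $r(s)$, recognizing the normalization term as the constant $1$, and identifying the remainder as $\text{err}_{\text{reg}}(h,s)$ through \cref{eq:domain_restrict_reg} yields the first bound.

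For the refined bound under minimized regression error, the single change relative to \cref{thm:GENCE_UB} is to replace the invariant lower bound \cref{thm:invreg} by its equivariant analogue \cref{thm:equivareg}. Following the setup preceding the theorem, I would apply \cref{thm:equivareg} fiberwise, using the renormalized density $q_s$ (supported on $X_s$) in place of the global density $p$. Because $\mathscr{H}$ is equivariant, $h_\mu$ is equivariant on all of $X$, so \cref{thm:equivareg} applies to $q_s$ as a bona fide probability density on $X$ and delivers $\text{err}_{\text{reg}}(h,s) \ge \int_F \int_G q_s(gx)\,\lVert f(gx) - g\mathcal{E}_G[f,x]\rVert_2^2\,\alpha(x,g)\,dg\,dx$, where $Q_{Gx}$ and $\mathcal{E}_G[f,x]$ are now computed from $q_s$ and therefore depend on $s$. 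Equality holds precisely when the regression error is minimized, and substituting this expression into the coarse bound gives the stated inequality.

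The main obstacle, and the reason the proof strategy must differ from the invariant case, is that the fiber $X_s = \{x : h_{\sigma^2}(x) = s\}$ is not closed under the $G$-action: applying $g$ can carry $x \in X_s$ into $X \setminus X_s$. Consequently I cannot restrict the group to $X_s$ and rerun the invariant argument orbit-by-orbit. The resolution is exactly that $q_s$ vanishes off $X_s$, so integrating $q_s(gx)$ over an orbit automatically collects only the mass lying in the fiber; this is what keeps $Q_{Gx}$ and the orbit-averaging operator $\mathcal{E}_G[f,x]$ well defined as functions of $s$. Two points merit explicit care. First, the symbol $q$ in the theorem statement denotes this fiber-restricted scalar density $q_s$ playing the role of $p$ in \cref{thm:equivareg}, not the matrix-valued weight $q(gx)$ appearing there. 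Second, one must confirm that $Q_{Gx}$ is invertible, so that $\mathcal{E}_G[f,x]$ exists, wherever the fiber carries positive orbit mass; this is where the standing smoothness and separability hypotheses (\cref{assumption:assume}, now imposed on $X_s$ and $F_s$) do the work.
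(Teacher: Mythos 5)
Your proposal is correct and follows essentially the same route as the paper: the coarse bound is obtained by the identical triangle-inequality argument (which never uses the symmetry assumption), and the refined bound comes from applying \cref{thm:equivareg} with the fiber-restricted density $q_s$ extended by zero to all of $X$, which is exactly how the paper circumvents the fact that the fibers $X_s$ are not closed under the $G$-action. Your additional remarks --- that the $q(gx)$ in the statement plays the role of $p$ in \cref{thm:equivareg} (and should really carry the $s$-dependence), and that invertibility of $Q_{Gx}$ is needed for $\mathcal{E}_G[f,x]$ to be well defined --- are careful clarifications of points the paper leaves implicit.
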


Unlike before, we can not decompose an integral over $X_s$ into an iterated integral over $F_s$ and $Gx_s$. This is because while $G$-invariance of $h_{\sigma^2}$ implies that the fibers $X_s$ are closed under the action of $G$, this does not hold when $h_{\sigma^2}$ is \textit{equivariant.} The proof strategy is instead to relate the domain restricted error to the error over the entire domain.

\begin{proof}[Proof of \cref{thm:equi_UQ_final_thm}]
Consider the integral in  \cref{eq:domain_restrict_reg}. Since $q_s(x) = 0$ for $h_{\sigma^2}(x) \neq s$, we may write
\begin{equation*}
\text{err}_{\text{reg}}(h, s) = \int_X q_s(x)  \Bigl\lVert h_{\mu}(x) - f(x) \Bigr\rVert _2^2dx.
\end{equation*}
Using the same argument as we did with invariant regression (\cref{thm:GENCE_UB}), we arrive at 
\begin{align*}
0 \leq \GENCE &\leq \int_{\mathcal{S}} r(s) \left[1 + \frac{\text{err}_{\text{reg}}(h, s)}{ \lVert \sqrt{\frac{2}{\pi}s}   \rVert_2^2} \right]ds = 1 + \underset{\mathcal{S}}{\mathbb{E}}\left[\frac{\text{err}_{\text{reg}}(h, s)}{ \lVert \sqrt{\frac{2}{\pi}s}  \rVert_2^2}\right]. 
\end{align*}
Applying \cref{thm:equivareg}, if $\text{err}_{\text{reg}}(h, s)$ is a minimizer then we have
\begin{align}
    &0 \leq \GENCE \leq 1 +  \underset{\mathcal{S}}{\mathbb{E}}\left[\frac{\int_{F} \int_G q(gx) \lVert f(gx) - g\mathcal{E}_G[f,x]\rVert_2^2 \alpha(x,g)dgdx}{\lVert \sqrt{\frac{2}{\pi}s}   \rVert_2^2} \right] \label{eq:ence_ub}.
\end{align}
This completes the proof.
\end{proof}

The upper bounds presented in this section are analogous to the maximum $\chi^2$ score one can attain averaged over all of the variances the model predicts. Consequently, fibers $\mathcal{F}_s$ where the regression error is small may contribute more to the upper bound than other fibers if the variance $s$ is also smaller. That being said, this $\chi^2$ value may not exhibit favorable convergence properties.

\begin{rmk} \label{rmk:converge}
The upper bounds in \cref{eq:specialcase} and \Cref{eq:ence_ub} are not guaranteed to converge to a finite value since $\text{err}_{\text{reg}}(h,s)$ may increase as $s$ goes to infinity. It is tempting to suggest that $r(s)$ has bounded support (and therefore the integral converges). However, $r(s)$ must be supported everywhere in order for GENCE to be well defined. This suggests that bounds on GENCE are only finite when the regression error exhibits nice convergence properties, such as $\underset{s \to \infty}{\lim}\text{err}_{\text{reg}}(h,s) < \log s$. Some works \citep[e.g.,][]{vaicenavicius2019evaluating} circumnavigate this by defining the calibration error as the integrated density only on the set $\{s \in S \colon r(s) \neq 0\}$, i.e., the set where calibration error meets a necessary well-definedness property. 
\end{rmk}

\Cref{rmk:converge} elucidates a key difference between the classification bounds and regression bounds. For classification, ECE is bounded on $[0,1]$ because it is defined as the average of a bounded random variable. In contrast, both the error and uncertainty in regression can become arbitrarily large or small. The assumption of equivariance controls only how small we can make the error, but not the uncertainty.

In practice, it is difficult to compute the upper bound (\cref{thm:GENCE_UB} and \cref{thm:equi_UQ_final_thm}) precisely without prior knowledge on the density of the input data or the density of the push-forward. The true bound can be estimated for trained networks using sample predictions on the data to approximate $r(s)$, however, this may become intractable when the Gaussian is high dimensional. This is because the number of required samples to estimate the probability mass $r(s)$ on a small subset of $\mathbb{R}^n$ grows quickly with $n$. Alternatively, $r(s)$ can be analytically derived when the input distribution is well behaved and the model is simple. For example, a linear model with invertible weights will map a Gaussian distribution $p(x)$ to another Gaussian with a new mean and covariance. Despite these complications, we can consider special cases where the bound can be computed exactly.  

\subsection{GENCE Invariant Lower Bound}

In the scalar case, where both the space of mean vectors $\mathcal{M}$ and the space of variance vectors $\mathcal{S}$ are one dimensional, we establish a lower bound on $\GENCESQ$ using the same strategy as for ECE in \cref{thm:ece_lb}. The bound can be computed in a straightforward manner analogously to the classification lower bound, and the special cases for Lipschitz functions still apply. Moreover, the cases where the bound becomes vacuous are more apparent and again are analogous to what was proved for \cref{thm:ece_lb}. 

\begin{thm}
Assume that the elements of the function space $\mathscr{H}$ are invariant under the action of $G$ on $X$ and $\mathcal{M} = \mathbb{R}$, $\mathcal{S} = \mathbb{R}_+$. Denote by $m$ the error lower bound, $m = \underset{s \in \mathbb{R}_+}{\min} \int_{F_s}q_s(Gx_s)\mathbb{V}_{Gx_s}[f]dx$. Then, $\GENCESQ$ given by \cref{eq:alt_ENCE} is bounded below by $\int_0^m\frac{r(s)}{s^2}(s-m)^2ds$. 
\end{thm}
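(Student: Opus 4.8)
The plan is to mirror the structure of the ECE lower bound in \cref{thm:ece_lb}, replacing the classification accuracy lower bound with the regression error lower bound supplied by \cref{thm:invreg}. The key observation is that $\GENCESQ$ from \cref{eq:alt_ENCE} is, in the scalar case, an integral of the form $\int_{\mathbb{R}_+} \frac{r(s)}{s^2}\,\mathbb{E}_{X_s}\!\left[(s - (h_\mu(x) - f(x))^2)^2 \mid h_{\sigma^2}(x) = s\right]ds$. I would first argue that the inner conditional expectation is bounded below by replacing the integrand's fiber error $\mathbb{E}_{X_s}[(h_\mu - f)^2]$ with its minimum achievable value under $G$-invariance, which by \cref{thm:invreg} applied to the fiber $X_s$ (with the renormalized density $q_s$) is exactly $\int_{F_s}q_s(Gx_s)\mathbb{V}_{Gx_s}[f]\,dx$. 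Taking the minimum of this quantity over all fibers $s$ yields the constant $m$ defined in the statement, so that the fiber error is at least $m$ for every $s$.

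The second step is to push this fiber-wise error bound through the squared discrepancy $(s - \text{err})^2$. Here I would use that $\mathbb{E}_{X_s}[(s - (h_\mu-f)^2)^2] \geq (s - \mathbb{E}_{X_s}[(h_\mu-f)^2])^2$ by Jensen's inequality (the map $t \mapsto (s-t)^2$ is convex), and then observe that when the fiber error $\mathbb{E}_{X_s}[(h_\mu - f)^2] \geq m > s$, i.e. for $s < m$, the quantity $(s - \text{err})^2$ is monotonically decreasing in the error on $[0,\infty)$ only up to $s$; more carefully, for $s \le m$ we have $\mathbb{E}_{X_s}[(h_\mu-f)^2] \ge m \ge s$, so $(s - \mathbb{E}_{X_s}[(h_\mu-f)^2])^2 \ge (s-m)^2$. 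Restricting the outer integral to $[0,m]$ and discarding the nonnegative contribution from $s > m$ then gives
\[
\GENCESQ \;\ge\; \int_0^m \frac{r(s)}{s^2}\,(s-m)^2\,ds,
\]
which is the claimed bound.

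The main obstacle I anticipate is the direction of the inequality in the Jensen step combined with the sign of $(s - \text{err})$. Unlike the classification case, where accuracy is cleanly bounded below by $m$ and $|\Acc - p|$ behaves monotonically, here the squared term $(s - \text{err})^2$ is not monotone in the error globally; the argument only works because we restrict to the regime $s \le m$ where the true error provably exceeds $s$, forcing $s - \text{err} \le s - m \le 0$ and hence $(s-\text{err})^2 \ge (s-m)^2$. I would need to verify that the fiber error bound $m$ from \cref{thm:invreg} is the correct quantity to compare against (it is a \emph{lower} bound on the minimizing regression error, and the actual model error is at least this), and that renormalizing the density on each fiber $X_s$ to $q_s$ is compatible with applying \cref{thm:invreg} fiber-by-fiber exactly as was done in \cref{thm:ece_lb}. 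The well-definedness caveat (that $r(s) \neq 0$ on $[0,m]$, so the bound is strictly positive) carries over verbatim from the classification discussion.
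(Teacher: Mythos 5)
Your proposal is correct and follows essentially the same route as the paper's proof: simplify $\GENCESQ$ in the scalar case, apply Jensen's inequality to pull the expectation inside the square, identify the inner expectation as $\text{err}_{\text{reg}}(h,s)$, lower-bound it fiber-wise via \cref{thm:invreg} and the definition of $m$, and restrict the integral to $[0,m]$ where $\text{err}_{\text{reg}}(h,s) \geq m \geq s$ forces $(s-\text{err}_{\text{reg}}(h,s))^2 \geq (s-m)^2$. The subtlety you flag about non-monotonicity of the squared discrepancy is precisely the point the paper handles the same way, by confining the comparison to the regime $s \leq m$.
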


\begin{proof}
We start by simplifying the $\GENCESQ$ expression for scalar mean and variance, 
\begin{align*}
\GENCESQ &= \int_0^\infty  \frac{r(s)}{s^2}  \cdot \mathbb{E}_{X_s}\left[ \left(s -  (h_{\mu}(x) - f(x))^2\right) ^2 \biggm| h_{\sigma^2}(x) = s\right]   ds.
\end{align*}
Applying Jensen's inequality gives
\begin{align*}
\GENCESQ &\geq \int_0^\infty  \frac{r(s)}{s^2}  \cdot \left(\mathbb{E}_{X_s}\left[ s - (h_{\mu}(x) - f(x))^2 \biggm| h_{\sigma^2}(x) = s\right] \right)^2  ds.
\end{align*}
By linearity of expectation we have
\begin{align*}
\int_0^\infty  \frac{r(s)}{s^2}  \cdot \left(\mathbb{E}_{X_s}\left[ s - (h_{\mu}(x) - f(x))^2 \biggm| h_{\sigma^2}(x) = s\right] \right)^2  ds\ = \int_0^\infty  \frac{r(s)}{s^2}  \cdot \left(s - \text{err}_{\text{reg}}(h,s)\right)^2  ds.
\end{align*}
Applying \cref{thm:invreg} and the definition of $m$ then gives the lower bound 
\begin{equation*}
\int_0^\infty  \frac{r(s)}{s^2}  \cdot \left(s - \text{err}_{\text{reg}}(h,s)\right)^2  ds \geq \int_0^m \frac{r(s)}{s^2} \cdot (s - m)^2ds   
\end{equation*}
since $\text{err}_{\text{reg}}(h,s) \geq m > s$ by integrating over $m \geq s \geq 0$.
\end{proof}

Analogous results to \cref{prop:h_p_low} and \cref{prop:lowest} can be obtained for regression under assumption of Lipschitz continuity. Our theorem and proof strategy show that the classification and regression calibration lower bounds are both related to invariance by the minimum fiber-wise approximation error bounds when the outputs in $\mathcal{M}$ and $\mathcal{S}$ are scalar valued. Similar to \cref{nt:note}, this bound is vacuous only under the condition that $f$ is constant on each of the orbits on one of the fibers, in which case $m = 0$. 

\subsection{Example of GENCE Upper Bound for Invariant Models}

\begin{figure}[!htb]
    \centering
    \includegraphics[width=0.68\linewidth]{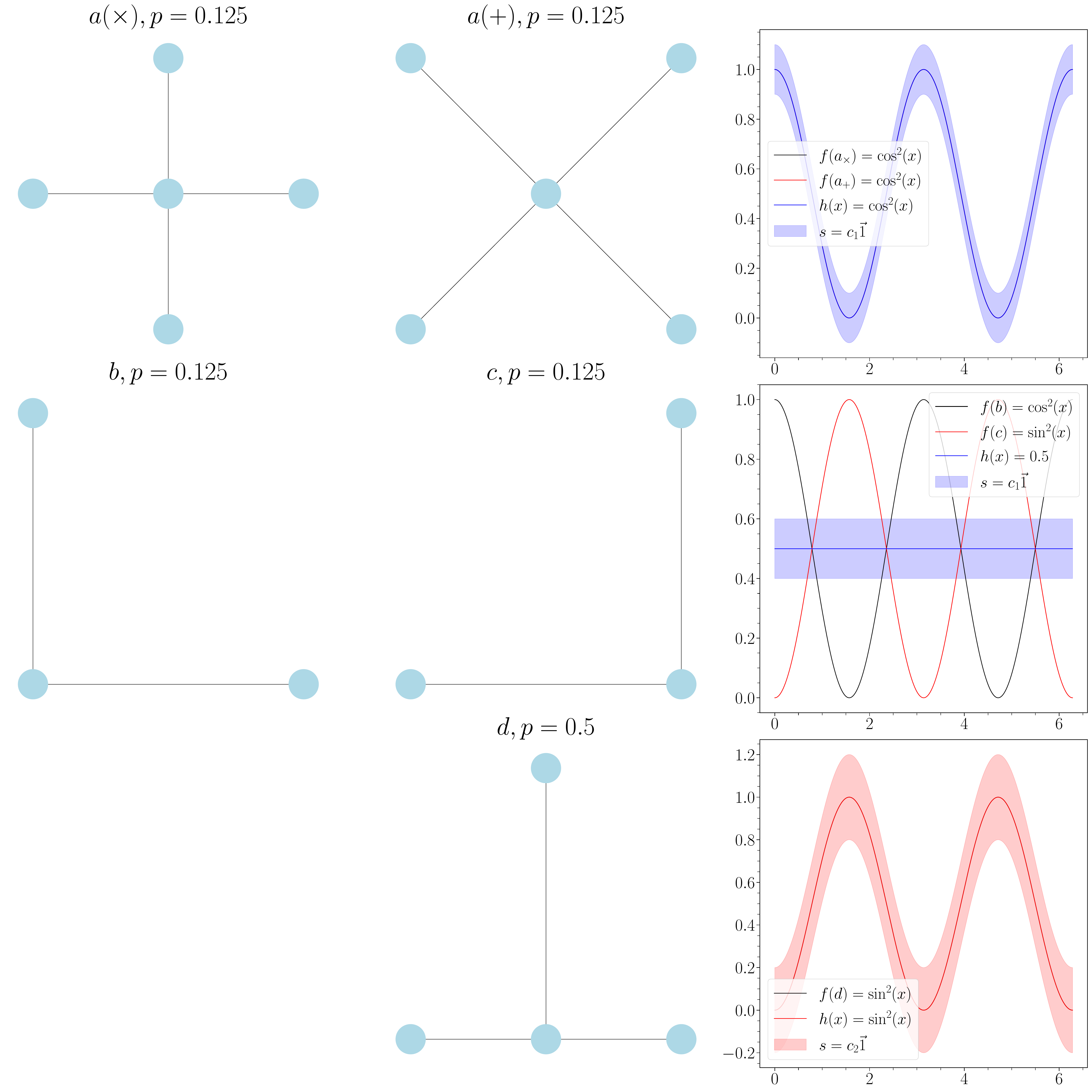}
    \caption{An example on how the ENCE upper bound behaves for an $E(2)$-invariant model $h$ on a set of point clouds and output space of vectors in $\mathbb{R}^n$, depicted as 1D-lines. Each row shows an orbit from the representative set of $X$ in the first two columns as well as the lines produced by $f$ and $h_\mu$ in the third column, with the corresponding variance $s$ predicted by $h_{\sigma^2}(x)$ indicated by the ribbon. Each point cloud is titled with its name in the set $X$ and the probability of sampling it in $X$. It is noteworthy that the orbits in rows 1 and 2 are in the same confidence fiber, and only the second row has a nontrivial regression error lower bound for invariant models.}
    \label{fig:spectra}
\end{figure}

\begin{example} \label{ex:invariant_regression}
The goal of this example is to show how the GENCE bound can be computed in a small synthetic example and to illustrate how the bound can be tighter or looser depending on the norm of the variances. In this example, we consider a set $X$ of five point clouds. We calculate the GENCE upper bound assuming a minimized regression error. We assume that the function space $\mathscr{H}$ has $E(2)$-invariance. As in \cref{sec:swiss}, this \textit{dataset} is adversarial to the entire \textit{function space} $\mathscr{H}$ given that the ground truth function $f$ is not $E(2)$-invariant, as indicated in \cref{fig:spectra}. The dataset contains some duplicates of the point clouds up to transformations in $E(2)$. Different point cloud orientations are notated with $+$ and $\times$. 

In particular, we label the point clouds in $X$ as $a(+), a(\times), b, c,$ and $ d$. The corresponding probabilities of obtaining each point cloud are $0.125, 0.125, 0.125, 0.125, 0.5$. This defines the discrete probability distribution $p$ of the input data on $X$. The ground truth function $f \colon X \to Y$ yields four distinct outputs. Any approximation of $f$ by an $E(2)$-invariant model $h$ produces only three distinct lines, since it identifies the outputs of $b$ and $c$, which lie in the same $E(2)$-orbit in this example. The model $h$ produces two distinct variance vectors; it predicts variance $s_1$ for the first two orbits and variance $s_2$ for the last orbit. For visualization purposes we assume these variances to be constant on each dimension, but note this need not be the case.

We now compute the upper bound by computing the regression error lower bound along each fiber. We start with the first fiber (rows 1 and 2 in \cref{fig:spectra}). Since $h$ is $E(2)$-invariant, $h$ is able to fully fit the function $f$ on the first orbit (row 1) containing two rotated versions of $a$. Thus, the regression error is zero. For the second row, $b$ and $c$ are equally probable, and the output of $h$ that minimizes the regression error is just the average of the two lines. To compute the error on the second orbit (row 2), we also need to compute the integrated density:
\begin{align*}
    q(Gx_s) &= \frac{p(b) + p(c)}{p(a(+)) + p(a(\times)) + p(b) + p(c)} = 0.5.
\end{align*}
Applying \cref{eq:specialcase} gives $\text{err}_{\text{reg}}(h,s_1) = 0.5 \cdot \mathbb{V}_{Gx_{s_1}}[f] = \pi/8$. Now, as with the first orbit, the minimizing regression error on the third orbit (row 3) is zero, since there is only one element in $X$ we can fit. So, $\text{err}_{\text{reg}}(h, s_2) = 0$. Finally, that the probability of sampling an element on the fiber of $s_1$ is the same as the probability of sampling an element on $s_2$. Thus, by \cref{thm:invreg},
\begin{equation*}
    \GENCE \leq 1 + \left(0.5 \cdot \frac{0}{||\ \sqrt{\frac{2}{\pi}s_2}||_2^2}\right) + \left(0.5 \cdot   \frac{\pi / 8}{|| \sqrt{\frac{2}{\pi}s_1}||_2^2}\right) = 1 + \frac{\pi / 8}{|| \sqrt{\frac{2}{\pi}s_1}||_2^2}.
\end{equation*}
The upper bound for GENCE  is thus $1 + \frac{\pi/8}{\lVert \sqrt{\frac{2}{\pi}s_1} \rVert_2^2}$. We can see that in the limit as $\lVert \sqrt{\frac{2}{\pi}s_1} \rVert_2^2 $ goes to infinity, the upper bound on GENCE becomes $1$. Alternatively, in the limit as $\lVert \sqrt{\frac{2}{\pi}s_1} \rVert_2^2 $ goes to $ 0$, the upper bound on GENCE diverges. The interpretation of the latter is that if the model is extremely confident, then any deviations from the mean prediction represents similarly extreme miscalibration.
\end{example}

\section{Disentangling Aleatoric and Epistemic Uncertainty}

\label{sec:uncertainty_taxonomy}

This section details how models that are overconstrained by symmetry may misattribute the source of their uncertainties. In the previous sections, we analyzed the effects of equivariance on model miscalibration. However, our analysis did not distinguish between epistemic and aleatoric uncertainty. Understanding the breakdown between the two sources is important because it allows users to determine whether they have reached the noise floor of their problem or their model is inadequate or poorly trained. While there are clear benefits to distinguishing between the two uncertainties, doing so in practice is known to be hard. Techniques such as evidential regression (as outlined in \cref{sec:der}) are far from perfect in their ability to disentangle these uncertainties \citep{ovadia2019can, valdenegro2022deeper, osband2023epistemic, wimmer2023quantifying, nevin2024deepuq, jurgens2024epistemic}.

\subsection{Preliminaries}

With calibration error, the uncertainty predictions are often interpreted to be the epistemic uncertainty and do not consider aleatoric uncertainties. Thus, we now frame the problem of model calibration using a different metric, the \textit{aleatoric bleed}, which we will define after detailing the relevant background from \cref{sec:der}.

Recall the definitions of aleatoric and epistemic uncertainty in \cref{sec:der}, which we repeat here for convenience.

\paragraph{Aleatoric Uncertainty:} Aleatoric uncertainty refers to the irreducible part of the uncertainty. Given spaces $X$ and $Y$ and an instance $x_q \in X$, the aleatoric uncertainty is the spread in $p(y|x_q)$.
 \paragraph{Epistemic Uncertainty:} Model uncertainty and approximation uncertainty, on the other hand, are subsumed under the notion of epistemic uncertainty. Let the spaces $X$ and $Y$ be the same as before. Let $l: Y \times Y \to \mathbb{R}$ be the loss function and let $f^*$ be the associated point-wise Bayes predictor defined as \begin{equation}
f^*(x) := \underset{\hat{y} \in Y}{\arg \min} \int_{Y}l(y, \hat{y})dP(y|x). \label{eq:ep_e}
\end{equation} Epistemic uncertainty is the uncertainty due to the lack of knowledge of the perfect predictor \cref{eq:ep_e}.



Denote the ground truth aleatoric uncertainty (the true dispersion) at a given point $x$ by $f(x)$. Now consider an equivariant function space $\mathscr{H} = \{h: X \to \mathcal{S}_{\text{aleatoric}}\}$. We assume both $f$ and elements in $\mathscr{H}$ are nonnegative, and $\mathscr{H}$ is arbitrarily expressive.

\begin{defn} The \textit{aleatoric bleed} is the regression error $\text{err}_{\text{reg}}(h)$ for a function space $\mathscr{H} = \{h\colon X \to \mathcal{S}_{\text{aleatoric}}\}$.
\end{defn}

Put differently, the aleatoric bleed measures the epistemic mass spuriously identified as aleatoric mass. Typical ensemble spread is roughly epistemic only if the aleatoric uncertainty is correctly modeled, otherwise the uncertainties may leak both ways. Aleatoric bleed is a means of quantifying the degree to which epistemic error induced by symmetry mismatch is absorbed by the aleatoric channel of a parametric predictor. 

Again for convenience, we now recap the specific parameterization of the uncertainties via evidential regression that was outlined in \cref{sec:der}. Given data points $\left(y_1, \hdots, y_n\right) \sim \mathcal{N}(\mu, \sigma^2)$, we may impose priors $\mu  \sim  \mathcal{N}(\gamma, \sigma^2\nu^{-1})$, $
\sigma^2  \sim  \Gamma^{-1} (\alpha , \beta)$ where $\Gamma(\cdot)$ is the gamma function, $m = (\gamma, \nu, \alpha, \beta)$, and $\gamma \in \mathbb{R}$, $\nu > 0$, $\alpha > 1$, $\beta > 0$. One can then show that $p(y_i | m) = St(y_i ; \gamma, \beta (1 + \nu)/\alpha \nu, 2\alpha )$, where the St distribution has probability density given by $$St(t; \mu, \sigma, \nu) = \frac{\Gamma \left(\frac{\nu + 1}{2}\right)}{\sqrt{\pi \nu}\sigma\Gamma\left(\frac{\nu}{2}\right)} \left( 1 + \frac{1}{\nu}\left(\frac{t - \mu}{\sigma}\right)^2\right)^{-(\nu + 1)/2}.$$ Parameterizing the Student's t distribution as a four parameter family is useful because it lends itself to the following definitions of the prediction, aleatoric uncertainty, and epistemic uncertainty:
\begin{eqnarray*}
    \mathbb{E}[\mu] &=& \gamma \quad \text{(Prediction)}\\
    \mathbb{E}[\sigma^2] &=& \frac{\beta}{\alpha - 1} \quad \text{(Aleatoric Uncertainty)} \\
    \text{Var}[\mu] &=& \frac{\beta}{\nu(\alpha - 1)} \quad \text{(Epistemic Uncertainty)}.
\end{eqnarray*}
If a function $h$ estimates the aleatoric uncertainty using evidential regression, then the aleatoric bleed is the error between predicted uncertainty vectors $\hat{s}$ with elements $\hat{s_i} = \beta_i/(\alpha_i - 1)$ and a ground truth uncertainty $s$. We note that by \cref{thm:equivareg}, the aleatoric bleed has a known lower bound. Our choice of evidential regression as opposed to other uncertainty decompositions such as mutual information is motivated by the issues laid out in \cite{wimmer2023quantifying}. While evidential regression can struggle to perform uncertainty decomposition due to convergence issues \citep{bengs2022pitfalls, meinert2023unreasonable, jurgens2024epistemic}, we expect it to perform better in the cases without symmetry mismatch and with a known ground truth aleatoric uncertainty and ultimately provide a strong baseline performance.

We highlight that this bound is structural (from symmetry) rather than statistical. This is useful because even with correct equivariance, the aleatoric and epistemic separation is not identifiable without repeated-measure or instrument-noise structure. In this setting, the aleatoric uncertainty can be estimated from samples or modeled using domain knowledge, and any residual uncertainty is known to be epistemic. The only thing that can be detected is calibration failures on fibers of predicted variance, which we explored in \cref{sec:class} and \cref{sec:invariant_regression_bound}. Lastly, we notice that the aleatoric bleed can be easily computed in cases where the ground truth aleatoric uncertainty is identically the zero vector, which we will explore in the experiments that follow.

\subsection{Aleatoric Bleed Experiments} \label{sec:tax_ab}

We now present numerical experiments that highlight how overconstrained models experience increased aleatoric bleed. In particular, we show how incorrect invariance and equivariance can cause an increase in aleatoric bleed, but correct equivariance does not result in reduced bleed compared to unconstrained baselines. Incorrect equivariance serves as a structural source of epistemic error, as it forces the hypothesis class to misrepresent orbit structure. We consider aleatoric bleed for a synthetic vector field prediction task as well two real chemical property prediction tasks. We point out the need for supplemental qualitative analysis when the output space is vector valued.

Our experiments often use evidential regression to estimate the epistemic and aleatoric uncertainties. For convenience, we remind the reader of the key formalisms for evidential regression that were outlined in the background section (\cref{sec:der}) here. The goal of a neural network is to maximize the likelihood of seeing data points $y$ under the Student's t distribution: $St(y_i ; \gamma, \frac{\beta (1 + \nu)}{\alpha \nu}, 2\alpha )$. The prediction, aleatoric uncertainty, and epistemic uncertainty are then defined by $\gamma$, $\beta/(\alpha - 1)$, and $(\beta/\nu) \cdot (1/(\alpha - 1))$ respectively.

\begin{figure}[!htb]
    \centering
    \includegraphics[width=1\linewidth]{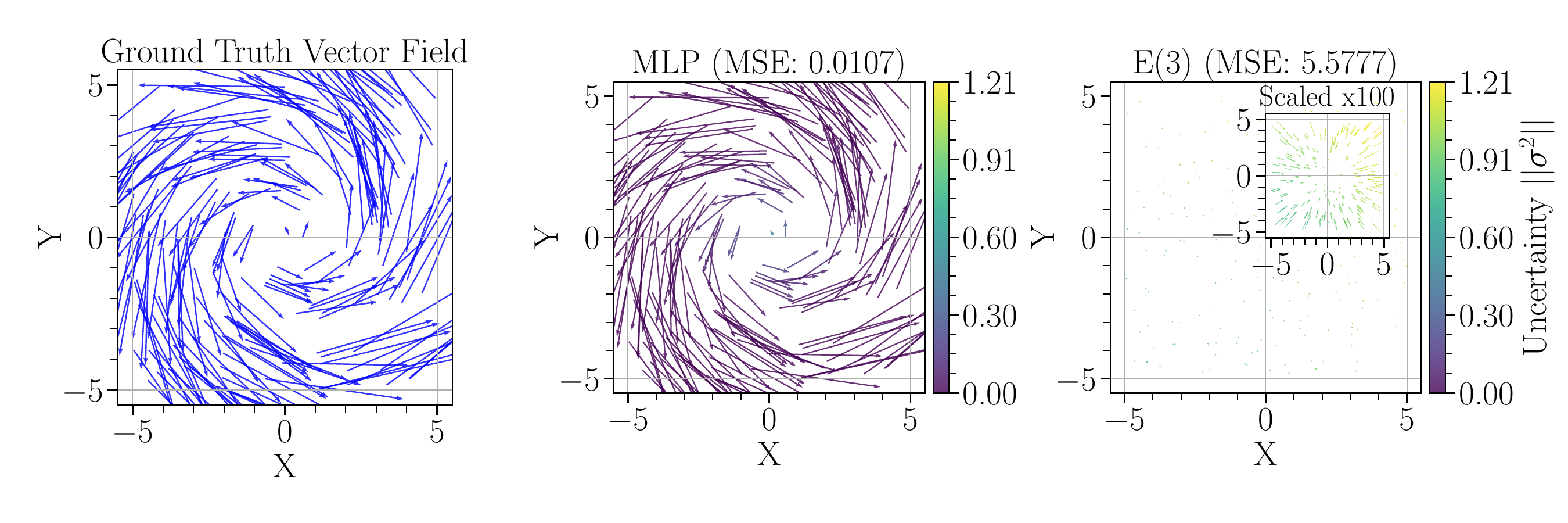}
    \includegraphics[width=1\linewidth]{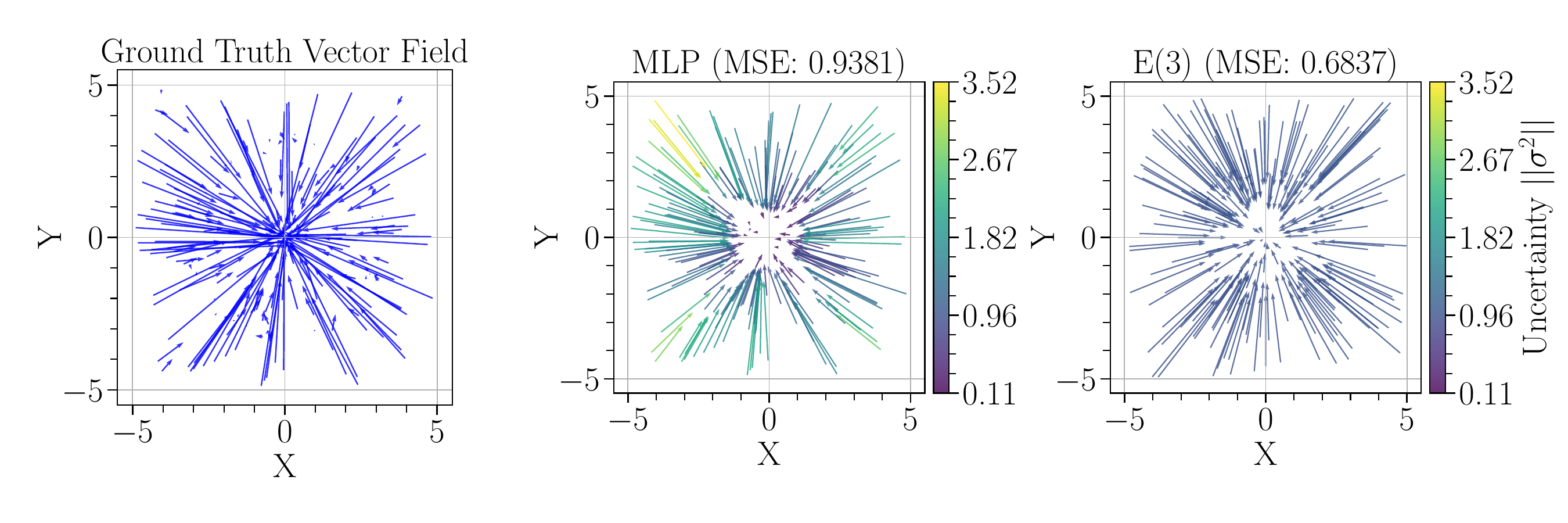}
    \caption{Vector regression results for the rotational and sinusoidal datasets (top and bottom respectively). For the model predictions in the middle and right columns, the color of the vector indicates the norm of the variance. 
    The mean prediction vectors of the $E(3)$-equivariant model on the rotational dataset have very small magnitude. The inset shows them scaled by a factor of $100$ for visibility. For the rotational dataset the equivariance is not correct and the E$(3)$ model struggles to predict the vector and compensates with high variance. For the sinusoidal dataset, the equivariance is correct, and the E$(3)$ model suffers from less aleatoric bleed. }
    \label{fig:vector_fields}
\end{figure}

\begin{figure}[!htb]
    \centering
    \includegraphics[width=1.0\linewidth]{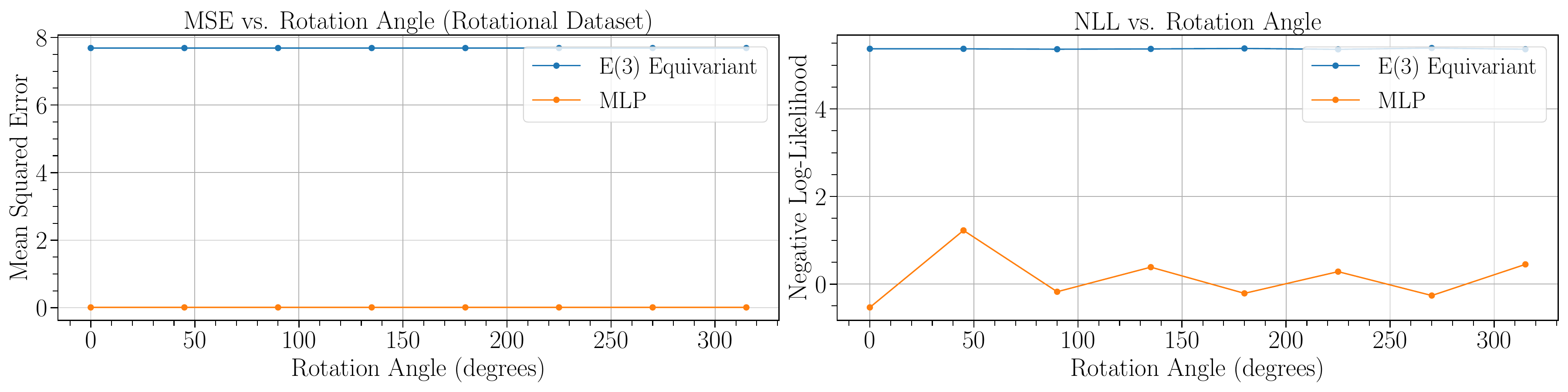}
    \includegraphics[width=1.0\linewidth]{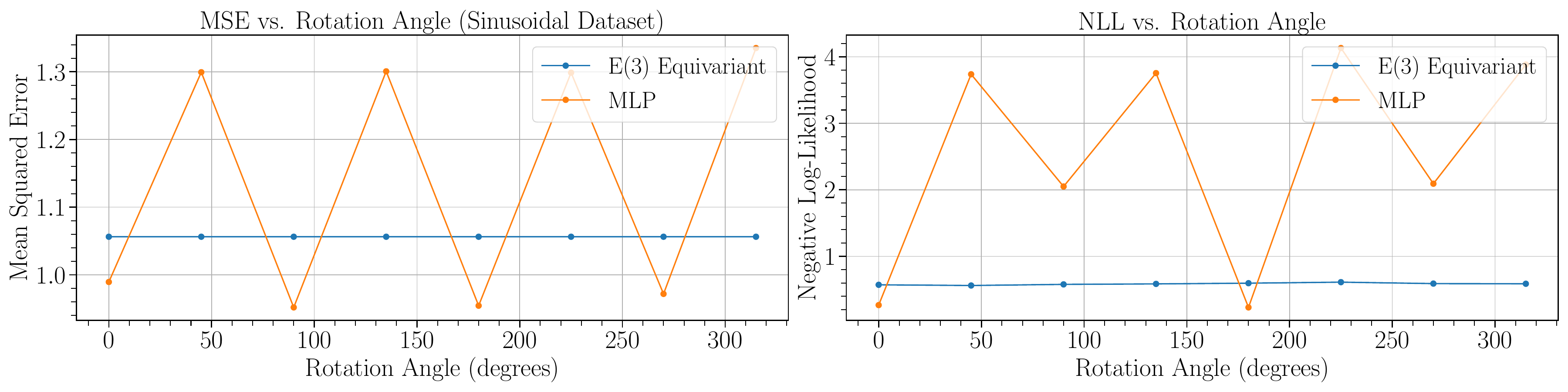}
    \caption{MSE and $\beta$-NLL losses for different rotation angle in the $xy-$plane for the rotational and sinusoidal datasets (top and bottom respectively). In the sinusoidal case, the MSE and NLL loses are constant as a function of the angle in a way that is helpful due to correct equivariance.}
    \label{fig:loss_angle}
\end{figure}

\subsubsection{Vector Field Regression.} This experiment demonstrates an intuitive example where incorrect and extrinsic equivariance contributes to aleatoric bleed. Consider a model $h : \mathbb{R}^3 \to \mathbb{R}^{3} \times \mathbb{R}^3$ that predicts two vector fields representing a mean and a variance prediction. That is, we predict two vectors attributed to any given point in $\mathbb{R}^3$ indicative of a prediction and an aleatoric uncertainty. We denote the ground truth vector field at a given point $x$ by $f(x)$, and $h$ is constrained to be $E(3)$-equivariant.  \cref{fig:vector_fields} presents two examples of how the equivariance taxonomy can result in different levels of aleatoric bleed. 

To examine how the equivariance taxonomy influences downstream aleatoric bleed, we consider two different ground truth functions $f$ designed to produce both correct and incorrect $E(3)$-equivariance:

\begin{enumerate}
    \item \textbf{\textit{Spiral.}} $f(x) = Qx$, with $Q$ a $90^\circ$ rotation matrix in $\mathbb{R}^3$.
    \item \textbf{\textit{Sinusoidal.}} $f(x) = - \sin ^2(||x||)x$, scaling each input by a sinusoidal radial factor.
\end{enumerate}

The spiral dataset contains pointwise incorrect and extrinsic $E(3)$-equivariance, since in general rotations in $\mathbb{R}^3$ do not commute. For the sinusoidal case, we note that rotations, translations, and reflections in $\mathbb{R}^3$ preserve the norm of a vector $x$, which is sufficent to ensure that our network has correct equivariance \citep{satorras2021n}. In both cases, $f$ is completely deterministic, meaning any nonzero variance vector is indicative of aleatoric bleed.

For simplicity and visualization purposes,  our dataset consists of vectors in $\mathbb{R}^3$ with a $z$ component of $0$, and we choose rotation matrices $Q$ that keep the vectors in the $xy$-plane. We provide relevant training details in \cref{sec:vectors}.

\paragraph{Results.} As expected, \cref{fig:vector_fields} shows that the incorrect and extrinsic equivariance makes the $E(3)-$equivariant model unable to fit the data appropriately with its mean predictions. Consequently, it predicts extremely high variance vectors, as our $\beta-$NLL loss function can reach a local minimum when the variance prediction is significantly larger than the mean squared error. As shown in \cref{fig:loss_angle}, despite the fact that the mean vector field fails to appropriately fit the data, the $\beta-$NLL loss is still fairly close to the MLP for vectors at any given angle. However, in the case of the correct equivariance with the sinusoidal dataset, the correctly applied $E(3)-$equivariance helps the model both in terms of MSE and $\beta-$NLL, accurately fitting the data and minimizing aleatoric bleed. One way to interpret the result is that, because the equivariance condition is misspecified, the learning process pushes residual epistemic uncertainty into the aleatoric uncertainty estimate.

\subsubsection{Chemical Properties and Aleatoric Bleed} \label{sec:chemica} The goal of this experiment is to assess whether a model's learned variance predictions are themselves reliable in a setting that is more realistic than the vector fields in \cref{fig:vector_fields}. That is, we ask if the model's confidence predictions are consistent with what the ground truth variance should be, and how equivariance can affect this. 

\paragraph{Scalar-Valued Predictions.} This scalar-valued property prediction tasks take as input chemical compounds sourced from QM9s \citep{ramakrishnan2014quantum}. We predict various chemical properties with two different message-passing graph neural networks, one non-equivariant baseline and one with $E(3)$-equivariance. Specifically, we employ the GIN model \citep{xu2018powerful} as a non-equivariant baseline and compare it with an $E(3)$-invariant model \citep{batzner20223}, using implementations based on \cite{TeachOpenCADD2023}. Both models are equipped with independent feed forward neural network decoder heads which are used to learn a four parameter family that characterizes a Student's t distribution. This in turn gives us enough degrees of freedom to reconcile epistemic and aleatoric uncertainties as described in \cref{sec:der}. Further experimental details are provided in \cref{sec:chem_prop}. Physically, the relationship between these scalar values and chemical compounds should be a  deterministic process, and accordingly the ground truth aleatoric uncertainty should always be zero. Since we are dealing with scalar values, aleatoric bleed reduces from a norm to a simple average over the square of predicted uncertainties. Note that the goal with this experiment is not to train the models to optimal performance; in fact, having models that cannot perfectly generalize is useful for us to study models with non-trivial uncertainties. We strive instead to compare models with similar accuracy but potentially varying levels of aleatoric bleed.

\paragraph{Scalar-Valued Results.} In contrast to \cref{fig:vector_fields}, which showed the negative impacts of incorrect and extrinsic equivariance on aleatoric bleed, we find that correct equivariance has little impact on aleatoric bleeding. As a specific case study, consider the dipole moment prediction task shown in \cref{fig:dipole}. We see that the aleatoric bleed is nearly identical between the GIN and $E(3)$-invariant models, with no significant deviations in how errors are distributed. This result mirrors what we found for classification: while correct equivariance provides limited gains in calibration, violations of equivariance can substantially degrade it.

\begin{figure}[!htb]
    \centering
    \includegraphics[width=0.7\linewidth]{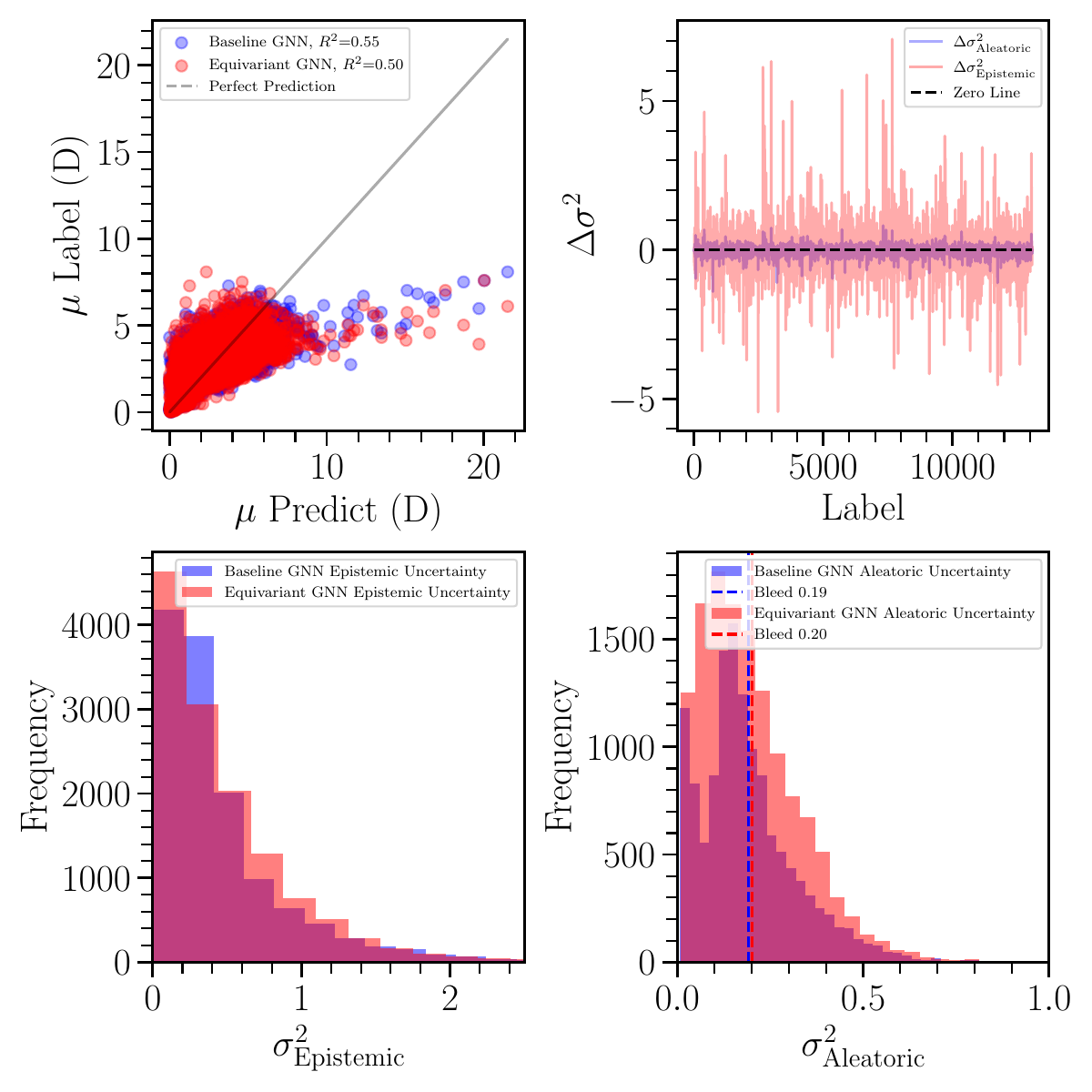}
    \caption{\textbf{\textit{Top Left:}} Prediction versus label for both GIN and $E(3)-$Invariant models. The two models perform similarly in terms of regression. \textbf{\textit{Top Right:}} The difference between aleatoric and epistemic uncertainties between the two models for each label. The model's uncertainty estimates tend to be consistent with one another. \textbf{\textit{Bottom Left:}} A distribution of the epistemic uncertainty predictions for the two models. \textbf{\textit{Bottom Right:}} A distribution of the aleatoric uncertainty predictions for the two models. We see that the distribution for epistemic uncertainty has a fatter tail than the aleatoric uncertainty distribution.}
    \label{fig:dipole}
\end{figure}

In \cref{tab:bleed}, we compare the aleatoric bleed between the baseline and equivariant models when their accuracy is comparable, which we quantify as having a mean absolute error (MAE) within $0.25$. The model with the lower aleatoric bleed seems to depend neither on the performance of the model nor the inclusion of equivariance. Our findings support the same conclusion that correct equivariance does little to help prevent aleatoric bleed. 

In \cref{sec:uncertainty_taxonomy}, we discussed how the aleatoric bleed has a known lower bound. The results from this experiment suggest that the lower bound is not tight enough to be meaningful to practitioners working on scalar-valued properties in the QM9 dataset. This is likely because the invariance constraint here is correct, so the lower bound on aleatoric bleed is zero. This is in contrast to the vector regression spiral experiment where incorrect and extrinsinc equivariance clearly caused an increase of aleatoric bleed.

\begin{table}[!htb]
    \centering
    \begin{tabular}{c|c|c|c|c|c}
        Chemical Property & Unit & GIN MAE & $E(3)-$Invariant MAE & GIN AB & $E(3)-$Invariant AB\\ \hline
       $\varepsilon_{LUMO}$ & eV  & $0.4404$ & $0.6710$ & $\mathbf{0.0081}$ & $3.0288$ \\
        $\Delta \varepsilon$ & eV & $0.6877$ & $0.7283$ & $\mathbf{0.0013}$ & $3.0287$ \\
       $U_0$ & eV  & $0.2563$ & $0.0563$ & $0.0333$ & $\mathbf{0.0053}$ \\
        $U$ & eV &  $0.2558$ & $0.0563$ & $0.0323$ & $\mathbf{0.0053}$ \\
        $U_0^{\text{ATOM}}$ & eV & $0.1908$ & $0.1458$ & $0.0193$ & $\mathbf{0.0052}$ \\
        $G^{\text{ATOM}}$ & eV & $0.7954$ & $0.7706$ & $\mathbf{0.0000}$ & $0.0014$ \\
        $A$ & GHz & $0.0667$ & $0.2504$ & $\mathbf{0.0000}$ & $0.0014$ \\
        $B$ & GHz & $0.1625$ & $0.0992$ & $0.0066$ & $\mathbf{0.0040}$ \\
        $C$ & GHz & $0.0780$ & $0.0493$ & $\mathbf{0.0008}$ & $0.0013$ \\
        $\langle R^2 \rangle$ & $(a_0)^2$ & $0.8621$ & $0.8956$ & $\mathbf{0.0005}$ & $0.0013$ \\
    \end{tabular}
    \caption{Accuracy and Aleatoric Bleed (AB) for various scalar properties in QM9S for baseline and equivariant graph neural network models. Predictions and error estimates are given for the $z-$scored scalar values.}
    \label{tab:bleed}
\end{table}

\paragraph{Vector-Valued Predictions.} This experiment highlights a need for qualitative analysis to work in tandem with our aleatoric bleed metric for high-dimensional outputs. In particular, aleatoric bleed fails to describe the individual coordinates in which the predicted variance vector suffers the most in terms of bleeding.

This experiment again uses QM9s, however, this time we instead predict spectral lines emitted from the chemical compounds using a network with steerable $E(3)-$vectors \citep{brandstetter2021geometric}. As before, we use independent feed forward neural network decoder heads in order to learn a four parameter family that characterizes a Student's t-distribution. The ground truth mapping from molecule to label should be deterministic and accordingly the aleatoric uncertainties should be zero. Any non-zero uncertainties are indicative of epistemic uncertainties bleeding into the aleatoric uncertainty prediction. As such, the aleatoric bleed becomes the mean squared norm of the predicted aleatoric variance vectors.

\paragraph{Vector-Valued Results.}

As seen in \cref{fig:spectra_example}, the model tends to conflate high frequency signals with noise. The model's aleatoric uncertainty follows the epistemic uncertainty quite closely, indicating that the model can not tell them apart. We compute an aleatoric bleed of $\approx 17.613$, however, this does not indicate \textit{where} or \textit{which dimension} contributes to most of the bleed. We conclude that the aleatoric bleed is only able to tell a practitioner that the model is confusing different sources of uncertainty. Evidently, the metric can not tell a user \textit{where} the model tends to poorly estimate the uncertainty estimate without additional diagnostics. We suggest practitioners use aleatoric bleed in conjuction with visual aids like \cref{fig:spectra_example} to fully assess the quality of a model's uncertainty estimates. 

Additionally, while not our main contribution, we point out that our model's raw performance is comparable to the state-of-the-art DetaNet \citep{zou2023deep}, with both models achieving $R^2$ scores above $0.9$ and often close to $1.0$ on QM9s test molecules. Our steerable $E(3)$ vector-based model also provides initial steps towards quantifying its own uncertainty. We further discuss the merits and limitations of our approach compared to DetaNet in \cref{sec:chem_prop}. We leave further model development and evaluation as an oppurtunity for future work.

\begin{figure}
    \centering
    \includegraphics[width=0.75\linewidth]{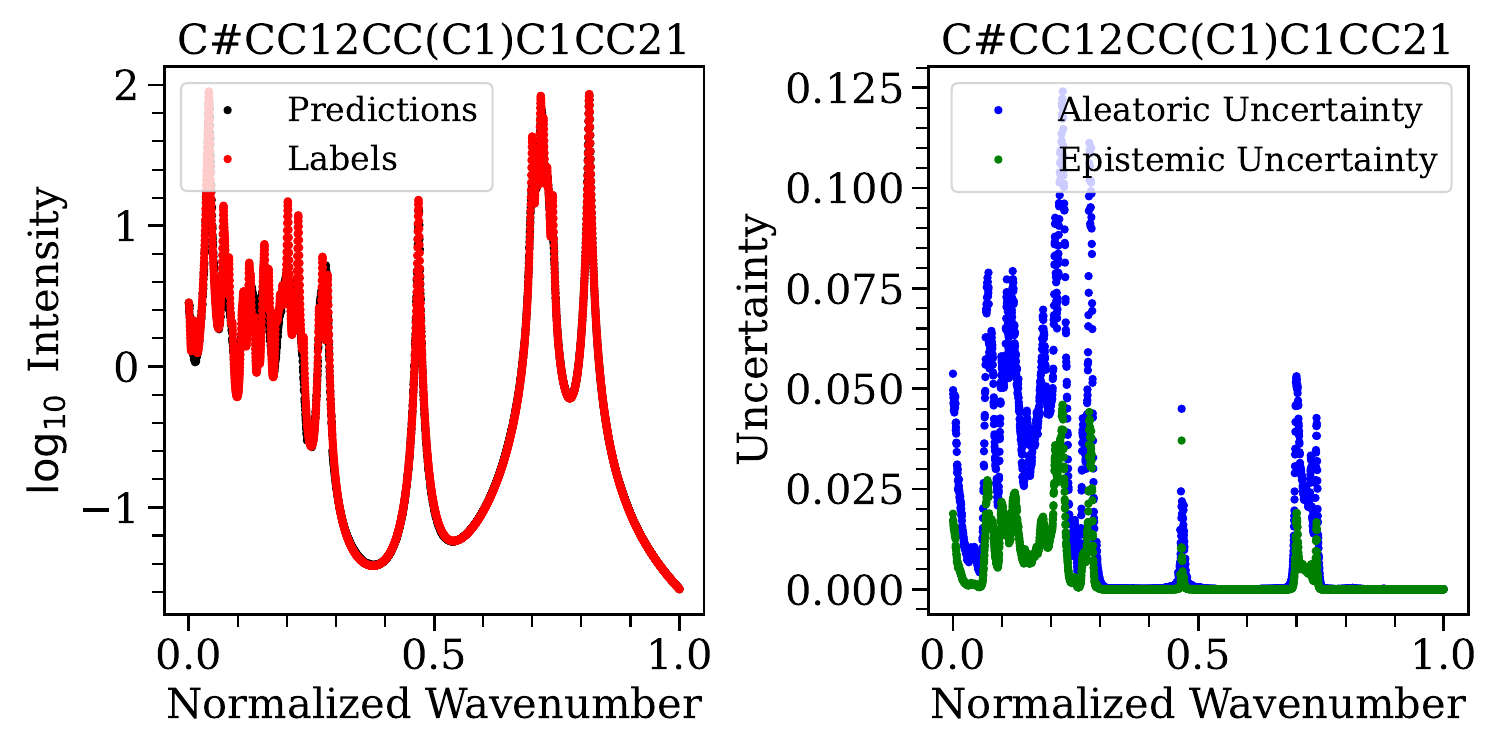}
    \caption{\textbf{\textit{Left:}} Sample prediction vs ground truth spectra for the molecule given by SMILES \citep{weininger1988smiles} string $C\#CC12CC(C1)C1CC21$. \textbf{\textit{Right:}} The model's predicted aleatoric and epistemic uncertainties for each of the normalized wavenumbers.  }
    \label{fig:spectra_example}
\end{figure}

\section{Limitations}

In this section, we outline some known limitations of this study which coincide with directions for future work. The theorems in this work are predicated on the assumption that orbits and fundamental domains are differentiable manifolds (\cref{assumption:assume}), which may not always be true in practice \citep{dym2024equivariant}. Another limitation is that we need to assume a strong hypothesis, Lipschitz continuity, to express the bounds in terms of a density on the input domain $X$. This is a limitation because a density on $X$ is what a user is more likely to have access to. Moreover, the practical usefulness of the bounds is dependent on Lipshitz constants $K$, which may be very high for typical neural network architectures. Our study also focused mostly on correct and incorrect equivariance, but further experimental characterization of the effect of extrinsic equivariance and comparisons to covariate shift in general are not fully addressed in this work. On the experimental side, a noteworthy limitation is that ECE and ENCE are difficult to compute in practice due to discrete binning approximations \citep{pernot2023properties}, as the lack of an unbiased estimator adds uncertainty as to how reliable computed ECE and ENCE scores are. This is especially significant in the regime where the variance is very high-dimensional. In our classification experiments, we addressed this by increasing the granularity at which we bin, see \cref{sec:exp_swiss} and \cref{sec:gal_experiment}. ECE as a metric for model calibration is limited in other aspects as well. In particular, ECE only computes miscalibration with respect to a single label, but does not consider secondary or tertiary outputs that may be useful to practitioners as done in \cite{nixon2019measuring}. 

\section{Future Work}

Having laid the groundwork for a first theory for equivariance and uncertainty, there are several interesting avenues for future work. An unbiased estimator for ECE and ENCE that does not depend on binning approximations would be an extremely useful contribution. Having such would enable the ability to interpret model calibration on an absolute scale instead of only being able to compare relative performances that are each dependent on discretization biases.  

We also note that our proof strategies for bounding calibration error used symmetry constraints on the model class of equivariant functions, however, constraints beyond symmetry may also lend themselves to our approach. Concretely, any constraint that can be used to bound fiber-wise errors can be used to then bound calibration error. 

This work may be extended to broader experimental domains, for example, robotics and cosmology. In these areas, there is a clear intersection between uncertainty and equivariance. Uncertainty and equivariance have proved to be indispensable in robotics in particular due to cost of data collection and the safety-critical applications. In robotics, future work will examine calibration error for equivariant models for imitation learning. In particular, our approach is better suited for imitation learning than reinforcement learning.  While our work can be difficult to frame in typical Q-learning setups since the optimal solution to the Bellman update function is unique, a discussion of both equivariance and uncertainty quantification lends itself naturally to behavior cloning tasks \citep{florence2022implicit}.

Another closely related experimental domain is cosmological large-scale structure. In particular, future work may assess calibration error with symmetry-preserving models using the benchmark released in \cite{balla2024cosmic}. The appeal of this benchmark is that it includes graph-level predictions on $\Lambda$CDM \citep{ryden2016introduction, carroll2019spacetime} cosmological parameters $\Omega_m$ and $\sigma_8$. Moreover, the $\Omega_m$ and $\sigma_8$ parameters are relevant measurements detailing the matter density of our universe. In cosmology, these predictions are more commonly phrased as constraints on posterior distributions rather than point estimates \citep[e.g.,][]{dark2016dark, abbott2022dark}. Therefore, it is reasonable to apply our framework to this dataset and in particular assess if equivariance can help models distinguish between epistemic and aleatoric uncertainties. 

\section{Discussion and Conclusions}

Experiments in the natural sciences, especially in data-sparse settings, strongly benefit from both equivariance and uncertainty estimation, and yet, no general theory for explaining how equivariance relates to uncertainty exists in the literature. We fill this gap, presenting the first theory explaining how equivariance relates to uncertainty estimation. We prove both lower and upper bounds on model calibration error for invariant and equivariant model classes. We do this in both classification and regression settings. We confirm and validate the theory in a set of examples and experiments studying the relationship between equivariance and uncertainty.  Moreover, we show that the theoretical results provide intuition and generally match experimental results even when the hypothesis are not strictly satisfied. 

The core conclusions are best explained through the lens of model mispecification, and highlight how equivariant neural networks can fail to meet their calibration objectives on datasets that do not share the same symmetries. We highlight how when equivariant model assumptions are violated, i.e., cases of incorrect and extrinsic equivariance where the model is either over-constrained by symmetry or forced to treat out-of-distribution points as in-distribution, model calibration for both classification and regression tasks is provably worse. Our experiments support these conclusions as well. In the cases of the swiss roll and vector field regression experiments, both incorrect and extrinsic equivariance not only make the model less accurate, but also poorly calibrated. In the case of the vector field regression experiment we also saw that incorrect and extrinsic equivariance contributed to aleatoric bleed. By contrast, we have shown that a model with correct equivariance is not necessarily better calibrated than a similarly sized non-equivariant baseline. As illustrated by the galaxy morphology classification and scalar-valued chemical property prediction experiments, equivariance is not strong enough to help a model become well calibrated nor is it strong enough to prevent aleatoric bleeding. It was only for the highly synthetic vector regression experiment on the sinusoidal dataset that the introduction of correct equivariance was able to significantly improve the raw performance and prevent aleatoric bleeding. The vector regression result is especially interesting in light of the galaxy morphology classification experiment, which showed that correct equivariance can help a model perform better without necessarily making it better calibrated.

\section{Reproducability Statement}

Our codebase containing all of our experiments, as well as the instructions to reproduce our results, is publicly available at \url{github.com/EdwardBerman/EquiUQ}. 

We provide further experimental details and sources for the datasets used in this work throughout Appendices \ref{sec:vectors}, \ref{sec:exp_swiss}, \ref{sec:gal_experiment}, and \ref{sec:chem_prop}. We'd like to highlight the work of \cite{wang2024general} and \cite{sneh_mlst}; the artifacts associated with these two works were simple to reproduce and aided us in our study.

\section{Ethics Statement} Authors have no conflicts of interest to disclose. 

\subsubsection*{Acknowledgments}

The authors thank \href{https://shubhendu-trivedi.org/}{\texttt{Shubhendu Trivedi}} for thoughtful discourse and detailed review on an initial draft of our manuscript. E.B. and J.G. acknowledge the feedback and overwhelming support from students in Northeastern University's Mathematics Research Capstone course. E.B. additionally thanks the Biomarkers team at AstroAI \citep{Garraffo2024AstroAI}. Finally, E.B., J.G., and M.P. thank the members of the \href{https://www.robinwalters.com/}{\texttt{Geometric Learning Lab}} for helpful discussions and welcoming support.

R.W. would like to acknowledge support from NSF Grants $2442658$ and $2134178$. This work is supported by the National Science Foundation under Cooperative Agreement PHY-$2019786$ (The NSF AI Institute for Artificial Intelligence and Fundamental Interactions, \url{http://iaifi.org/}).

\newpage

\bibliography{main}
\bibliographystyle{tmlr}

\newpage

\appendix

\section{Iterated Integration} \label{sec:iterated_integral}

If we assume that $\cup_{g_1 \neq g_2} (g_1F \cap g_2F)$ has measure $0$ and $F$ and $Gx$ are differentiable manifolds, then we may lift an integral $Gx$ to itself. Denote the identification of the orbit $Gx$ and coset space $G/G_x$ with respect to the stabilizer $G_x = \{g\colon gx = x\}$ by $a_x\colon G/G_x \to Gx$. Then we have
\begin{equation*}
    \int_{Gx}f(z) dz = \int_G f(gx)\alpha(g,x)dg
\end{equation*}
where 
\begin{equation*}
    \alpha(g,x) = \left(\int_{Gx}dh\right)^{-1} \biggm|\frac{\partial a_x(\bar{g})}{\partial \bar{g}}\biggm|.
\end{equation*}
\section{Vector Regression Setup} \label{sec:vectors}

Our training of the $E(3)-$equivariant neural network uses e3nn\_jax \citep{enn, ennpaper, kondor2018clebschgordannetsfullyfourier, weiler20183dsteerablecnnslearning, thomas2018tensorfieldnetworksrotation}. The MLP baseline is built entirely with Flax \citep{flax2020github}. The models are trained for a minimum of $10$ epochs for a maximum number of $100$, with early stopping if the validation loss stops improving after $5$ epochs. We train on $2000$ generated samples. We train using both a $\beta-$NLL loss \citep{seitzer2022pitfalls} and an MSE loss, equally weighted, with $\beta = 1$. The $\beta-$NLL loss is given by
\begin{equation*}
    \mathcal{L}_{\beta-NLL}= \mathbb{E}_{X,Y} \left[\lfloor \hat{\sigma}^{2\beta}\rfloor\left(\frac{1}{2} \log \hat{\sigma}^2 + \frac{\left(Y  - \hat{\mu}(X)\right)^2}{2\hat{\sigma}^2}\right) + C\right]
\end{equation*}
where $\lfloor \cdot \rfloor$ represents a stop-gradient. For consistency with the figures, the reported metrics are calculated on the $xy-$coordinates. The MSE and $\beta$-NLL scores are average over all vectors and $xy-$coordinates.

\section{Swiss Roll Experiment Details} \label{sec:exp_swiss}

The Swiss Roll distributions are created by generating points in polar coordinates using some $r$ as a function of $\theta$. Additionally, the points are given a $z-$coordinate of $0$ or $1$. An example of a spiral distribution with extrinsic equivariance seen from a $z-$invariant point-of-view is given in \cref{fig:spiral}. See also Figures 7 and 11 in \cite{wang2024general}. The correct and incorrect Swiss Roll Distributions are similar. For correct equivariance, the color labels are the same for each spiral at $z=0$ and $z=1$. For incorrect, the labels are the opposite. For extrinsic, the spirals do not overlap. For details further, see \cite{wang2024general}.

\begin{figure}[!htb]
    \centering
    \includegraphics[width=0.6\linewidth]{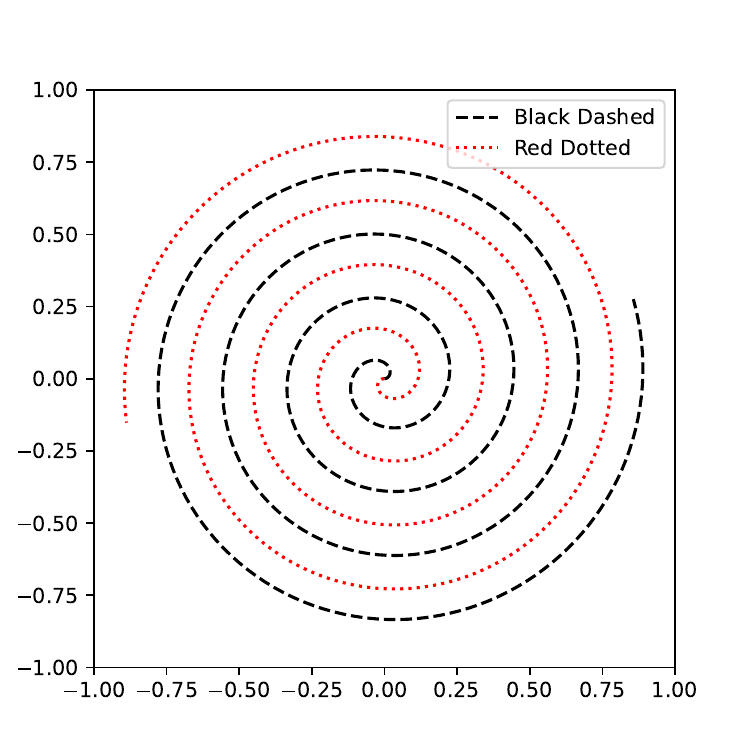}
    \caption{The extrinsic Swiss Roll Distribution seen from a $z-$invariant point of view.}
    \label{fig:spiral}
\end{figure}

\paragraph{Binning Approximations.} We compute ECE using the following binning approximations:
\begin{eqnarray}
    \text{acc}(B_m) &=& \frac{1}{|B_m|}\sum_{i \in B_m}\mathds{1}(f = h_Y) \label{eq:acc}\\ 
    \text{conf}(B_m) &=& \frac{1}{|B_m|} \sum_{i \in B_m} h_P\\
    \text{ECE} &=& \sum_{m=1}^M\frac{|B_m|}{n} \biggm| \text{acc}(B_m) - \text{conf}(B_m) \biggm|. \label{eq:approx}
\end{eqnarray}
We use $100$ bins. We adapt models, data generation, and training materials from \cite{wang2024general} and \url{https://github.com/pointW/ext_theory/}. The $z-$invariant network is implemented using DSS layers \citep{maron2020learning}. 

\paragraph{Sample Calibration Approximation Error.} \label{sec:corollary_bins}

\cref{prop:naive} tells us that ECE is bounded on a closed interval (regardless of any assumption of invariance). This allows us to say something about how many samples we need to approximate the true ECE using Hoeffding's Inequality \citep{hoeffding1963probability, hoeffding1994probability}. 

\begin{proposition}
Define the calibration CE as the term inside the integrand of  \cref{eq:ECE}, $\text{CE} = |\Acc - p|$. For an i.i.d. set of $n$ samples $\{(x_1, h(x_1)), \hdots, (x_n, h(x_n))\}$, we index a given pair by $Z_i$. We have that
\begin{equation*}
    \mathbb{P}\left[\biggm| \frac{1}{n}\sum_{i=1}^n CE(Z_i) - ECE(Z) \biggm| > \varepsilon \right] \leq 2 \exp (-2n \varepsilon^2)
\end{equation*}
for all $\varepsilon > 0$.
\end{proposition}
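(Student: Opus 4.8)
The plan is to recognize the sample average $\tfrac1n\sum_i CE(Z_i)$ as the empirical mean of i.i.d.\ bounded random variables whose common expectation is exactly $\ECE$, and then to invoke Hoeffding's inequality. The work is almost entirely in verifying the two hypotheses Hoeffding needs (boundedness and i.i.d.\ structure) and in identifying the population mean correctly; the inequality itself is then plug-and-play.

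First I would observe that $CE(Z_i) = |\Acc - p|$ evaluated at $p = h_P(x_i)$ is a deterministic function of $x_i$ alone: the quantity $\Acc = \mathbb{P}(f = h_Y \mid h_P = p)$ depends only on the confidence value $p$, so once $x_i$ (hence $p_i = h_P(x_i)$) is fixed, $CE(Z_i)$ is determined. Since the samples $x_i$ are drawn i.i.d.\ from $q$, the variables $CE(Z_1), \ldots, CE(Z_n)$ are i.i.d. Next I would establish boundedness and unbiasedness. Boundedness is immediate from \cref{prop:naive}: both $\Acc$ and $p$ lie in $[0,1]$, so $CE(Z_i) = |\Acc - p| \in [0,1]$, giving each summand a range of length $1$. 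For the mean, the push-forward identity defining $r$ (so that $h_P(x_i)$ has density $r$) together with the definition of ECE in \cref{eq:ECE} gives $\mathbb{E}[CE(Z_i)] = \mathbb{E}_{p \sim r}[|\Acc - p|] = \ECE$; hence the empirical mean is an unbiased estimator of $\ECE$.

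Finally, applying the two-sided Hoeffding bound to the i.i.d.\ sum, with each $CE(Z_i) \in [0,1]$ so that $\sum_{i=1}^n (b_i - a_i)^2 = n$, yields
\[
\mathbb{P}\Bigl[\bigl| \tfrac1n\textstyle\sum_{i} CE(Z_i) - \ECE \bigr| > \varepsilon\Bigr] \le 2\exp\!\Bigl(-\tfrac{2n^2\varepsilon^2}{n}\Bigr) = 2\exp(-2n\varepsilon^2),
\]
which is the claim. The only genuine subtlety — and the step I expect to need the most care — is the unbiasedness identity $\mathbb{E}[CE(Z_i)] = \ECE$, since $\Acc$ is a conditional accuracy defined on fibers of $h_P$ that are individually null events; this is exactly where \cref{ass:house} and the push-forward construction for $r$ are needed to guarantee both that $CE(Z_i)$ is a well-defined random variable and that its expectation coincides with the population ECE. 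Everything else is a direct bookkeeping application of Hoeffding with range $1$.
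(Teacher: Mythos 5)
Your proposal is correct and follows essentially the same route as the paper: the paper's proof is a one-line appeal to boundedness of the calibration error on $[0,1]$ plus Hoeffding's inequality, which is exactly your argument. The only difference is that you spell out the details the paper leaves implicit — the i.i.d.\ structure of $CE(Z_i)$, the range-$1$ bookkeeping, and the unbiasedness identity $\mathbb{E}[CE(Z_i)] = \ECE$ via the push-forward density $r$ under \cref{ass:house} — all of which are accurate and consistent with the paper's framework.
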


\begin{proof}
Since \cref{prop:naive} tells us that ECE is bounded on $[0,1]$, the result follows immediately from Hoeffding's Inequality.   
\end{proof}

\section{Galaxy Experiment Details} \label{sec:gal_experiment}

\subsection{Motivation and Implementation of PSF Blurring} \label{sec:PSF}

\paragraph{Motivation.} A point-spread function (PSF) is an impulse response of an optical system to light. PSFs occur all throughout medical and astronomical imaging. The science case we explore in this work is the distortion of galaxy images. With next generation imagers like JWST and large astronomical surveys like COSMOS-Web \citep{casey2023cosmos}, there are renewed efforts to characterize the effect of the PSF and its effects on downstream scientific analysis \citep{perrin2014updated, Birrer2021, jarvis2021dark, Michalewicz2023, liaudat2023rethinking, berman2024efficientpsfmodelingshoptjl, Berman2024, feng2025exoplanet, polzin2025spiketooldrizzlehst}. Understanding how the PSF harms a model's ability to identify a galaxy's morphology class can hint at the effect of the PSF on measured ellipticity moments \citep{hirata2003shear,mandelbaum2005systematic}, which is a crucial ingredient for maps of large scale structure \citep{ mccleary2015mass, mccleary2020dark, Scognamiglio2024Exploring}. 

\paragraph{Implementation.} The way we implement PSF blurring follows \cite{sneh_mlst}. Consider an image grid $I$ with values $(\zeta, \xi)$ and channels $c$. PSF blurring with a Gaussian kernel of width $\epsilon$ via
\begin{equation*}
    I_{\text{PSF}}(\zeta, \xi) = (I*G)(\zeta, \xi),
\end{equation*}
where 
\begin{equation*}
    G(\zeta, \xi) = \frac{1}{2\pi \epsilon^2}\exp \left(-\frac{\zeta^2 + \xi^2}{2\epsilon^2}\right).
\end{equation*}
We apply this convolution on each channel $c$.

\subsection{Training and Evaluating} \label{sec:exp_psf}

Our models and training scripts are adapted from \cite{sneh_mlst} and \url{https://github.com/deepskies/SIDDA}. The galaxy datasets are initially sourced from \url{https://zenodo.org/records/14583107}, and there is a script to produce the datasets with PSF blurring in our artifact. We compute ECE using the same approximations as in Equations \ref{eq:acc} - \ref{eq:approx}. We summarize the number of parameters for each model in \cref{tab:gcnn_params} below:

\begin{table}[!htb]
    \centering
    \begin{tabular}{c|ccccccc}
    Model   & CNN & $C_2$ & $C_4$ & $C_6$ & $C_8$ & $C_{10}$ & $C_{12}$ \\
    Parameters    & $1,188,486$ & $1,190,070$ & $1,197,750$ & $1,205,430$ & $1,213,110$ & $1,220,790$ & $1,228,470$ \\ \hline 
    Model & --  & $D_2$ & $D_4$ & $D_6 $ & $D_8$ & $D_{10}$ & $D_{12}$  \\
    Parameters & --  & $1,197,750$  & $1,213,110$ & $1,228,470$ & $1,243,830$ & $1,259,190$ & $1,274,550$   \\
    \end{tabular}
    \caption{Number of model parameters for Galaxy CNN and GCNN group order experiment.}
    \label{tab:gcnn_params}
\end{table}

For further guidance on how many hidden units are needed to approximate the ground truth as a function of group order, we direct the reader to Theorem 16 in \cite{lawrence2022barron}.

\section{Chemical Property Experiment Details} \label{sec:chem_prop}

Our experiment for the chemical properties used a modified version of \cite{TeachOpenCADD2023} for the data preprocessing and main training loop. While their analysis uses one feed forward network head for the prediction task, we use four independent feed forward heads that predict the quantities $m = (\gamma, \nu, \alpha, \beta)$. We train with a negative log likelihood loss function with an added regression loss regularizer,
\begin{eqnarray*}
\Omega &=& 2\beta (1 + \nu)\\
    \mathcal{L}_i^{\text{NLL}}(w) &=& \frac{1}{2}\log \left( \frac{\pi}{\nu}\right) - \alpha \log (\Omega) \\
    &+& \left(\alpha + \frac{1}{2}\right) 
    \log ( (y_i - \gamma)^2 \nu + \Omega)  + \log \left(\frac{\Gamma(\alpha)}{\Gamma(\alpha + \frac{1}{2})}\right)\\
    \mathcal{L}_i^{\text{R}}(w) &=& |y_i - \mathbb{E}[\mu_i]| \cdot \Phi \\
    &=& |y_i - \gamma| \cdot (2\nu + \alpha ) \\
    \mathcal{L}_i(w) &=& \mathcal{L}_i^{\text{NLL}}(w) + \lambda \mathcal{L}_i^{\text{R}}(w).
\end{eqnarray*}
The GIN model has $52,417$ paramaters and the $E(3)$-invariant model has $51,969$. Through ablation study, we found that training stability is sensitive to a choice of $\lambda$, which we choose to be either $\lambda = 0.1$ or $\lambda = 1$. This instability is consistent with \S S2.1.3 in \cite{amini2020deep}. Additionally, we found $z-$scoring the training, validation, and testing sets was necessary for ensuring stability during training for all molecular properties outside of the dipole moment.

Our model for emulating spectral lines is trained in the same way, partially taking inspiration from \cite{zou2023deep}. We note the following tradeoffs between our approach and DetaNet:

\begin{enumerate}
    \item Our adoption of the message-passing framework is more general than the attentional one used in their work \citep{bronstein2021geometric}.
    \item DetaNet has arbitrary resolution, relying on a sum of basis functions. 
    \item DetaNet is trained not to produce the spectral line directly, but to produce the dipole moment, polarizability, and the inter-atomic and atomic hessians, which in turn gives the spectral line. 
    \item DetaNet can be limited by its usage of the Quantum Harmonic Oscillator approximation in some cases.
\end{enumerate}   

We leave further compaison and model development as an opportunity for future work. Other potential baselines could include Equiformer \citep{liao2022equiformer}, EquiformerV2 \citep{liao2023equiformerv2}, Graphormer \citep{shi2022benchmarking}, or Graphormer with data augmentation. The model we use in this work has $36,997,125$ parameters.

\section{Galaxy Experiment Additional Results} \label{sec:gal_plots}

\begin{figure}[!htb]
    \centering
    \includegraphics[width=0.75\linewidth]{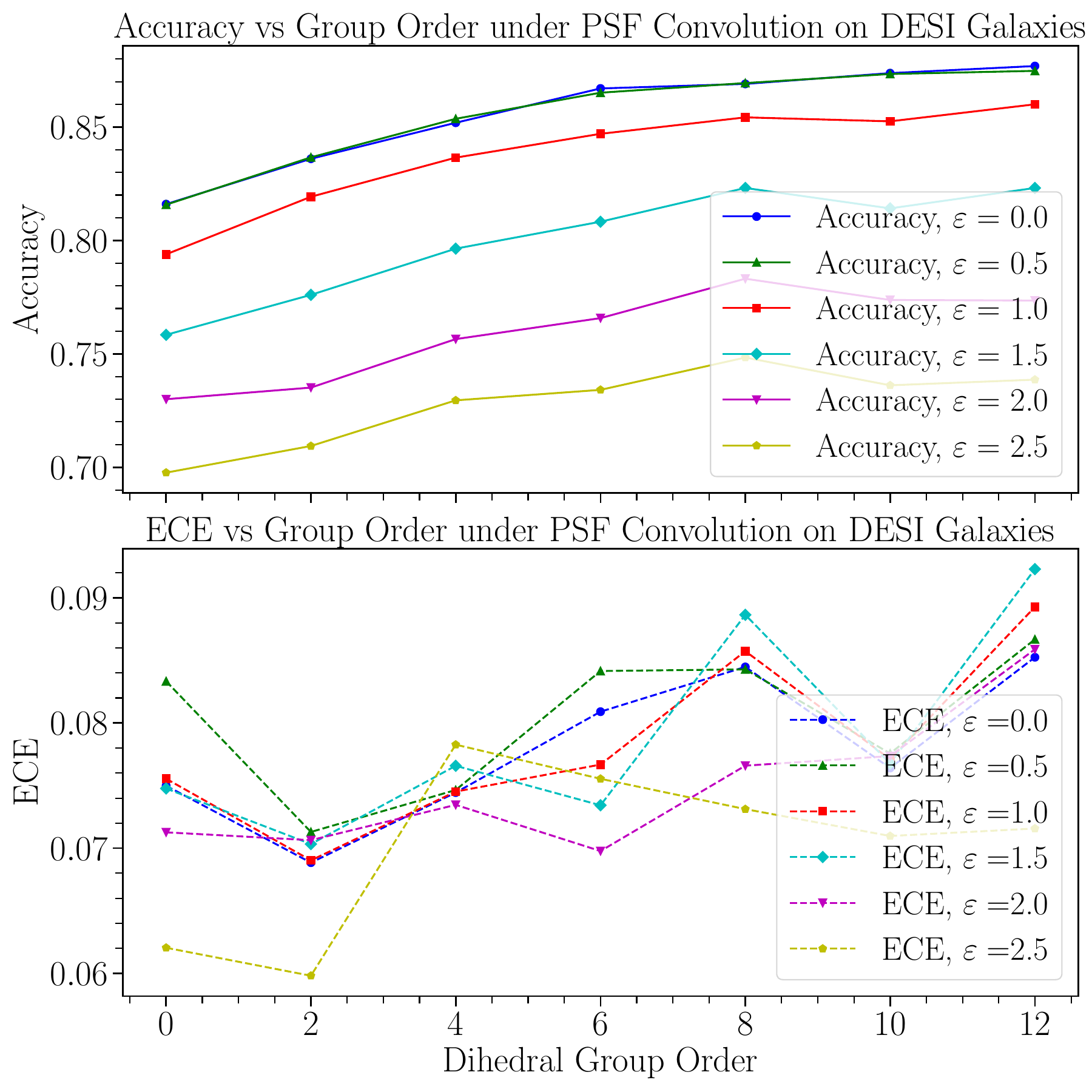}
    \caption{Accuracy and ECE vs Dihedral Group Order under PSF Convolution on DESI Galaxies.}
    \label{fig:desidihedral}
\end{figure}

\begin{figure}[!htb]
    \centering
    \includegraphics[width=0.75\linewidth]{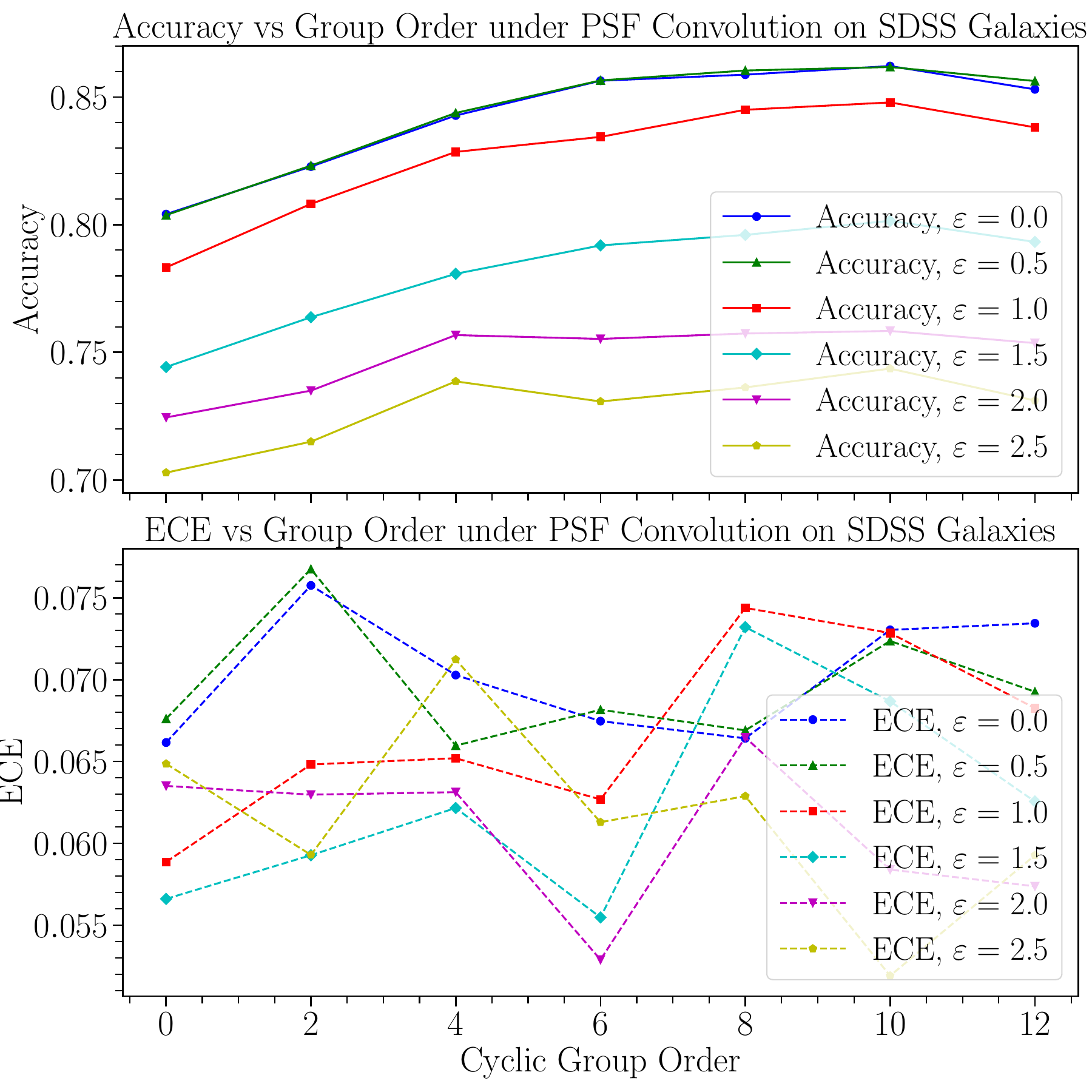}
    \caption{Accuracy and ECE vs Cyclic Group Order under PSF Convolution on SDSS Galaxies.}
    \label{fig:SDSScyclic}
\end{figure}

\begin{figure}[!htb]
    \centering
    \includegraphics[width=0.75\linewidth]{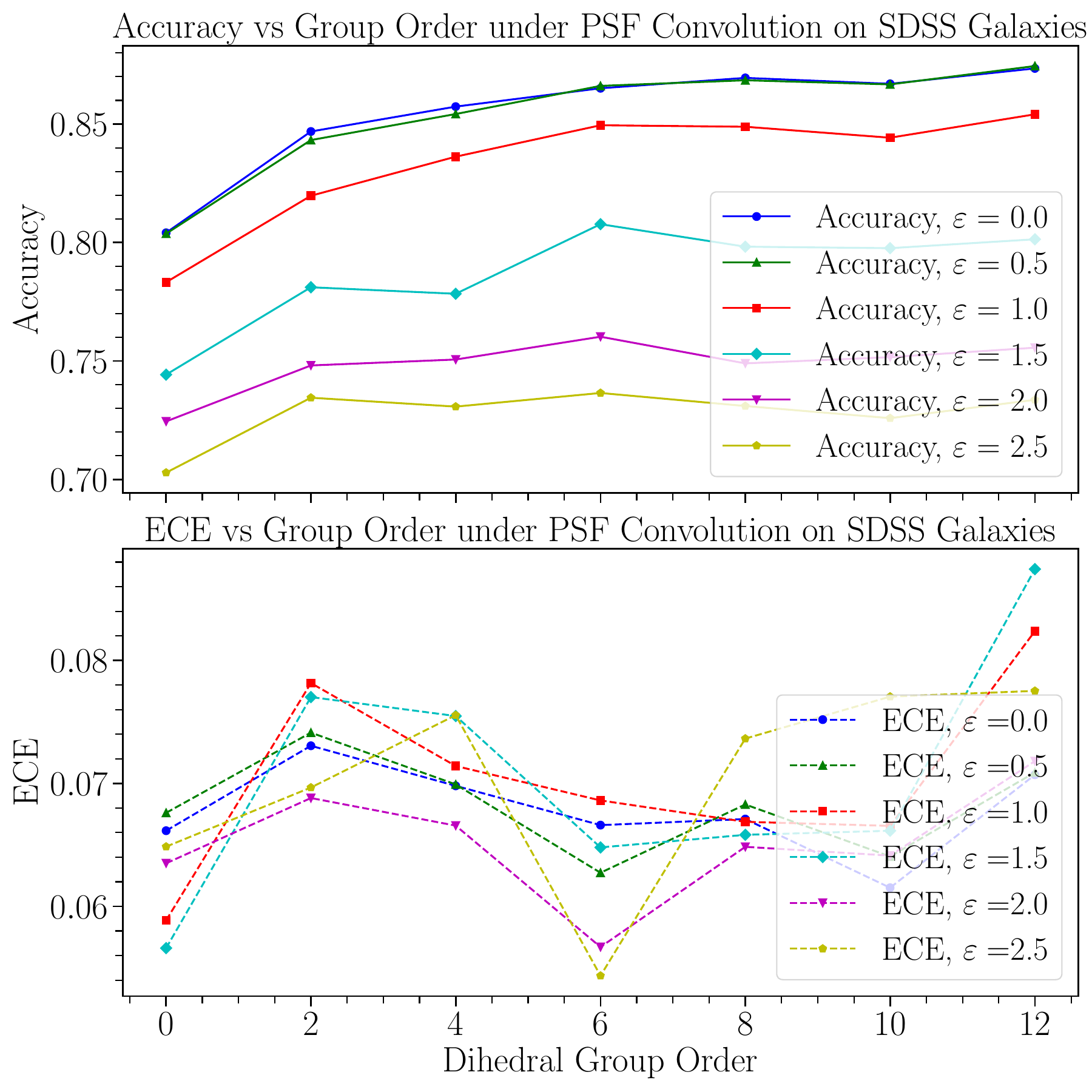}
    \caption{Accuracy and ECE vs Dihedral Group Order under PSF Convolution on SDSS Galaxies.}
    \label{fig:sdssdihedral}
\end{figure}

\clearpage

\end{document}